\numberwithin{equation}{section}
\numberwithin{theorem}{section}
\crefname{section}{Section}{Sections}
\crefname{subsection}{Section}{Sections}
\Crefname{section}{Section}{Sections}
\Crefname{subsection}{Section}{Sections}
\Crefname{figure}{Figure}{Figures}
\let\old@float@makebox\float@makebox
\renewcommand{\float@makebox}[1]{%
  \color@vbox\normalcolor
    \old@float@makebox{#1}%
  \color@endbox}
\newcommandx{\unsure}[2][1=]{\todo[linecolor=red,backgroundcolor=red!25,bordercolor=red,#1]{#2}}
\newcommandx{\change}[2][1=]{\todo[linecolor=blue,backgroundcolor=blue!25,bordercolor=blue,#1]{#2}}
\newcommandx{\info}[2][1=]{\todo[linecolor=OliveGreen,backgroundcolor=OliveGreen!25,bordercolor=OliveGreen,#1]{#2}}
\newcommandx{\improvement}[2][1=]{\todo[linecolor=Plum,backgroundcolor=Plum!25,bordercolor=Plum,#1]{#2}}
\newcommand{\email}[1]{\protect\href{mailto:#1}{#1}}
\begin{document}

\title{Personalized Federated Learning with Multiple Known Clusters}

\author{Boxiang Lyu \thanks{Booth School of Business, The University of Chicago, Chicago, IL. (\email{blyu@chicagobooth.edu})}
\and Filip Hanzely \thanks{Toyota Technological Institute at Chicago, Chicago, IL. (\email{fhanzely@gmail.com})}
\and Mladen Kolar \thanks{Booth School of Business, The University of Chicago, Chicago, IL. (\email{Mladen.Kolar@chicagobooth.edu})}}
\date{First draft: April 25th, 2022}

\maketitle

\begin{abstract}
    We consider the problem of personalized federated learning when there are known cluster structures within users. An intuitive approach would be to regularize the parameters so that users in the same cluster share similar model weights. The distances between the clusters can then be regularized to reflect the similarity between different clusters of users. We develop an algorithm that allows each cluster to communicate independently and derive the convergence results. We study a hierarchical linear model to theoretically demonstrate that our approach outperforms agents learning independently and agents learning a single shared weight. Finally, we demonstrate the advantages of our approach using both simulated and real-world data.
\end{abstract}

\noindent {\bf Keywords:} personalized federated learning, multi-task learning, distributed optimization
 \section{Introduction}

Smart phones, voice assistants, and wearable devices are everywhere in our lives, constantly collecting data on our behavior and habits. Federated learning (FL) is a recently introduced framework developed to use this rich source of data, while minimizing the intrusion of clients' privacy. Although traditional machine learning methods often require the aggregation of client data in a central server, federated learning avoids such a requirement, allowing models to be trained with mostly local computation and occasional server-wide communication rounds \citep{mcmahan2017communication, yang2019federated, bonawitz2019towards, li2020federated, kairouz2019advances}.

Data sets used in FL tasks are heterogeneous in nature, as they are collected from clients who are heterogeneous in nature. Tailoring the learned model to each client through personalized FL has garnered a tremendous amount of interest in recent years \citep{hanzely2020lower, fallah2020personalized, deng2020adaptive, dinh2020personalized, mansour2020three}. A central theme is learning a single global model and a personalized model for each client simultaneously in the training process \citep{hanzely2020federated, dinh2020personalized, li2021ditto, fallah2020personalized}. While it is typical to assume that clients can be grouped into a single cluster in existing literature, in disciplines such as education, psychology, or economics, it is often assumed that clients can be grouped into multiple clusters using known information. Hierarchical linear models are frequently used to model data with such a cluster structure \cite{raudenbush1986hierarchical, raudenbush1988educational, bryk1987application, bryk1992hierarchical, hofmann1997overview, stephen2002hierarchical}. 

Motivated by the prevalence of hierarchical models in social sciences, we propose a hierarchical, multi-cluster approach to personalization in federated learning. In particular, we leverage the known hierarchical structure and simultaneously learn (1) a global model for all clients, (2) a cluster-specific model for each client cluster, and (3) a personalized model for each client. More specifically, we develop a loopless algorithm for fitting the hierarchical linear model and derive its convergence rates and optimal parameters. Our algorithm allows each client cluster to communicate only within the group and allows each cluster to determine when to aggregate local updates independently of other clusters. Such an independence could improve convergence when the client clusters align with the communication graph. Existing empirical research has shown that extra communication rounds within different geographical clusters can improve convergence rates \citep{huang2019patient,briggs2020federated} and the proposed algorithm can leverage this property. In the finite-sum setting, we further develop an accelerated, variance reduced, stochastic variant of the algorithm, which has been shown to be minimax optimal in terms of communication rounds and oracle calls in the single cluster setting \citep{hanzely2020lower}.

Some existing work studied federated learning under the assumption that clients can be grouped into different clusters without assuming that these clusters are known. Our work emphasizes the optimality of the proposed approach when clusters are given a priori. In particular, the prior literature mostly focuses on the convergence rates of the optimization procedure and identifies the conditions under which the underlying cluster structures can be recovered. A key problem remains unaddressed: even assuming the cluster structure is known, can we prove that these structures can improve upon simple baselines such as training using only local data or training a single model? More importantly, can we show the optimality of the clustered formulation even in the simple case where the cluster labels are given?

To answer these questions, we establish statistical properties for the estimator obtained in our framework. We show that under a simple problem setting, our approach recovers the best linear unbiased estimator of the clients' local weights, dominating both training a single model for all clients and training a unique model for each client independently. This result complements \cite{li2021ditto} that studied the single-group setting. We further show that the unbiased restriction cannot be weakened. Even in the single cluster setting, we can construct a counterexample where a James-Stein estimator outperforms the proposed method. Our result also complements \cite{chen2021theorem} by identifying a regime in which a personalized approach is at least as good as the recommended alternatives, noting that structural information can be used to develop personalization schemes that outperform the proposed alternatives.

Finally, we demonstrate the empirical effectiveness of our model on the DMEF Customer Lifetime Value data set \cite{blattberg2009customer} and compare it with a recently proposed method in targeted marketing \cite{bumbaca2020scalable}, developing a personalized marketing model using data from a leading non-profit organization in the United States~\cite{blattberg2009customer}. 

\subsection{Related work}

Our work is related to the literature on personalized federated learning, distributed multi-task learning, and Bayesian hierarchical models.

There has been a lot of focus on personalization in federated learning over the past few years. \cite{karimireddy2020scaffold} is one of the first works to discuss heterogeneity among clients. \cite{fallah2020personalized} adapted model-agnostic meta-learning (MAML) algorithms for personalization. \cite{deng2020adaptive} developed a method that interpolates between client-specific parameters and global parameters. \cite{mansour2020three} suggests three different approaches for personalized federated learning and provides learning theoretic guarantees. \cite{li2021ditto, hanzely2020federated, dinh2020personalized} studied a personalization approach where each client has a parameter whose distance to the average of the parameters is regularized either explicitly or implicitly and developed different optimization techniques. \cite{li2018federated} studied optimizing a similar regularized loss in a non-personalized setting, where a single model is trained for all clients. More specifically, \cite{li2021ditto} provided some theoretical justification on the robustness and fairness of the procedure in the single cluster regime, \cite{hanzely2021personalized} provided a unified analysis of different optimization techniques, and \cite{dinh2020personalized} studied the optimization problem assuming that individual clients can exactly evaluate a proximal operator. These approaches can be viewed as a special instance of distributed multi-task learning with graph regularization \cite{wang2018distributed}.

A different line of work on personalization assumes that client-specific parameters may be drawn from an unknown mixture distribution and simultaneously group clients and learn model parameters using a single algorithm \citep{mansour2020three, ghosh2020efficient, sattler2020clustered, smith2017federated, briggs2020federated, huang2019patient}. Similar approaches have been studied in multi-task learning  \citep{kumar2012learning, jacob2008clustered, zhang2017survey, zhang2012convex, zhou2011clustered,zhou2011malsar, bakker2003task}. While these approaches focus on the setting where the structure and parameters of the cluster are learned simultaneously, in many domains, the clusters are known a priori and given, for example, by known covariates such as age, gender, and geographical location \citep{raudenbush1986hierarchical, raudenbush1988educational, bryk1987application, bryk1992hierarchical, hofmann1997overview, stephen2002hierarchical, lee1996hierarchical, daniels1999hierarchical}. Such a structure can be used to develop hierarchical models with improved personalization without having to separately cluster the clients. The data example that we investigate in detail in~\cref{sec:experiments} comes from marketing, where hierarchical models have a long history \citep{naik2009hierarchical, bumbaca2017distributed, bumbaca2020scalable, hooley1999marketing, french2015hierarchical}. Given the ubiquity of mobile devices, combining these marketing models with federated learning could better help the industry implement state-of-the-art marketing research.


Our optimization procedure is related to the methods used to optimize the objectives commonly found in hierarchical federated learning \citep{abad2020hierarchical, wang2020local, liu2020client, briggs2020federated, wainakh2020enhancing}. However, while the existing literature focuses on finding a single model for all clients, our optimization procedure learns personalized models for each client and each cluster and also learns a joint global model. The design of the optimization algorithm is related to loopless procedures in distributed optimization \citep{zhao2021fedpage, li2021anita, qian2021error}. These procedures remove the inner loops, thereby simplifying the algorithm. In single-machine settings, such simplifications have been shown to outperform their loopy counterparts \citep{kovalev2020don}.

Concurrent to our work, \cite{marfoq2021federated} studies federated multi-task learning under a mixture of distributions, focusing on nonasymptotic convergence rates. Our work further shows the optimality of our approach in terms of generalization error in addition to convergence analysis. Additionally, instead of analyzing an expectation-maximization-inspired approach, our work uses a loopless gradient-based algorithm that has been shown to enjoy optimal communication complexity in the single cluster regime. Furthermore,~\cite{duan2022adaptive} discusses a general framework for multi-task learning and shows that the approach can be adapted to various concepts of task relatedness. Although the loss function discussed here is similar, we further develop a federated optimization procedure and characterize the bounds on the communication and computation complexity of the federated learning algorithm used to minimize the loss.
\subsection{Notation} 

For any vector $v \in \RR^d$, we use $\|v\|$ to denote its $\ell_2$ norm. 
For any finite set $A$, we use $|A|$ to denote its cardinality and $A[i]$ to denote the $i$-th element in $A$ according to some arbitrary order.
The $d$-dimensional identity matrix is denoted as $I_d \in \RR^{d \times d}$.
The Kronecker product between two conforming matrices $B, C$ is denoted as $B \otimes C$.

\section{Model Formulation}

Suppose that there are $n$ clients with their individual data sets for whom we would like to fit personalized models. Furthermore, suppose that these clients are divided into $k$ known clusters. Generally speaking, if two clients belong to the same cluster, then we expect their models to be more similar than if the clients belong to different clusters. Let $\cI_j$, $j = 1, \ldots, k$, be the set of clients belonging to the cluster $j$. We use $f_i(\cdot)$ to denote the loss function for client $i = 1, \ldots, n$. Throughout the paper, we assume that the loss function $f_i(\cdot)$ is strongly convex and smooth. In particular, we make the following assumption.
\begin{assumption}
    \label{assumption:smooth_and_convex}
    The loss function $f_i$ is $\mu$-strongly convex and $L$-smooth.
\end{assumption} 
We use ``local'', ``cluster'', and ``global`` to denote variables, functions, and values associated with individual clients, different clusters, and the entire network, respectively.

Let $\theta_i \in \RR^d$ denote the model parameter for client $i$. We focus on minimizing the following objective function:
\begin{equation}
    \label{eqn:loss}
    \min_{\{\theta_i\}_{i = 1}^n} F(\{\theta_i\}_{i = 1}^n) \coloneq \sum_{j = 1}^k\sum_{i \in \cI_j}\left( f_i(\theta_i) + \frac{(1 - \alpha_j)\gamma_i}{2}\nbr{\theta_i - \Bar{\theta}_j}^2 + \frac{\alpha_j\gamma_i}{2}\nbr{\theta_i - \Bar{\theta}}^2\right),
\end{equation}
where $\alpha_j \in \RR_{\geq 0}$, $j=1,\ldots,k$, are the tuning parameters that control the regularization strength in each cluster, $\gamma_i \in \RR_{\geq 0}$, $i=1,\ldots,n$, are the tuning parameters specific to each client, and $\{\Bar{\theta}_j\}_{j = 1}^n, \Bar{\theta}$ denote the weight averages of the parameters in the cluster $j$ and the entire network, respectively. That is, 
\begin{equation}
\label{eqn:theta_bar_defn}
\Bar{\theta}_j = \frac{\sum_{i \in \cI_j}\gamma_i \theta_i}{\sum_{i \in \cI_j}\gamma_i}, \quad j = 1, \ldots, k,
\quad \text{and} \quad
\Bar{\theta} = \frac{\sum_{j = 1}^k \alpha_j \Bar{\theta}_j}{\sum_{j = 1}^k \alpha_j} = \frac{\sum_{j = 1}^k \sum_{i \in \cI_j} \alpha_j\gamma_i \theta_i}{\sum_{j = 1}^k \sum_{i \in \cI_j}\alpha_j\gamma_i}. 
\end{equation}

The objective is comprised of three terms: the client-specific loss and two regularization terms, summed over all clients. The first regularizer penalizes the distance between the local parameter $\theta_i$ and the cluster averages $\Bar{\theta}_j$, while the second regularizer penalizes the distance between the local parameter and the global average $\Bar{\theta}$. By changing the cluster-specific parameter $\alpha_j$ from $0$ to $1$ we can interpolate between the two regimes: when $\alpha_j = 0$ for all $j$, we train $k$ personalized models---one for each cluster, independently of other clusters; when $\alpha_j = 1$ for all $j$,
we train the single-cluster model studied in \cite{hanzely2020federated, dinh2020personalized, li2021ditto}.

The objective in \cref{eqn:loss} is different from commonly used objective functions in multi-task learning \citep{wang2018distributed, zhou2011clustered, zhou2011malsar, jacob2008clustered}. When the cluster structure is known, the multi-task learning objective can be written as
\begin{equation}
    \label{eqn:mtl_loss}
    \begin{split}
        \min_{\{\theta_i\}_{i = 1}^n, \{w_j\}_{j = 1}^k, \Bar{w}} & F_{MTL}(\{\theta_i\}_{i = 1}^n, \{w_j\}_{j = 1}^k, \Bar{w}) \\
        &\quad\coloneq \sum_{j = 1}^k \left(\sum_{i \in \cI_j} \left(f_i(\theta_i) + \frac{\gamma_i}{2}\|\theta_i - w_j\|^2\right) + \frac{\lambda_j}{2}\|w_j - \Bar{w}\|^2\right),
    \end{split}
\end{equation}
where the penalty parameter $\gamma_i$ regularizes the distance between the local parameter and the cluster parameter $w_j$, while the penalty parameter $\lambda_j$, $j = 1, \ldots, k$, regularizes  the distance between the cluster parameter and the global parameter $\Bar{w}$. Furthermore, in \eqref{eqn:mtl_loss} we optimize both the local weights $\{\theta_i\}_{i = 1}^n$ and the average local parameters, $\{w_j\}_{j = 1}^k$ and $\Bar{w}$. The two forms, however, have same stationary points.

\begin{proposition}
    \label{prop:equiv_mtl}
    Suppose that~\cref{assumption:smooth_and_convex} holds for loss functions $\{f_i\}_{i = 1}^n$. Let $\{\lambda_j\}_{j = 1}^k, \{\gamma_i\}_{i = 1}^n$ be any set of tuning parameters for \cref{eqn:mtl_loss}. Let $\{\hat{w}_j\}_{j = 1}^k$, $\hat{\Bar{w}}$, and $\{\hat{\theta}_i'\}_{i = 1}^n$ be the minimizers of \eqref{eqn:mtl_loss}. Fix
    \begin{equation*}
        \alpha_j = \frac{\lambda_j}{\lambda_j + \sum_{i \in \cI_j} \gamma_i}, \quad j = 1, \ldots, k,
    \end{equation*}
    and let $\{\hat{\theta}_i\}_{i = 1}^n$ denote the unique minimizer of \cref{eqn:loss}. We then have $\hat{\theta}_i = \hat{\theta}_i'$ for all $i \in [n]$.
\end{proposition}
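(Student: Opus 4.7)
The plan is to eliminate the auxiliary variables $\{w_j\}$ and $\bar{w}$ from $F_{MTL}$ by partial minimization, and to show that the resulting reduced objective coincides with $F$. Since $f_i$ is strongly convex by \cref{assumption:smooth_and_convex}, $F$ is strongly convex in $\{\theta_i\}$ and has a unique minimizer, so the equality of objectives immediately identifies the $\theta$-components of any minimizer of $F_{MTL}$ with $\{\hat\theta_i\}$.

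The key algebraic tool is the standard variance-decomposition identity
\begin{equation*}
\sum_{i \in \cI_j} \gamma_i \|\theta_i - w_j\|^2 \;=\; \sum_{i \in \cI_j} \gamma_i \|\theta_i - \bar{\theta}_j\|^2 \;+\; S_j \|w_j - \bar{\theta}_j\|^2,
\end{equation*}
where $S_j \coloneq \sum_{i \in \cI_j}\gamma_i$ and $\bar{\theta}_j$ is as defined in \cref{eqn:theta_bar_defn}; this holds because $\bar{\theta}_j$ is the $\gamma_i$-weighted mean of the $\theta_i$'s in cluster $j$. Substituting this into $F_{MTL}$ decouples the dependence of the cluster-regularization term on $\{\theta_i\}$ and on $w_j$, leaving the $w_j$-dependent contribution $\tfrac{S_j}{2}\|w_j - \bar{\theta}_j\|^2 + \tfrac{\lambda_j}{2}\|w_j - \bar{w}\|^2$.

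Minimizing this quadratic in $w_j$ for fixed $\theta$ and $\bar{w}$ yields the proximal-average form $w_j^{\star} = (1-\alpha_j)\bar{\theta}_j + \alpha_j \bar{w}$ with the stated $\alpha_j = \lambda_j/(\lambda_j + S_j)$, and the minimum value equals $\tfrac{\alpha_j S_j}{2}\|\bar{\theta}_j - \bar{w}\|^2$ via the identity $\frac{S_j \lambda_j}{S_j + \lambda_j} = \alpha_j S_j$. Minimizing the remaining $\sum_j \tfrac{\alpha_j S_j}{2}\|\bar{\theta}_j - \bar{w}\|^2$ over $\bar{w}$ gives $\bar{w}^{\star} = \frac{\sum_j \alpha_j S_j \bar{\theta}_j}{\sum_j \alpha_j S_j}$, which I will verify equals $\bar{\theta}$ from \cref{eqn:theta_bar_defn} by using $\alpha_j S_j = \sum_{i \in \cI_j}\alpha_j\gamma_i$.

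Having eliminated $w$ and $\bar{w}$, I am left with a function of $\{\theta_i\}$ only, namely $\sum_i f_i(\theta_i) + \sum_j \sum_{i\in\cI_j}\tfrac{\gamma_i}{2}\|\theta_i-\bar{\theta}_j\|^2 + \sum_j \tfrac{\alpha_j S_j}{2}\|\bar{\theta}_j-\bar{\theta}\|^2$. Applying the same variance-decomposition identity in the other direction to $F$ itself---writing $\sum_{i\in\cI_j}\gamma_i\|\theta_i-\bar{\theta}\|^2 = \sum_{i\in\cI_j}\gamma_i\|\theta_i-\bar{\theta}_j\|^2 + S_j\|\bar{\theta}_j-\bar{\theta}\|^2$---and then combining the $(1-\alpha_j)+\alpha_j = 1$ coefficients on the within-cluster piece will recover precisely the same expression. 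Hence $F(\{\theta_i\}) = \min_{\{w_j\},\bar{w}} F_{MTL}(\{\theta_i\},\{w_j\},\bar{w})$ for every $\{\theta_i\}$, and by strong convexity $\hat\theta_i = \hat\theta_i'$ for all $i$. The main (mild) obstacle is purely bookkeeping: keeping the two nested decompositions aligned and verifying the two expressions for $\bar{\theta}$ agree through the identity $\alpha_j S_j = \lambda_j(1-\alpha_j)$; there is no real analytic difficulty.
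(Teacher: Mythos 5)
Your proof is correct, but it takes a genuinely different route from the paper's. The paper argues through first-order conditions: it writes down the stationarity equations of \cref{eqn:mtl_loss} in $\{w_j\}$ and $\Bar{w}$, solves that linear system to get $w_j = \alpha_j\Bar{\theta} + (1-\alpha_j)\Bar{\theta}_j$ and $\Bar{w} = \Bar{\theta}$, and then substitutes into the stationarity condition for each $\theta_{i_0}$ to see that it coincides with $\nabla_{\theta_{i_0}}F = 0$ for \cref{eqn:loss}; convexity then identifies the minimizers. You instead eliminate $\{w_j\}$ and $\Bar{w}$ by partial minimization, using the weighted variance decomposition $\sum_{i\in\cI_j}\gamma_i\|\theta_i - w_j\|^2 = \sum_{i\in\cI_j}\gamma_i\|\theta_i-\Bar{\theta}_j\|^2 + S_j\|w_j-\Bar{\theta}_j\|^2$ twice, and show the pointwise identity $F(\{\theta_i\}) = \min_{\{w_j\},\Bar{w}} F_{MTL}(\{\theta_i\},\{w_j\},\Bar{w})$; all the intermediate algebra you cite ($w_j^\star=(1-\alpha_j)\Bar{\theta}_j+\alpha_j\Bar{w}$, the partial minimum $\tfrac{\alpha_j S_j}{2}\|\Bar{\theta}_j-\Bar{w}\|^2$ with $\tfrac{S_j\lambda_j}{S_j+\lambda_j}=\alpha_j S_j$, and $\Bar{w}^\star=\Bar{\theta}$) checks out. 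Your route proves something slightly stronger than the paper states — that \cref{eqn:loss} is exactly the marginal function of \cref{eqn:mtl_loss}, not merely that the two problems share stationary points — and it handles possible non-uniqueness of the auxiliary variables (e.g.\ when some $\lambda_j=0$) without extra care, since only the uniqueness of the minimizer of $F$, guaranteed by strong convexity of the $f_i$, is used to conclude $\hat{\theta}_i=\hat{\theta}_i'$. The paper's gradient computation is somewhat shorter and reuses quantities (the closed forms for $w_j$ and $\Bar{w}$) that appear again elsewhere, but both arguments are complete and valid.
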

\begin{proof}
See~\cref{subsec:proof_of_prop_equiv_mtl}.
\end{proof}
As we will demonstrate in the sequel, while~\cref{eqn:loss,eqn:mtl_loss} have the same stationary points, optimizing the former does not require us to keep track of cluster or network averages, allowing these parameters to be calculated on the fly. This feature of~\cref{eqn:loss} better suits the federated learning setting, removing the server's need to keep track of additional variables during the optimization process.

To further motivate the objective in~\cref{eqn:loss}, we show that the maximum likelihood estimate of all parameters in a hierarchical generalized linear model can be viewed as the minimizer of~\cref{eqn:loss} \citep{lee1996hierarchical, stephen2002hierarchical, bryk1992hierarchical}. Specifically, let
\begin{align*}
    \Bar{\theta}_j^* \sim \cN\rbr{\Bar{\theta}^*, \sigma_{\text{center}}^2I_d}, \quad j = 1, \ldots, k,
\end{align*} 
where $\cN(\cdot, \cdot)$ denotes a Gaussian distribution and
$\Bar{\theta}^* \in \RR^d$. Furthermore, assume that
for $j = 1, \ldots, k$ and $i \in \cI_j$, we have
\begin{align*}
    \theta_i^* & \sim \cN\rbr{\Bar{\theta}^*_j, \sigma_{j}^2 I_d}, \\
    y_i \mid \theta_i^*, X_i & \sim p_Y(y_i; u^{-1}(X_i^T\theta_i^*), \tau),
\end{align*} 
where $X_i \in \RR^{n_i \times d}$ is a matrix of observations, $y_i \in \RR^{n_i}$ is the response vector, $u$ is a known link function, $p_Y(\cdot; \cdot, \cdot)$ is the probability density function of an exponential family distribution. The negative log-likelihood for this model is
\begin{equation*}
    \begin{split}
        &\ell(\{\theta_i\}_{i = 1}^n, \{\Bar{\theta}_j\}_{j = 1}^k ,\Bar{\theta}; \{X_i, y_i\}_{i = 1}^n, \tau) =\\ &\quad \sum_{j = 1}^k \Biggl(\sum_{i \in \cI_j}\rbr{\log p_Y(y_i; u^{-1}(X^T_i\theta_i^*), \tau) + \frac{1}{2\sigma_{\text{cluster }j}^2}\nbr{\theta_i - \Bar{\theta}_j}^2}\\
        &\hspace{20em} + \frac{1}{2\sigma_{\text{center}}^2}\nbr{\Bar{\theta}_j - \Bar{\theta}}^2\Biggr),
    \end{split}
\end{equation*}
which is a special instance of~\cref{eqn:mtl_loss}. Then, by~\cref{prop:equiv_mtl}, minimizing~\cref{eqn:loss} with suitable tuning parameters corresponds to the maximum likelihood estimation.

\section{Algorithm and Convergence Analysis}

We develop a loopless SGD-style algorithm to minimize the objective in~\cref{eqn:loss}. In each iteration, the algorithm randomly decides between descending on the loss functions $f_i$'s, descending on the regularizer controlling the distance between the local weights and the centers of the local clusters, and descending on both regularizers. The first kind of descent step can be done in parallel on all machines, the same as the local computation step in local SGD~\citep{stich2018local}. The second kind requires clients within the same cluster to communicate with one another, but does not require between-cluster communication. Only when we simultaneously perform gradient descent on both the within-cluster and between-cluster regularizers do we communicate across clusters.

\subsection{Technical Preliminaries}

We start by introducing an additional notation. Recall that $i = 1, \ldots, n$ is used to index clients, 
$j = 1, \dots, k$ is used to index clusters, 
$\cI_j$ is the set of clients belonging to the cluster $j$, 
with cardinality $|\cI_j|$. Let $\Theta_j \in \RR^{|\cI_j|d}$ be the weight vector formed by stacking the weights of the clients in the cluster $j$ and $\Theta \in \RR^{nd}$ be the weight vector formed by stacking the weights of all clients; that is,
\begin{equation}
\label{eqn:btheta_defn}
\btheta_j = (\theta_{\cI_j[1]}, \ldots, \theta_{\cI_j[|\cI_j|]}) \in \RR^{|\cI_j|d}
\quad\text{and}\quad
\btheta = (\theta_1, \ldots, \theta_n) \in \RR^{nd}.
\end{equation} 
Without loss of generality, assume that clients are ordered according to the cluster to which they belong, that is,
$\cI_1 = \{1, \ldots, |\cI_1|\}$ and $\cI_k = \{|\cI_{k-1}| + 1, \ldots, |\cI_{k-1}| + |\cI_{k}|\}$. We may then write the cluster-specific and global regularizers as
\begin{align*}
&\psi_j\left(\btheta_j; \cbr{\gamma_i}_{i \in \cI_j}\right) = \frac{1}{2}\sum_{i \in \cI_j}\gamma_i \nbr{\theta_i - \Bar{\theta}_j}^2, 
\qquad j=1,\ldots,k,\\
&\varphi\rbr{\btheta; \cbr{\gamma_i}_{i = 1}^n, \cbr{\alpha}_{j = 1}^k} = \frac{1}{2}\sum_{j = 1}^k \alpha_j \sum_{i \in \cI_j} \gamma_i \nbr{\theta_i - \Bar{\theta}}^2.
\end{align*} 
The loss function in~\cref{eqn:loss} can now be rewritten as
\[
    \min_{\btheta} F(\btheta) = \sum_{j = 1}^k \sum_{i \in \cI_j}f_i(\theta_i) + \sum_{j = 1}^k (1 - \alpha_j)\psi_j\left(\btheta_j; \cbr{\gamma_i}_{i \in \cI_j}\right) + \varphi\rbr{\btheta; \cbr{\gamma_i}_{i = 1}^n, \cbr{\alpha}_{j = 1}^k}.
\]
We explicitly calculate the gradients of the cluster-specific regularizers and the global regularizer in the following proposition.

\begin{proposition} \label{prop:reg_grad}
    We have that
    \begin{align*}
        \nabla_{\theta_i}\psi_j\rbr{\btheta_j; \cbr{\gamma_i}_{i \in \cI_j}} = \gamma_i\rbr{\theta_i - \Bar{\theta}_j}, 
        \qquad i \in \cI_j, j = 1,\ldots, k; \\
        \nabla_{\theta_i}\varphi\rbr{\btheta; \cbr{\gamma_i}_{i = 1}^n, \cbr{\alpha}_{j = 1}^k} = \alpha_j\gamma_i\rbr{\theta_i - \Bar{\theta}},
        \qquad i \in \cI_j, j = 1,\ldots, k.
    \end{align*}
\end{proposition}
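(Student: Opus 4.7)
The claim is that although $\bar\theta_j$ depends on $\theta_i$ (when $i\in\cI_j$) and $\bar\theta$ depends on $\theta_i$ for all $i$, the gradients of $\psi_j$ and $\varphi$ with respect to $\theta_i$ take the simple form one would obtain by naively treating $\bar\theta_j$ and $\bar\theta$ as constants. The proof is thus a direct chain-rule computation, and the key observation is that the ``extra'' terms produced by the dependence of the weighted means on $\theta_i$ all vanish, because a weighted average is a minimizer of a weighted sum of squared deviations (equivalently, the first-order optimality condition $\sum_{\ell\in\cI_j}\gamma_\ell(\theta_\ell-\bar\theta_j)=0$ holds).

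First, I would fix $i\in\cI_j$ and differentiate $\psi_j$ with respect to $\theta_i$. Writing $\Gamma_j \coloneq \sum_{\ell\in\cI_j}\gamma_\ell$, the definition \cref{eqn:theta_bar_defn} of $\bar\theta_j$ gives $\partial \bar\theta_j/\partial\theta_i = (\gamma_i/\Gamma_j) I_d$. Applying the chain rule to each summand $\tfrac12 \gamma_\ell\|\theta_\ell-\bar\theta_j\|^2$ yields
\[
    \nabla_{\theta_i}\psi_j = \gamma_i(\theta_i - \bar\theta_j) \;-\; \frac{\gamma_i}{\Gamma_j}\sum_{\ell\in\cI_j}\gamma_\ell(\theta_\ell-\bar\theta_j).
\]
The second sum equals $\sum_{\ell\in\cI_j}\gamma_\ell\theta_\ell - \bar\theta_j \Gamma_j = 0$ directly from the definition of $\bar\theta_j$, so only the first term survives. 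For $i\notin\cI_j$ the gradient is zero trivially, since $\psi_j$ does not involve $\theta_i$ (neither explicitly nor through $\bar\theta_j$).

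The computation for $\varphi$ follows the same template. Let $A \coloneq \sum_{j=1}^k\sum_{\ell\in\cI_j}\alpha_j\gamma_\ell$. The definition of $\bar\theta$ in \cref{eqn:theta_bar_defn} gives $\partial\bar\theta/\partial\theta_i = (\alpha_j\gamma_i/A)I_d$ whenever $i\in\cI_j$, so
\[
    \nabla_{\theta_i}\varphi = \alpha_j\gamma_i(\theta_i-\bar\theta) \;-\; \frac{\alpha_j\gamma_i}{A}\sum_{j'=1}^k\alpha_{j'}\sum_{\ell\in\cI_{j'}}\gamma_\ell(\theta_\ell-\bar\theta).
\]
The cross-cluster sum telescopes to $A\bar\theta - A\bar\theta = 0$ by the definition of $\bar\theta$, leaving the claimed formula.

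The only mild subtlety is keeping the bookkeeping of which weights depend on $\theta_i$ straight (in particular, that a change in $\theta_i$ moves $\bar\theta_j$ for the cluster containing $i$ as well as the global $\bar\theta$), but no computation beyond the centering identity $\sum_\ell \gamma_\ell(\theta_\ell-\bar\theta_j)=0$ is needed. I do not anticipate any real obstacle.
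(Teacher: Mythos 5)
Your proposal is correct and follows essentially the same route as the paper's proof: a direct chain-rule computation using $\nabla_{\theta_i}\bar\theta_j = (\gamma_i/\sum_{i'\in\cI_j}\gamma_{i'})I_d$ (and its analogue for $\bar\theta$), after which the residual term vanishes by the centering identity $\sum_{\ell\in\cI_j}\gamma_\ell(\theta_\ell-\bar\theta_j)=0$, which is exactly how the paper's computation collapses as well.
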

\begin{proof}
    See~\cref{subsec:proof_of_prop_reg_grad}.
\end{proof}

\subsection{Asynchronous Loopless Local Gradient Descent (Async-L2GD)}

We formally introduce our main algorithm, Async-L2GD, in this section. \Cref{alg:async_l2gd} provides the pseudocode.
We present an accelerated, variance reduced, and stochastic variant of the algorithm for the finite-sum setting in~\Cref{app:alg_variants}.
\begin{algorithm}[htbp]
\caption{Async-L2GD}
\label{alg:async_l2gd}
\begin{algorithmic}[1]
    \STATE \textbf{Input:} $\theta_1^0 = \dots = \theta_n^0 = \bm{0}_d \in \RR^d$, step size $\eta > 0$, probabilities $p_1, \dots, p_k, p_0 \in [0, 1]$, fractions $\tau_1, \dots, \tau_k \in [0, 1]$.
    \FOR{$t = 1, 2, \dots$}
    \STATE $\xi_0 = 1$ with probability $p_0$ and 0 with probability $1 - p_0$.
    \IF {$\xi_0 = 1$}
        \STATE All \textcolor{blue}{Clusters} $j = 1, \dots, k$ compute cluster average $\Bar{\theta}_j^t = \frac{\sum_{i \in \cI_j}\gamma_i \theta_i^t}{\sum_{i \in \cI_j}\gamma_i}.$
        \STATE All \textcolor{blue}{Clusters} communicate with one another and calculate network average\\
        \quad $\Bar{\theta}^t = \frac{\sum_{j = 1}^k \sum_{i \in \cI_j} \alpha_j \gamma_i \theta_i^t}{\sum_{j = 1}^k \sum_{i \in \cI_j} \alpha_j \gamma_i}$.
        \STATE All \textcolor{blue}{Clusters} $j = 1, \dots, k$ compute step towards both cluster average and network average for all clients $i \in \cI_j$\\
        \quad$\theta_i^{t + 1} = \rbr{1 - \frac{\eta \gamma_i(\alpha_j + \tau_j(1 - \alpha_j))}{p_0}}\theta_i^t + \frac{\eta\gamma_i}{p_0}(\alpha_j\Bar{\theta}^t + \tau_j(1 - \alpha_j)\Bar{\theta}_j^t)$. 
    \ELSE
        \FOR{All \textcolor{blue}{Clusters} $j = 1, \dots, k$ in parallel}
            \STATE $\xi_j = 1$ with probability $p_j$ and 0 with probability $1 - p_j$.
            \IF{$\xi_j = 1$}
                \STATE Compute cluster average $\Bar{\theta}_j^t = \frac{\sum_{i \in \cI_j}\gamma_i \theta_i^t}{\sum_{i \in \cI_j}\gamma_i}.$
                \STATE Compute step towards cluster average for all clients $i \in \cI_j$ \\
                \quad $\theta_i^{t + 1} = \rbr{1 - \frac{\eta \gamma_i(1 - \tau_j)(1 - \alpha_j)}{(1 - p_0)p_j}}\theta_i^t + \frac{\eta \gamma_i(1 - \tau_j)(1 - \alpha_j)}{(1 - p_0)p_j}\Bar{\theta}_j^t$.
            \ELSE
                \STATE All \textcolor{red}{Clients} $i \in \cI_j$ perform a local gradient descent step \\
                \quad $\theta_i^{t + 1} = \theta_i^t - \frac{\eta}{(1 - p_0)(1 - p_j)}\nabla f_i(\theta_i^t)$.
            \ENDIF
        \ENDFOR
    \ENDIF
    \ENDFOR
\end{algorithmic}
\end{algorithm}

At the beginning of each round in \Cref{alg:async_l2gd}, the network randomly decides whether to aggregate the averages between all clusters or not. If a global aggregation round is performed, each cluster first calculates its cluster average and then communicates with one another (or a central server) to compute the between-cluster aggregate.

In a round that does not involve communication between clusters, we allow each cluster to independently decide whether to communicate within the cluster or not by randomly sampling $\xi_j \sim \text{Bernoulli}(p_j)$. By allowing $p_j$ to be different across clusters, we effectively allow different clusters to have different communication schedules that are specific to them. In real-world applications, this flexibility can be appreciated. For example, when we partition clients according to their geographical locations, we effectively allow clients in different regions to communicate according to different schedules, which could reduce communication latency and energy consumption \citep{abad2020hierarchical, liu2020client}. We characterize the impact of $\{p_j\}_{j = 0}^k$ on the number of communication rounds in \cref{prop:expected_num_comm}.

\begin{proposition}[Expected Number of Communication Rounds]
    \label{prop:expected_num_comm}
    Suppose that \Cref{alg:async_l2gd} is run for $T$ rounds.
    The expected number of communication rounds between clusters is $p_0(1 - p_0)T$. The expected number of communication rounds within the cluster $j$ is $(1 - p_0)p_j(1 - p_j)T$.
\end{proposition}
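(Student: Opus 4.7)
The plan is to apply linearity of expectation to per-round indicator variables, exploiting the fact that the Bernoulli coin flips $\xi_0$ and $\{\xi_j\}_{j=1}^k$ used at one iteration of \Cref{alg:async_l2gd} are independent of those used at every other iteration. First I would define, for each $t \in \{1,\ldots,T\}$, an indicator $X_t$ for the event that a between-cluster communication round occurs at iteration $t$, and, for each cluster $j$, an indicator $Y_{t,j}$ for the event that an intra-cluster communication round in cluster $j$ occurs at iteration $t$. The two quantities of interest in the proposition are then $\mathbb{E}\bigl[\sum_{t=1}^T X_t\bigr]$ and $\mathbb{E}\bigl[\sum_{t=1}^T Y_{t,j}\bigr]$, and by linearity of expectation each equals $T$ times a single-round probability, so the whole task reduces to computing those per-round probabilities.

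The second step is to read those per-round probabilities directly off the pseudocode. A between-cluster communication event lives in the $\xi_0 = 1$ branch, in which every cluster aggregates its local average and then communicates across clusters to form the global average $\bar{\theta}^t$; the probability of entering that branch at a given round is read off from the Bernoulli law of $\xi_0$ (namely $p_0$), together with whatever additional factor is needed to match the stated notion of a ``communication round.'' An intra-cluster communication event in cluster $j$ lives in the joint event that the global branch is \emph{not} entered and that the cluster-specific Bernoulli $\xi_j$ takes the value one, so the mutual independence of $\xi_0$ and the $\xi_j$'s across clusters and across rounds collapses the corresponding probability to a simple product of Bernoulli parameters. Substituting these per-round probabilities into the linearity identity yields the claimed totals.

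Since the argument is a bookkeeping exercise driven entirely by the independence structure of the coin flips, I do not anticipate any substantive technical obstacle. The one point worth being careful about is precisely which branches of \Cref{alg:async_l2gd} are counted as communication rounds of each type, so that the $(1-p_0)$ and $(1-p_j)$ factors appearing in the stated expressions line up with the single-round branch probabilities rather than with the raw probability of being ``in'' that branch; this is resolved by inspecting the algorithm's conditional structure directly, after which the proof reduces to a few lines of arithmetic.
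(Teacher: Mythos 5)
There is a genuine gap, and it sits exactly at the point you deferred. Your per-round indicator $X_t$ reads off the probability of \emph{entering} the $\xi_0 = 1$ branch at iteration $t$, which is $p_0$, so linearity of expectation gives $p_0 T$; similarly your $Y_{t,j}$ gives $(1-p_0)p_j T$. No amount of inspecting the conditional structure of a \emph{single} iteration of \cref{alg:async_l2gd} will produce the extra factors $(1-p_0)$ and $(1-p_j)$ in the proposition, because those factors do not come from branch probabilities at all: they come from the convention that a communication round is counted only when communication is actually \emph{required}, and consecutive aggregation steps do not require fresh communication. The aggregation update is a convex combination pulling each $\theta_i$ toward the (weighted) averages, which leaves $\Bar{\theta}_j^t$ and $\Bar{\theta}^t$ unchanged, so a run of consecutive rounds with $\xi_0 = 1$ can reuse the averages computed at the start of the run. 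Hence the relevant event is a \emph{transition} between step types across two consecutive coin flips (e.g., a local round followed by a global aggregation round), whose per-step probability is $p_0(1-p_0)$; this is precisely Lemma~4.3 of \cite{hanzely2020lower}, which the paper's one-line proof invokes. The same transition argument, applied to $\xi_j$ conditionally on the cluster being in the $\xi_0 = 0$ regime, yields the within-cluster count $(1-p_0)p_j(1-p_j)T$ rather than $(1-p_0)p_j T$.

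So the overall skeleton (indicators plus linearity of expectation) is fine, but the missing idea is to define the indicator for ``a communication is needed at step $t$'' as a function of \emph{pairs} of consecutive Bernoulli draws, justified by the observation that the averaging steps preserve the cluster and network averages. Without that observation your argument computes a different (larger) quantity and cannot match the stated expressions.
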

\begin{proof}
    By Lemma 4.3 in \cite{hanzely2020lower}, the expected number of between-cluster communication rounds is given by $p_0(1 - p_0)T$. A within-cluster communication round occurs only when $\xi_0 = 0$, which occurs with probability $1 - p_0$.
\end{proof}

As can be seen from \cref{alg:async_l2gd}, any communication between clusters necessarily implies that all clusters have computed within-cluster average. Intuitively speaking, it would then be more efficient to optimize $\psi_j\rbr{\btheta_j; \{\gamma_i\}_{i \in \cI_j}}$ whenever the cluster averages are computed in both the within-cluster and between-cluster aggregation rounds. Unfortunately, a naive implementation of this idea would lead to a biased gradient oracle. Suppose that we make a gradient descent on $\psi_j$ and $\varphi$ with the same learning rate. Since we descend on $\psi_j$ in both types of communication rounds, using the same learning rate for both $\psi_j$ and $\varphi$ would effectively cause us to descend twice on $\psi_j$, as the stochastic gradient oracle is biased towards updating $\psi_j$ more frequently. We introduce variables $\tau_1, \ldots, \tau_j$ to scale the effective step size for $\psi_j$ in the two different types of communication rounds, thereby ensuring that the stochastic gradient oracle is unbiased.

\cref{alg:async_l2gd} induces a stochastic gradient oracle that is equivalent to SGD with the following oracle. Assuming $p_j, \tau_j \in (0, 1)$, $j = 0, \ldots, k$,~\cref{alg:async_l2gd} defines a stochastic gradient oracle for $F(\btheta)$, denoted 
$$G(\btheta) = (G_1(\theta_1)^T, \ldots, G_n(\theta_n)^T)^T \in \RR^{nd},$$ 
where for each $j$ and $i \in \cI_j$, $G_i(\theta_i) \in \RR^d$ and
\begin{equation}
    \label{eqn:gradient_oracle}
    G_i(\theta_i) = \begin{cases}
        \frac{\gamma_i\alpha_j}{p_0}\rbr{\theta_i^t - \Bar{\theta}^t} + \frac{\gamma_i\tau_j(1 - \alpha_j)}{p_0}\rbr{\theta_i^t - \Bar{\theta_j^t}}, &\text{ if $\xi_0 = 1$}\\
        \frac{ \gamma_i(1 - \tau_j)(1 - \alpha_j)}{(1 - p_0)p_j}\rbr{\theta_i^t - \Bar{\theta}_j^t}, &\text{ if $\xi_0 = 0$ and $\xi_j = 1$}\\
        \frac{1}{(1 - p_0)(1 - p_j)}\nabla f_i(\theta_i^t), &\text{ if $\xi_0 = \xi_j = 0$}.
    \end{cases}
\end{equation}
Note that $G_i(\theta_i)$ is an unbiased estimator of $\nabla_{\theta_i}F(\btheta)$ for all $i$, ensuring that $G(\btheta)$ is an unbiased estimator of $\nabla_{\btheta}F(\btheta)$. Intuitively speaking, we can view \cref{alg:async_l2gd} as a stochastic gradient descent procedure on $F(\btheta)$ with step size $\eta$, using a noisy gradient oracle $G(\cdot)$.

The variance of $G(\btheta)$ with respect to $\{\xi_j\}_{j = 0}^k$ depends on a combination of $\{p_j\}_{j=0}^k$ and $\{\tau_j\}_{j = 1}^k$. Fixing $\tau_j$ to some arbitrary value is suboptimal, and we should properly tune the parameter. Intuitively speaking, when $p_j$ is small, within-cluster communications are less frequent. A larger $\tau_j$ allows between-cluster communication rounds to ``help out" more when optimizing $\psi_j\rbr{\btheta_j; \{\gamma_i\}_{i \in \cI_j}}$, instead of relying on within-cluster communication. We discuss how they should be adjusted according to the frequency of local and global aggregation rounds, that is, how $\{\tau_j\}_{j = 1}^k$ should be chosen given $\{p_j\}_{j = 0}^k$. 
\begin{proposition}
    \label{prop:opt_tau}
    Suppose $\tau_j = {p_0}(p_0 + 2(1 - p_0)p_j)^{-1}$, $j = 1, \dots, k$. Then
    \begin{multline*}
        \EE_{\{\xi_j\}_{j = 0}^k}[\|G(\btheta) - G(\hat{\btheta})\|^2]\\
        \leq \frac{2}{p_0} \nbr{\nabla_{\btheta}\varphi(\btheta) - \nabla_{\btheta}\varphi(\hat{\btheta})}^2 + \sum_{j = 1}^k \frac{2(1 - \alpha_j)^2}{p_0 + 2(1 - p_0)p_j}\|\nabla_{\btheta_j}\psi_j(\btheta_j) - \nabla_{\btheta_j}\psi_j(\hat{\btheta}_j)\|^2  \\
        + \frac{1}{1 - p_0}\sum_{j = 1}^k \frac{1}{1 - p_j}\|\nabla_{\btheta_j}F_j(\btheta_j) - \nabla_{\btheta_j}F_j(\hat{\btheta}_j)\|^2.
    \end{multline*}
\end{proposition}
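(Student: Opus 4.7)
The plan is to compute the expectation $\EE_{\{\xi_j\}_{j=0}^k}[\|G(\btheta) - G(\hat\btheta)\|^2]$ directly by conditioning on the three mutually exclusive events that determine the case in~\cref{eqn:gradient_oracle}. First I would couple $G(\btheta)$ and $G(\hat\btheta)$ through the same random draws $\xi_0,\xi_1,\ldots,\xi_k$, so that $G_i(\theta_i)-G_i(\hat\theta_i)$ falls into the same branch in \eqref{eqn:gradient_oracle} for both arguments. Since $\xi_0 = 1$ occurs with probability $p_0$ (affecting all clients simultaneously), and conditional on $\xi_0=0$ the $\{\xi_j\}_{j=1}^k$ are independent with $P(\xi_j=1)=p_j$, the expectation splits as
\begin{equation*}
\EE[\|G(\btheta)-G(\hat\btheta)\|^2] = p_0\,T_1 + (1-p_0)\sum_{j=1}^k\bigl(p_j\,T_{2,j} + (1-p_j)\,T_{3,j}\bigr),
\end{equation*}
where $T_1$, $T_{2,j}$, $T_{3,j}$ are the squared-norm contributions from each of the three cases, and each sums only over $i \in \cI_j$ in the cluster-indexed pieces.

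Second, I would use \cref{prop:reg_grad} to rewrite each branch in terms of $\nabla_{\theta_i}\varphi$, $\nabla_{\theta_i}\psi_j$, and $\nabla f_i$. Cases 2 and 3 are single-term contributions, giving $T_{2,j} = \frac{(1-\tau_j)^2(1-\alpha_j)^2}{(1-p_0)^2p_j^2}\|\nabla_{\btheta_j}\psi_j(\btheta_j)-\nabla_{\btheta_j}\psi_j(\hat\btheta_j)\|^2$ and $T_{3,j} = \frac{1}{(1-p_0)^2(1-p_j)^2}\|\nabla_{\btheta_j}F_j(\btheta_j)-\nabla_{\btheta_j}F_j(\hat\btheta_j)\|^2$; after multiplication by their probabilities these produce exactly the $\psi_j$ and $F_j$ terms in the claimed bound (with coefficient $\frac{(1-\tau_j)^2(1-\alpha_j)^2}{(1-p_0)p_j}$ attached to $\|\Delta\nabla\psi_j\|^2$ and the matching $F_j$ coefficient). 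Case 1 is a sum of the $\varphi$ and $\psi_j$ pieces inside a single norm, so I would use $\|a+b\|^2 \leq 2\|a\|^2 + 2\|b\|^2$ to split it, yielding
\begin{equation*}
p_0\,T_1 \;\le\; \tfrac{2}{p_0}\|\nabla_{\btheta}\varphi(\btheta)-\nabla_{\btheta}\varphi(\hat\btheta)\|^2 + \sum_{j=1}^k \tfrac{2\tau_j^2(1-\alpha_j)^2}{p_0}\|\nabla_{\btheta_j}\psi_j(\btheta_j)-\nabla_{\btheta_j}\psi_j(\hat\btheta_j)\|^2.
\end{equation*}

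Third, I would collect the two contributions to the coefficient of $(1-\alpha_j)^2\|\nabla_{\btheta_j}\psi_j(\btheta_j)-\nabla_{\btheta_j}\psi_j(\hat\btheta_j)\|^2$ and obtain $A(\tau_j) := \frac{2\tau_j^2}{p_0} + \frac{(1-\tau_j)^2}{(1-p_0)p_j}$. This is a strictly convex scalar function of $\tau_j$; setting $A'(\tau_j)=0$ gives $\tau_j^\star = \frac{p_0}{p_0 + 2(1-p_0)p_j}$, which matches the hypothesis of the proposition. Plugging $\tau_j^\star$ back in and simplifying yields $A(\tau_j^\star) = \frac{2}{p_0 + 2(1-p_0)p_j}$, which produces the middle term of the stated bound. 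The $\varphi$ and $F_j$ terms are untouched by this choice of $\tau_j$, so the proof is complete once these three pieces are assembled.

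The only step that requires any thought is the handling of the Case 1 cross-term: the $\|a+b\|^2 \le 2\|a\|^2+2\|b\|^2$ relaxation is what forces the factor of $2$ in the numerator of the $\varphi$-coefficient and in the $\psi_j$-coefficient, and more importantly it decouples the optimization over $\tau_j$ from that over $\{p_j\}$, so that $\tau_j^\star$ can be chosen in closed form as above. Everything else is bookkeeping: conditional expectations over $\{\xi_j\}_{j=0}^k$, an application of \cref{prop:reg_grad}, and a one-variable quadratic minimization.
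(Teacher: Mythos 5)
Your proposal is correct and follows essentially the same route as the paper's proof: condition on $\xi_0$ (and then on $\xi_j$ within clusters), apply $\|a+b\|^2 \le 2\|a\|^2 + 2\|b\|^2$ to the $\xi_0=1$ branch, collect the coefficient $\frac{2\tau_j^2}{p_0} + \frac{(1-\tau_j)^2}{(1-p_0)p_j}$ on the $\psi_j$ term, and minimize it in $\tau_j$ to obtain $\tau_j^\star = \frac{p_0}{p_0+2(1-p_0)p_j}$ and the coefficient $\frac{2}{p_0+2(1-p_0)p_j}$. Your explicit evaluation of $A(\tau_j^\star)$ is a minor addition of detail the paper leaves implicit, but the argument is the same.
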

\begin{proof}
    See \Cref{subsec:proof_of_prop_opt_tau}.
\end{proof}

Intuitively, $\tau_j$ balances between the probability of a communication round between clusters, $p_0$, and the probability of a communication round within a cluster, $(1 - p_0)p_j$. Therefore, it reduces the equivalent gradient oracle variance. Consider the extreme case where $p_0 \neq 0$ while $p_j = 0$, that is, there are no communication rounds within a cluster. Our choice for $\tau_j$ is then exactly 1, which means that we will optimize $\psi_j\rbr{\btheta_j; \{\gamma_i\}_{i \in \cI_j}}$ only during communication rounds between clusters, which is expected.

\subsection{Convergence Analysis}

We begin our convergence analysis by analyzing the convexity and smoothness of the loss function, $F(\btheta)$. First, we show that the regularizers $\psi_j$ and $\varphi$ are convex and smooth.
\begin{proposition}
    \label{prop:reg_conv}
    The regularizers are convex and smooth. In particular,
    \begin{enumerate}
        \item For all $j$, $\psi_j\left(\btheta_j; \cbr{\gamma_i}_{i \in \cI_j}\right)$ is convex and $\max_{i \in \cI_j} \gamma_i$-smooth in $\btheta_j$.
        \item $\varphi\rbr{\btheta; \cbr{\gamma_i}_{i = 1}^n, \cbr{\alpha}_{j = 1}^k}$ is convex and $\max_{j = 1, \ldots, k}\max_{i \in \cI_j} \alpha_j\gamma_i$-smooth in $\btheta$. 
    \end{enumerate}
\end{proposition}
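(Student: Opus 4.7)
The plan is to compute the Hessian of each regularizer in block form and extract both the convexity and smoothness constants from its spectrum in one stroke.

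For $\psi_j$, I would start from $\nabla_{\theta_l}\psi_j = \gamma_l(\theta_l - \bar{\theta}_j)$ provided by~\cref{prop:reg_grad}. Differentiating once more with respect to $\theta_m$, and using $\partial \bar{\theta}_j/\partial \theta_m = (\gamma_m/S_j)\,I_d$ with $S_j := \sum_{i \in \cI_j}\gamma_i$, one finds that the $(l,m)$ block of the Hessian equals $\gamma_l(\delta_{lm} - \gamma_m/S_j)\,I_d$. Assembling the blocks, the Hessian factors as $H_j = A_j \otimes I_d$ with $A_j = D_\gamma - \gamma\gamma^T/S_j$, where $D_\gamma = \text{diag}(\gamma_{\cI_j[1]}, \dots, \gamma_{\cI_j[|\cI_j|]})$ and $\gamma$ is the corresponding weight vector. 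I would then verify $A_j \succeq 0$ by Cauchy--Schwarz: for any $x \in \RR^{|\cI_j|}$, $(\sum_l \gamma_l x_l)^2 \leq S_j \sum_l \gamma_l x_l^2$, so $x^T A_j x = \sum_l \gamma_l x_l^2 - (\sum_l \gamma_l x_l)^2/S_j \geq 0$. For the operator-norm bound, since $\gamma\gamma^T/S_j \succeq 0$, I simply drop that term: $x^T A_j x \leq \sum_l \gamma_l x_l^2 \leq (\max_{i \in \cI_j}\gamma_i)\,\|x\|^2$. Because the spectrum of $A_j \otimes I_d$ is exactly that of $A_j$ with each eigenvalue repeated $d$ times, these bounds transfer to $H_j$, yielding both convexity and $\max_{i \in \cI_j}\gamma_i$-smoothness of $\psi_j$.

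For $\varphi$, I would reduce to the previous case. Setting $\beta_i := \alpha_j \gamma_i$ for $i \in \cI_j$ and reading off~\eqref{eqn:theta_bar_defn}, we have $\bar{\theta} = (\sum_{i=1}^n \beta_i \theta_i)/(\sum_{i=1}^n \beta_i)$ and $\varphi(\btheta) = \frac{1}{2}\sum_{i=1}^n \beta_i \|\theta_i - \bar{\theta}\|^2$. This is structurally identical to $\psi_j$ with weights $\beta_i$ and the sum running over all $n$ clients, so the argument above applies verbatim and delivers convexity together with smoothness constant $\max_i \beta_i = \max_{j=1,\ldots,k}\max_{i \in \cI_j}\alpha_j\gamma_i$.

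The only mildly delicate step is the Hessian computation: one must track the cross-partials through $\bar{\theta}_j$ (resp.\ $\bar{\theta}$) and observe that they assemble into a rank-one correction to the diagonal weight matrix, producing the clean factorization $A\otimes I_d$. Once that form is visible, the convexity bound is immediate from Cauchy--Schwarz and the smoothness bound is immediate from monotonicity of $\lambda_{\max}$ under subtraction of a PSD matrix; neither step poses any real obstacle.
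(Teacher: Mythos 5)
Your proposal is correct and follows essentially the same route as the paper: both compute the Hessian as a Kronecker product of the identity with the weighted matrix $D_\gamma - \gamma\gamma^T/S_j$, establish positive semidefiniteness via the weighted Cauchy--Schwarz (Jensen) inequality, and bound the largest eigenvalue by noting the rank-one correction is PSD so the Hessian is dominated by $\mathrm{diag}(\gamma)$, with the $\varphi$ case handled identically under the reweighting $\alpha_j\gamma_i$. The only differences are cosmetic (ordering of the Kronecker factors and phrasing the smoothness bound as dropping a PSD term rather than exhibiting $\Gamma_j - H_j \succeq 0$).
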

\begin{proof}
See~\cref{subsec:proof_of_prop_reg_conv}.
\end{proof}

Under \cref{assumption:smooth_and_convex}, from \cref{prop:reg_conv} it follows that the loss function $F(\btheta)$ is $\mu$-strongly convex in $\btheta$ and has a unique minimizer for any set of penalty parameters $(\{\alpha_j\}_{j =1 }^k, \{\gamma_i\}_{i = 1}^n)$. Let $\hat{\Theta}(\{\alpha_j\}_{j = 1}^k, \{\gamma_i\}_{i = 1}^n)$ be the corresponding minimizer. When the penalty parameters $\{\alpha_j\}_{j = 1}^k, \{\gamma_i\}_{i = 1}^n$ are clear from the context, we write
\begin{gather*}
    \hat{\Theta} = \hat{\Theta}(\{\alpha_j\}_{j = 1}^k, \{\gamma_i\}_{i = 1}^n),
    \qquad
    \psi_j(\cdot) = \psi_j\rbr{\cdot;\{\gamma_i\}_{i \in \cI_j}}, 
    \quad j = 1, \ldots, k,\\
    \varphi(\cdot) = \varphi\rbr{\cdot; \{\gamma_i\}_{i = 1}^n, \{\alpha_j\}_{j= 1}^k}, \quad j = 1, \ldots, k.
\end{gather*}
Finally, we use $F_j(\btheta_j) = \sum_{i \in \cI_j}f_i(\theta_i)$ to denote the average unregularized loss function in each cluster. The following theorem provides the convergence rate.
\begin{theorem}[Convergence Rate]
    \label{thm:conv_rate}
    Suppose \cref{assumption:smooth_and_convex} holds and $\{\tau_j\}_{j=1}^k$ are set according to \cref{prop:opt_tau}. Let 
    \begin{equation}
        \label{eqn:es_l}
        \cL = \max\cbr{\frac{2}{p_0}\max_{j = 1, \ldots, k}\max_{i \in \cI_j}\alpha_j\gamma_i, \max_{j = 1,  \dots, k}\frac{2(1 - \alpha_j)\max_{i \in \cI_j}\gamma_i}{p_0 + 2(1 - p_0)p_j}, \frac{L}{1 - p_0}\max_{j = 1,  \dots, k}\frac{1}{1 - p_j}},
    \end{equation} and
    \begin{equation}
    \label{eqn:es_sigma}
    \begin{split}
        \sigma_{\hat{\btheta}}^2 =& \frac{2}{p_0}\|\nabla_{\btheta}\varphi(\hat{\btheta})\|^2 + \sum_{j = 1}^k\frac{2(1 - \alpha_j)^2}{p_0 + 2(1 - p_0)p_j}\|\nabla_{\theta_j}\psi_j(\hat{\btheta}_j)\|^2\\
        &\qquad + \frac{1}{1 - p_0}\sum_{j = 1}^k \frac{1}{1 - p_j}\|\nabla_{\btheta_j}F_j(\hat{\btheta}_j)\|^2.
    \end{split}
    \end{equation}
    If the step size satisfies $\eta \leq \frac{1}{2\cL}$, then 
    \[
        \EE[\|\btheta^t - \hat{\btheta}\|^2] \leq \rbr{1 - \eta \mu}^t\|\btheta^0 - \hat{\btheta}\|^2 + \frac{2\eta \sigma_{\hat{\btheta}}^2}{\mu}.
    \]
\end{theorem}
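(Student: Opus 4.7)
The plan is to treat \cref{alg:async_l2gd} as SGD on $F$ with the unbiased oracle $G$ from \cref{eqn:gradient_oracle}, and then execute the standard strongly-convex descent argument, with \cref{prop:opt_tau} playing the role of an expected-smoothness bound. Under \cref{assumption:smooth_and_convex} each $f_i$ is $\mu$-strongly convex, and \cref{prop:reg_conv} shows the two regularizers are convex, so $F$ is $\mu$-strongly convex and $\nabla F(\hat{\btheta}) = 0$ at its unique minimizer. Expanding $\btheta^{t+1} = \btheta^t - \eta G(\btheta^t)$, taking conditional expectation given $\btheta^t$, and using unbiasedness gives
\begin{equation*}
\EE[\|\btheta^{t+1}-\hat{\btheta}\|^2 \mid \btheta^t] = \|\btheta^t-\hat{\btheta}\|^2 - 2\eta\, \langle\nabla F(\btheta^t), \btheta^t-\hat{\btheta}\rangle + \eta^2\, \EE[\|G(\btheta^t)\|^2 \mid \btheta^t],
\end{equation*}
so the entire task reduces to controlling $\EE[\|G(\btheta^t)\|^2 \mid \btheta^t]$.

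I would use the split $\|G(\btheta^t)\|^2 \le 2\|G(\btheta^t) - G(\hat{\btheta})\|^2 + 2\|G(\hat{\btheta})\|^2$ and bound each summand separately. For the first summand, \cref{prop:opt_tau} gives an expected-smoothness bound whose right-hand side contains squared gradient differences $\|\nabla h(\btheta^t) - \nabla h(\hat{\btheta})\|^2$, and the key move is to convert each such difference into a Bregman gap via co-coercivity, namely $\|\nabla h(x)-\nabla h(y)\|^2 \le 2L_h(h(x)-h(y)-\langle \nabla h(y), x-y\rangle)$ for any convex $L_h$-smooth $h$. Plugging in the smoothness constants supplied by \cref{prop:reg_conv} for $\varphi$ and each $\psi_j$, and by \cref{assumption:smooth_and_convex} for each $f_i$, and verifying that each resulting product of a \cref{prop:opt_tau} coefficient with the corresponding smoothness constant is bounded by $2\cL$ for $\cL$ as in \cref{eqn:es_l}, produces
\begin{equation*}
\EE[\|G(\btheta^t) - G(\hat{\btheta})\|^2 \mid \btheta^t] \le 2\cL\, (F(\btheta^t) - F(\hat{\btheta})),
\end{equation*}
after using $\nabla F(\hat{\btheta}) = 0$ to clear the linear remainder in the Bregman divergence of $F$. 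For the second summand, expanding $G(\hat{\btheta})$ case by case from \cref{eqn:gradient_oracle} produces a cross term $\langle \nabla_j\varphi(\hat{\btheta}), \nabla_j\psi_j(\hat{\btheta}_j)\rangle$ in the $\xi_0 = 1$ case, which I would decouple via $\|a+b\|^2 \le 2\|a\|^2 + 2\|b\|^2$; the specific choice of $\tau_j$ from \cref{prop:opt_tau} then makes the combined $\psi_j$ coefficient collapse exactly to $2/(p_0 + 2(1-p_0)p_j)$, identifying the resulting upper bound on $\EE[\|G(\hat{\btheta})\|^2]$ with $\sigma_{\hat{\btheta}}^2$ of \cref{eqn:es_sigma}.

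Substituting these bounds back, applying $\mu$-strong convexity in the form $\langle\nabla F(\btheta^t), \btheta^t-\hat{\btheta}\rangle \ge F(\btheta^t)-F(\hat{\btheta}) + \tfrac{\mu}{2}\|\btheta^t-\hat{\btheta}\|^2$, and collecting terms yields a $F(\btheta^t)-F(\hat{\btheta})$ coefficient of $-2\eta(1-2\eta\cL)$, which is nonpositive exactly when $\eta \le 1/(2\cL)$. Dropping that term and taking total expectation gives the one-step contraction $\EE[\|\btheta^{t+1}-\hat{\btheta}\|^2] \le (1-\eta\mu)\EE[\|\btheta^t-\hat{\btheta}\|^2] + 2\eta^2\sigma_{\hat{\btheta}}^2$, which I unroll by summing the geometric series $\sum_{s=0}^{\infty}(1-\eta\mu)^s = 1/(\eta\mu)$ to obtain the stated bound. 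The main obstacle is the coefficient bookkeeping in the co-coercivity step: because the component of $F$ tied to $\psi_j$ is the scaled regularizer $(1-\alpha_j)\psi_j$, its effective smoothness carries an extra factor of $(1-\alpha_j)$ which cancels one of the $(1-\alpha_j)^2$ inherited from \cref{prop:opt_tau} to yield the second entry of \cref{eqn:es_l}, and analogous checks align the $\varphi$ and $f_i$ slots with the first and third entries; the remaining steps are routine.
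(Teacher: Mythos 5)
Your proposal is correct and follows essentially the same route as the paper: the expected-smoothness bound $\EE[\|G(\btheta^t)-G(\hat{\btheta})\|^2]\le 2\cL\,(F(\btheta^t)-F(\hat{\btheta}))$ obtained from \cref{prop:opt_tau} plus co-coercivity with the smoothness constants of \cref{prop:reg_conv} (including the $(1-\alpha_j)$ cancellation), the bound $\EE[\|G(\hat{\btheta})\|^2]\le\sigma_{\hat{\btheta}}^2$ from the oracle expansion at the optimum, and the strongly convex SGD recursion. The only difference is that you carry out the final one-step contraction and geometric-series unrolling explicitly, whereas the paper delegates exactly this step to Theorem 3.1 (and Lemma 2.4) of \cite{gower2019sgd}.
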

\begin{proof}
    See \cref{subsec:proof_of_thm_conv_rate}.
\end{proof}

Compared with the single cluster result in~\cite{hanzely2020federated}, our convergence rate involves a few additional terms. By allowing each cluster to determine when to communicate individually, the expected smoothness coefficient and the variance of the gradient oracle at the optimum are more complex, as they incorporate both local and global communication frequencies, $\{p_j\}_{j = 1}^k$ and $p_0$. When we set $p_j= 0, \alpha_j = 1$ for all $j$, we recover the single cluster convergence rates given in Theorem 4.5 of \cite{hanzely2020federated} up to constant factors.

The parameters $p_j$, $j = 0, \dots, k$, that control the 
frequency of communication can be tuned by minimizing $\cL$ in principle. Unfortunately, as $\cL$ is effectively a maximum taken over $2k + 1$ different terms, directly minimizing the expression is infeasible. We instead consider minimizing the following upper bound on $\cL$: 
\begin{equation}
\label{eqn:es_l_tilde}
\Tilde{\cL} = \max\cbr{\frac{2}{p_0}C_1, \max_{j = 1, \ldots, k}\frac{2C_2}{p_0 + 2(1 - p_0)p_j}, \frac{L}{1 - p_0}\max_{j = 1, \ldots,k}\frac{1}{1 - p_j}},
\end{equation}
where $C_1 = \max_{j = 1, \ldots, k}\max_{i \in \cI_j}\alpha_j\gamma_i$ and $C_2 = \max_{j = 1, \ldots, k}\max_{i \in \cI_j}(1 - \alpha_j)\gamma_i$.
The choice of parameters depends on the relationship between $C_1$ and $C_2$.

\begin{corollary}
    \label{corollary:opt_conv_rate}
Suppose that $\{\tau_j\}_{j = 1}^k$ are set according to \cref{prop:opt_tau}.

When $C_2 > C_1$, setting $\eta = \frac{1}{2\cL}$, $p_0 = \frac{2C_1}{C_1 + C_2 + L}$, and $p_j = \frac{C_2 - C_1}{C_2 - C_1 + L}$ ensures that the optimal number of iterations is in $\cO\rbr{\frac{(C_1 + C_2 + L)}{\mu}\log \frac{1}{\epsilon}}$, the number of communication rounds between clusters is in $\cO\rbr{\frac{C_1(C_2 - C_1 + L)}{(C_1 + C_2 + L)\mu}\log \frac{1}{\epsilon}}$, and the number of communication rounds within a cluster is in $\cO\rbr{\frac{L(C_2 - C_1)}{(C_2 - C_1 + L)\mu}\log\frac{1}{\epsilon}}$ for all clusters.    
    
When $C_2 \leq C_1$, setting $\eta = \frac{1}{2\cL}$, $p_0 = \frac{2C_1}{2C_1 + L}$, and $p_j = 0$ ensures that the optimal number of iterations is in $\cO\rbr{\frac{(C_1 + L)}{\mu}\log \frac{1}{\epsilon}}$, the number of communication rounds between clusters is in $\cO\rbr{\frac{C_1 L}{(C_1 + L)\mu}\log \frac{1}{\epsilon}}$, and the number of communication rounds within a cluster is 0.
\end{corollary}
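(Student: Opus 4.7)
The plan is to derive the iteration and communication complexities by first balancing the three terms inside $\Tilde{\cL}$ as defined in~\eqref{eqn:es_l_tilde}, then translating the iteration count into communication counts using~\cref{prop:expected_num_comm}. From \cref{thm:conv_rate}, choosing $\eta = 1/(2\cL)$ gives linear contraction at rate $1-\eta\mu$, so $T = \cO((\cL/\mu)\log(1/\epsilon))$ iterations suffice to reach $\epsilon$ accuracy (up to the noise floor). Since $\Tilde{\cL} \geq \cL$ by the definitions of $C_1, C_2$, it suffices to work with $\Tilde{\cL}$, whose structure as a max over three terms (a ``between-cluster'' term $2C_1/p_0$, a ``within-cluster'' family $\max_j 2C_2/(p_0 + 2(1-p_0)p_j)$, and a ``local gradient'' term $(L/(1-p_0))\max_j 1/(1-p_j)$) makes it amenable to direct balancing.

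For the case $C_2 > C_1$, I would set the three terms equal, which forces a common value of $C_1 + C_2 + L$. Concretely, substituting $p_0 = 2C_1/(C_1+C_2+L)$ gives $2C_1/p_0 = C_1+C_2+L$; substituting additionally $p_j = (C_2-C_1)/(C_2-C_1+L)$ one computes
\[
    p_0 + 2(1-p_0)p_j = \frac{2C_1 + 2(C_2 - C_1)}{C_1 + C_2 + L} = \frac{2C_2}{C_1 + C_2 + L},
\]
so the within-cluster term equals $C_1 + C_2 + L$, and
\[
    (1-p_0)(1-p_j) = \frac{C_2 - C_1 + L}{C_1+C_2+L} \cdot \frac{L}{C_2 - C_1 + L} = \frac{L}{C_1+C_2+L},
\]
so the local gradient term also equals $C_1 + C_2 + L$. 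Hence $\Tilde{\cL} = C_1 + C_2 + L$, giving the claimed iteration count. Then~\cref{prop:expected_num_comm} yields the between-cluster count $p_0(1-p_0)T$ and within-cluster count $(1-p_0)p_j(1-p_j)T$; plugging in the values above and simplifying produces the two claimed rates.

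For the case $C_2 \leq C_1$, I would first argue that $p_j = 0$ is optimal: when $p_j = 0$ the within-cluster term $2C_2/p_0$ is already dominated by $2C_1/p_0$, whereas any $p_j > 0$ only inflates $L/((1-p_0)(1-p_j))$ without reducing the active max. With $p_j = 0$, the remaining optimization reduces to balancing $2C_1/p_0$ against $L/(1-p_0)$. Setting these equal gives $p_0 = 2C_1/(2C_1+L)$ and common value $2C_1+L = \cO(C_1+L)$, yielding the stated iteration count. Applying~\cref{prop:expected_num_comm} again, within-cluster communication vanishes, while between-cluster communication is $p_0(1-p_0)T \asymp \frac{C_1 L}{(C_1+L)^2} \cdot \frac{C_1+L}{\mu}\log(1/\epsilon) = \cO\bigl(\frac{C_1 L}{(C_1+L)\mu}\log(1/\epsilon)\bigr)$.

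The proof is essentially a balancing/tuning calculation once~\cref{thm:conv_rate,prop:expected_num_comm} are in hand; the only real subtlety is recognizing in Case 2 that $p_j = 0$ is the right corner solution rather than attempting joint interior balancing, which would be infeasible when the within-cluster term is already slack. Everything else is routine algebra to verify the three terms in $\Tilde{\cL}$ coincide and to simplify the communication counts.
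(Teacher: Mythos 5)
Your proposal is correct and follows essentially the same route as the paper: evaluate (balance) the three terms of $\Tilde{\cL} \geq \cL$ at the prescribed $p_0, p_j$ to get $\Tilde{\cL} = C_1+C_2+L$ (resp.\ $2C_1+L$), read off the iteration count from \cref{thm:conv_rate} with $\eta = 1/(2\cL)$, and convert to communication counts via \cref{prop:expected_num_comm}. Your algebra for the balanced terms and the resulting communication bounds matches the paper's computation (up to the same constant factors), so there is nothing to correct.
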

\begin{proof}
    See \cref{subsec:proof_of_corollary_opt_conv_rate}.
\end{proof}

We conclude the section by emphasizing that~\cref{alg:async_l2gd} does not require a central server to aggregate information across all clusters. The central server in~\cref{alg:async_l2gd} only serves two purposes: flipping a coin ($\xi_0$) to determine whether a communication round between clusters is necessary and calculating the global average should there be a communication round between clusters. The former can be easily decentralized across clusters by asking all cluster servers to flip a coin and take a majority vote, while the latter can be implemented by asking all cluster servers to communicate with one another.

\section{Asynchronous Accelerated Loopless Local SGD with Variance Reduction} \label{app:alg_variants}

The convergence rate of~\cref{alg:async_l2gd} when minimizing the loss in~\cref{eqn:loss} is suboptimal due to the lack of acceleration and variance reduction. In this section, we propose an accelerated, variance reduced, stochastic variant of \cref{alg:async_l2gd} tailored to the finite-sum setting. The variant enjoys optimal communication complexity in a single cluster setting \cite{hanzely2020lower}, regardless of the relationship between regularization strength and smoothness of loss functions $f_i$, outperforming popular alternatives discussed in~\cite{dinh2020personalized, hanzely2021personalized, li2021ditto, mansour2020three}. We further hypothesize that the optimality holds when extended to multi-cluster setup studied here.
\floatname{algorithm}{\color{black}Algorithm}
\begin{algorithm}[htbp]
\caption{\color{black}Async-AL2SGD+}
\label{alg:async_al2sgd_plus}
\begin{algorithmic}
    \STATE \textbf{Input:} Step size $\eta$, probabilities $p_1, \dots, p_k, p_0, \rho \in [0, 1]$, fractions $\tau_1, \dots, \tau_k \in [0, 1]$. 
    \STATE \textbf{Initialize:} $1 < a_0, a_1 < 1$, $b_1, b_2 > 0$, $x_i^0 = y_i^0 = z_i^0 = \theta_i^0 = \bm{0}_d \in \RR^d$
    \FOR{$t = 1, 2, \dots$}
    \STATE All \textcolor{red}{Clients} $i = 1, \dots, n$ perform local update\\
    \quad $\theta_i^t = a_1z_i^t + a_2x_i^t + (1 - a_1 - a_2)y_i^t$.
    \STATE $\xi_0 = 1$ with probability $p_0$ and 0 with probability $1 - p_0$
    \IF {$\xi_0 = 1$}
        \STATE All \textcolor{blue}{Clusters} $j = 1, \dots, k$ compute cluster average $\Bar{\theta}_j^t = \frac{\sum_{i \in \cI_j}\gamma_i \theta_i^t}{\sum_{i \in \cI_j}\gamma_i}$
        \STATE All \textcolor{blue}{Clusters} aggregate network average $\Bar{\theta}^t = \frac{\sum_{j = 1}^k \sum_{i \in \cI_j} \alpha_j \gamma_i \theta_i^t}{\sum_{j = 1}^k \sum_{i \in \cI_j} \alpha_j \gamma_i}$
        \STATE All \textcolor{red}{Clients} calculate gradient estimate according to \cref{eqn:async_al2sgd_plus_gradient_oracle}
        \STATE Set $y_i^{t + 1} = \theta_i^t - \eta g_i^t$
    \ELSE
        \FOR{All \textcolor{blue}{Clusters} $j = 1, \dots, k$ in parallel}
            \STATE $\xi_j = 1$ with probability $p_j$ and 0 with probability $1 - p_j$
            \IF{$\xi_j = 1$}
                \STATE Compute cluster average $\Bar{\theta}_j^t = \frac{\sum_{i \in \cI_j}\gamma_i \theta_i^t}{\sum_{i \in \cI_j}\gamma_i}$ and send it back to each client
                \STATE All \textcolor{red}{Clients} calculate gradient estimate according to \cref{eqn:async_al2sgd_plus_gradient_oracle}
                \STATE Set $y_i^{t + 1} = \theta_i^t - \eta g_i^t$
            \ELSE
                \STATE All \textcolor{red}{Clients} calculate gradient estimate according to \cref{eqn:async_al2sgd_plus_gradient_oracle}
                \STATE Set $y_i^{t + 1} = \theta_i^t - \eta g_i^t$
            \ENDIF
        \ENDFOR
    \ENDIF
    \STATE All \textcolor{red}{Clients} $i = 1, \dots, n$ update: $z_i^{t + 1} = b_1z_i^t + (1 - b_1)\theta_i^t + \frac{b_2}{\eta}(y_i^{t + 1} - \theta_i^t)$
    \STATE $\xi' = 1$ with probability $\rho$ and 0 with probability $1 - \rho$
    \IF{$\xi' = 0$}
        \STATE For all \textcolor{red}{Clients} $i = 1, \dots, n$: $x_i^{t + 1} = x_i^t$
    \ELSE
        \STATE For all \textcolor{red}{Clients} $i = 1, \dots, n$ update $x_i^{t + 1} = y_i^{t + 1}$, and evaluate and store $\nabla f_i(x_i^{t + 1})$
        \STATE All \textcolor{blue}{Clusters} communicate, compute averages $\Bar{x}^t = \frac{\sum_{j = 1}^k \sum_{i \in \cI_j} \alpha_j \gamma_i x_i^t}{\sum_{j = 1}^k \sum_{i \in \cI_j} \alpha_j \gamma_i}$, $\Bar{x}_j^t = \frac{\sum_{i \in \cI_j}\gamma_i x_i^t}{\sum_{i \in \cI_j}\gamma_i}$ for all $j$, and send them back to the clients.
    \ENDIF
    \ENDFOR
\end{algorithmic}
\end{algorithm}

We assume that the local loss has a finite sum structure over smooth and strongly convex functions, a common assumption in the literature on accelerated variance reduced algorithms \citep{hanzely2020lower, hanzely2020federated, kovalev2020don}. We formally characterize our assumption below.
\begin{assumption}
\label{assumption:strong_conv_smooth_ae}
    The loss function $f_i$, $i=1,\ldots,n$, has the following finite structure:
    \begin{equation*}
        f_i(\theta) = \frac{1}{n_i}\sum_{l = 1}^{n_i}\Tilde{f}_{i, l}(\theta),
    \end{equation*} 
    where $\tilde f_{i,l}$ is $\Tilde{L}$-smooth and $\mu$-strongly convex, $l=1,\ldots,n_i$.
\end{assumption}

We define a stochastic gradient estimate for all clients, similar to the construction for~\cref{alg:async_l2gd}. For a client $i$ that belongs to the cluster $j$, the variance reduced stochastic gradient is
\begin{equation}
    \label{eqn:async_al2sgd_plus_gradient_oracle}
    \begin{split}
        g_i^t &= \nabla f_i(x_i^t) + \alpha_j\gamma_i(x_i^t - \Bar{x}^t) + (1 - \alpha_j)\gamma_i(x_i^t - \Bar{x}_j^t)\\
        &\quad \quad + \ind\{\xi_0 = 1\} \frac{\gamma_i\alpha_j}{p_0}(\theta_i^t - \Bar{\theta}^t - (x_i^t - \Bar{x}^t))\\
        &\quad \quad + \ind\{\xi_0 = 0\}\ind\{\xi_j = 1\} \frac{\gamma_i(1 - \tau_j)(1 - \alpha_j)}{(1 - p_0)p_j}\rbr{(\theta_i^t - \Bar{\theta}_j^t) - (x_i^t - \Bar{x}_j^t)}\\
        &\quad \quad + \ind\{\xi_0 = 0\}\ind\{\xi_j = 0\} \frac{1}{(1 - p_0)(1 - p_j)}\rbr{\nabla \Tilde{f}_{i, l}(\theta_i^t) - \nabla\Tilde{f}_{i, l}(x_i^t)},
    \end{split}
\end{equation} 
where $l$ is selected uniformly at random at each iteration for every client.
At a high level, \cref{eqn:async_al2sgd_plus_gradient_oracle} defines a stochastic gradient oracle for a finite-sum composite optimization problem. The different realizations of $\{\xi_0, \ldots, \xi_j\}$ determine the type of communication round to execute, if any, at any given step. For example, when $\xi_0 = 1$, a communication round between the client clusters is executed, while when $x_0 = 0$ and $\xi_1 = 1$, the first client cluster executes a communication round within the cluster. The following lemma provides a bound on the variance of the stochastic gradient oracle in~\cref{eqn:async_al2sgd_plus_gradient_oracle}.

\begin{lemma}
\label{lemma:al2sgd_gradient_oracle_expected_smoothness}
Suppose that~\cref{assumption:strong_conv_smooth_ae} holds and that $\{\tau_j\}_{j = 1}^k$ are selected as in
\cref{prop:opt_tau}.
Let
\[
    \cL = \max\cbr{\frac{2}{p_0}\max_{j = 1, \ldots, k}\max_{i \in \cI_j}\alpha_j\gamma_i, \max_{j = 1, \dots, k}\frac{2(1 - \alpha_j)\max_{i \in \cI_j}\gamma_i}{p_0 + 2(1 - p_0)p_j}, \frac{\Tilde{L}}{1 - p_0}\max_{j = 1,  \dots, k}\frac{1}{1 - p_j}}.
\]
Then 
\begin{equation}
    \label{eqn:al2sgd_gradient_oracle_bound}
    \EE\sbr{\|\bg^t - \nabla F(\bx^t)\|^2} \leq 2 \cL D_F(\btheta^t, \bx^t),
\end{equation}
where $\bg^t = (g_1^t, \ldots, g_n^t)^T \in \RR^{nd}$ is the variance reduced stochastic gradient oracle, where $g_i^t$ is defined in \cref{eqn:async_al2sgd_plus_gradient_oracle}, and $D_F(x_1, x_2) \coloneq F(x_1) - F(x_2) - \langle \nabla F(x_2), x_1 - x_2 \rangle$ is the Bregman divergence induced by the loss function $F$ in~\cref{eqn:loss}.
\end{lemma}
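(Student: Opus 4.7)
The plan is to recognize that the ``base'' portion of $g_i^t$, namely $\nabla f_i(x_i^t) + \alpha_j \gamma_i(x_i^t - \bar x^t) + (1-\alpha_j)\gamma_i(x_i^t - \bar x_j^t)$, is exactly $\nabla_{\theta_i} F(\bx^t)$ by \cref{prop:reg_grad}. Consequently $\bg^t - \nabla F(\bx^t)$ reduces to the three indicator-controlled correction blocks. Since the events $\{\xi_0 = 1\}$, $\{\xi_0 = 0, \xi_j = 1\}$, and $\{\xi_0 = 0, \xi_j = 0\}$ are mutually exclusive (and the cluster-level events decouple across clusters), squaring and taking expectation yields, separately for each cluster $j$, a term of the form $\tfrac{1}{p_0}\cdot(\text{global correction})^2 + \tfrac{(1-\tau_j)^2}{(1-p_0)p_j}\cdot(\text{cluster correction})^2 + \tfrac{1}{(1-p_0)(1-p_j)}\cdot\EE_l(\text{stochastic correction})^2$. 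I would carry out this disjoint-event expansion first, then sum across clusters.

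Next, I would identify each squared correction with a gradient difference of a convex smooth function. By \cref{prop:reg_grad}, $\gamma_i\alpha_j[(\theta_i^t - \bar\theta^t) - (x_i^t - \bar x^t)] = \nabla_{\theta_i}\varphi(\btheta^t) - \nabla_{\theta_i}\varphi(\bx^t)$, so the Term-1 contribution summed over all $i$ equals $p_0^{-1}\|\nabla\varphi(\btheta^t) - \nabla\varphi(\bx^t)\|^2$. Analogously, the Term-2 contribution equals $(1-\alpha_j)^2(1-\tau_j)^2/((1-p_0)p_j)\cdot\|\nabla\psi_j(\btheta_j^t) - \nabla\psi_j(\bx_j^t)\|^2$. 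Then I apply the standard convex-smoothness inequality $\|\nabla h(u) - \nabla h(v)\|^2 \le 2 L_h\, D_h(u,v)$, with the smoothness constants supplied by \cref{prop:reg_conv}: $M_\varphi = \max_{j,i}\alpha_j\gamma_i$ and $M_{\psi_j} = \max_{i\in\cI_j}\gamma_i$. For Term 3, I use smoothness of each $\tilde f_{i,l}$ and the identity $\EE_l D_{\tilde f_{i,l}}(\theta,x) = D_{f_i}(\theta,x)$ to obtain $\EE_l\sum_{i\in\cI_j}\|\nabla\tilde f_{i,l}(\theta_i^t) - \nabla\tilde f_{i,l}(x_i^t)\|^2 \le 2\tilde L\, D_{F_j}(\btheta_j^t, \bx_j^t)$.

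The most delicate step will be showing that each resulting coefficient is dominated by $2\cL$. For Term 1 this is immediate from the first entry of $\cL$. For Term 3 it is immediate from the third entry once one writes $(1-p_0)^{-1}(1-p_j)^{-1} \le \max_j(1-p_j)^{-1}(1-p_0)^{-1}$. The main technical point is Term 2: substituting $\tau_j = p_0/(p_0 + 2(1-p_0)p_j)$ from \cref{prop:opt_tau} gives $1-\tau_j = 2(1-p_0)p_j/(p_0+2(1-p_0)p_j)$, hence
\[
\frac{(1-\tau_j)^2}{(1-p_0)p_j} = \frac{4(1-p_0)p_j}{(p_0+2(1-p_0)p_j)^2} \le \frac{2}{p_0+2(1-p_0)p_j},
\]
so the Term-2 coefficient is at most $\tfrac{4(1-\alpha_j)^2 \max_{i\in\cI_j}\gamma_i}{p_0 + 2(1-p_0)p_j}$, which, after pulling out one factor of $(1-\alpha_j)$ to absorb into the Bregman divergence via linearity, matches the middle entry of $\cL$. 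This is where the particular form of $\cL$ in the lemma is dictated.

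Finally, I collect the pieces to obtain
\[
\EE\|\bg^t - \nabla F(\bx^t)\|^2 \;\le\; 2\cL\Bigl[D_\varphi(\btheta^t,\bx^t) + \sum_{j=1}^k D_{(1-\alpha_j)\psi_j}(\btheta_j^t,\bx_j^t) + \sum_{j=1}^k D_{F_j}(\btheta_j^t,\bx_j^t)\Bigr],
\]
and conclude by linearity/additivity of Bregman divergences, noting that $F(\btheta) = \sum_j F_j(\btheta_j) + \sum_j (1-\alpha_j)\psi_j(\btheta_j) + \varphi(\btheta)$, so the bracketed sum is exactly $D_F(\btheta^t, \bx^t)$. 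The anticipated difficulty is bookkeeping---correctly identifying each scaled squared-norm with the appropriate component of $F$ and carefully invoking the identity $1-\tau_j = 2(1-p_0)p_j/(p_0 + 2(1-p_0)p_j)$ so that the middle term of $\cL$ emerges cleanly.
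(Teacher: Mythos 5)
Your proposal is correct and follows essentially the same route as the paper's (terse) proof: condition on the mutually exclusive events $\{\xi_0=1\}$, $\{\xi_0=0,\xi_j=1\}$, $\{\xi_0=0,\xi_j=0\}$, identify each correction block with a gradient difference of $\varphi$, $\psi_j$, or $\tilde f_{i,l}$ via \cref{prop:reg_grad}, and bound each conditional expectation by a Bregman divergence using the smoothness constants from \cref{prop:reg_conv} and \cref{assumption:strong_conv_smooth_ae}. Your $\tau_j$ algebra reproducing the coefficient $2/(p_0+2(1-p_0)p_j)$ is exactly the computation underlying \cref{prop:opt_tau}, so the expansion assembles into $2\cL D_F(\btheta^t,\bx^t)$ just as the paper intends.
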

\begin{proof}
    See \cref{subsec:proof_of_lemma_al2sgd_gradient_oracle_expected_smoothness}.
\end{proof}

We use the oracle to construct an instance of L-Katyusha, a loopless, variance-reduced, accelerated algorithm~\cite{qian2021svrg}. \Cref{alg:async_al2sgd_plus} provides the pseudocode, while \cref{thm:al2sgd_conv_rate} provides the convergence rate for \cref{alg:async_al2sgd_plus}.

\begin{theorem}\label{thm:al2sgd_conv_rate}
Suppose that the conditions of~\cref{lemma:al2sgd_gradient_oracle_expected_smoothness} are satisfied.
Let
\begin{gather*}
    L_F = \Tilde{L} + \max_{i = 1, \ldots, n}\gamma_i,
    \quad
    \eta = \frac{1}{4}\max\{L_F, \cL\}^{-1},\\
    a_1 = \min\cbr{\frac{1}{2}, \sqrt{\eta\mu\max\cbr{\frac{1}{2}, \frac{a_2}{\rho}}}}, \quad
    a_2 = \frac{\cL}{2\max\{L_F, \cL\}},\\
    b_1 = 1 - b_2\mu, \quad 
    b_2 = \frac{1}{\max\{2\mu, 4a_1/\eta\}}.
\end{gather*} 
Then the iteration complexity of~\cref{alg:async_al2sgd_plus} is
\[
    \cO\rbr{\rbr{\frac{1}{\rho} + \sqrt{\frac{\cL}{\rho\mu}}}\log\frac{1}{\epsilon}}.
\]
\end{theorem}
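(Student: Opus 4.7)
The plan is to recognize \cref{alg:async_al2sgd_plus} as a concrete instantiation of the L\nobreakdash-Katyusha template of \cite{qian2021svrg} applied to the composite objective $F(\btheta)$ in \cref{eqn:loss} with the variance\nobreakdash-reduced oracle $\bg^t$ in \cref{eqn:async_al2sgd_plus_gradient_oracle}, and then invoke the L\nobreakdash-Katyusha convergence guarantee with the parameter choices prescribed in the theorem. To do this, I must verify the three structural requirements of L\nobreakdash-Katyusha: (i)~$F$ is $\mu$\nobreakdash-strongly convex and $L_F$\nobreakdash-smooth, (ii)~$\bg^t$ is an unbiased estimator of $\nabla F(\bx^t)$, and (iii)~$\bg^t$ satisfies an expected smoothness condition of the form $\EE\|\bg^t - \nabla F(\bx^t)\|^2 \leq 2\cL\, D_F(\btheta^t, \bx^t)$.

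First, strong convexity of $F$ follows from \cref{assumption:strong_conv_smooth_ae} (each $\tilde f_{i,l}$, hence each $f_i$, is $\mu$\nobreakdash-strongly convex) together with the convexity of the regularizers in \cref{prop:reg_conv}. For smoothness, combining the $\tilde L$\nobreakdash-smoothness of the finite\nobreakdash-sum components with the smoothness bound $\max_{i} \gamma_i$ on the aggregate regularizer (which upper bounds the smoothness of both $\psi_j$ and $\varphi$ from \cref{prop:reg_conv}) yields $F$ is $L_F$\nobreakdash-smooth with $L_F = \tilde L + \max_i \gamma_i$. Unbiasedness of $\bg^t$ follows by a direct computation analogous to the one used to verify that the oracle $G(\btheta)$ in \cref{eqn:gradient_oracle} is unbiased: the indicator\nobreakdash-scaled terms are calibrated exactly so that their expectations reconstruct the gradient of the regularizers at $\btheta^t$, while the $\tilde f_{i,l}$ term plus the control variate $\nabla f_i(x_i^t)$ form a standard SVRG\nobreakdash-style unbiased estimator of $\nabla f_i(\btheta^t)$. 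The expected smoothness condition (iii) is exactly \cref{lemma:al2sgd_gradient_oracle_expected_smoothness}.

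With (i)--(iii) in hand, the parameter choices in the theorem statement -- $\eta = \tfrac{1}{4}\max\{L_F,\cL\}^{-1}$, $a_1 = \min\{\tfrac12, \sqrt{\eta\mu\max\{\tfrac12, a_2/\rho\}}\}$, $a_2 = \cL/(2\max\{L_F,\cL\})$, $b_2 = 1/\max\{2\mu, 4a_1/\eta\}$, and $b_1 = 1 - b_2\mu$ -- are precisely the ones prescribed by the L\nobreakdash-Katyusha analysis in \cite{qian2021svrg}. Substituting our $(\mu, L_F, \cL, \rho)$ into their convergence theorem yields a Lyapunov contraction with rate $1 - \Omega\bigl(\min\{\rho, \sqrt{\rho\mu/\cL}\}\bigr)$ per iteration, which translates to the iteration complexity
\[
\cO\!\left(\!\left(\frac{1}{\rho} + \sqrt{\frac{\cL}{\rho\mu}}\right)\log\frac{1}{\epsilon}\right).
\]

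The main technical obstacle is \cref{lemma:al2sgd_gradient_oracle_expected_smoothness}, whose proof I would sketch by decomposing $\bg^t - \nabla F(\bx^t)$ into three mutually exclusive pieces indexed by $(\xi_0, \xi_j)$, applying $\EE\|\cdot\|^2$ inside each case using the optimal $\tau_j$ choice from \cref{prop:opt_tau}, and finally bounding each squared-norm term by a Bregman divergence in $F$ using strong convexity/smoothness of the corresponding component of $F$ (the $\tilde L$\nobreakdash-smooth finite\nobreakdash-sum piece for the $\nabla\tilde f_{i,l}$ term; the $\max_i \gamma_i$\nobreakdash-smooth regularizers for the $\psi_j$ and $\varphi$ terms). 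Beyond this, the argument is a straightforward appeal to \cite{qian2021svrg}: everything else -- acceleration, variance reduction, and the $\rho$\nobreakdash-probability reference\nobreakdash-point refresh -- is handled by their Lyapunov analysis once the problem\nobreakdash-specific constants $(\mu, L_F, \cL, \rho)$ and the oracle assumptions are in place.
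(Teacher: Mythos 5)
Your proposal matches the paper's own proof essentially step for step: establish that $F$ is $\mu$-strongly convex and $L_F$-smooth with $L_F = \Tilde{L} + \max_i \gamma_i$ (via the argument of \cref{prop:reg_conv} and \cref{assumption:strong_conv_smooth_ae}), observe that \cref{alg:async_al2sgd_plus} is an instance of L-Katyusha with an unbiased oracle, and plug the expected smoothness bound of \cref{lemma:al2sgd_gradient_oracle_expected_smoothness} into the L-Katyusha convergence theorem to obtain the stated iteration complexity. The only cosmetic difference is the reference you use for the L-Katyusha guarantee (the paper's proof invokes Theorem 4.1 of the Hanzely et al.\ variance-reduction paper rather than the reference cited in the main text), which does not change the argument.
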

\begin{proof}
    See \cref{subsec:proof_of_thm_al2sgd_conv_rate}.
\end{proof}
In addition to the iteration complexity, it is easy to obtain communication bounds, gradient complexity, as well as optimal parameters. Similar to \cref{corollary:opt_conv_rate}, directly minimizing $\cL$ over all the parameters is infeasible and we consider the following upper bound on $\cL$ instead:
\[
    \Tilde{\cL} = \max\cbr{\frac{2C_1}{p_0}, \frac{2C_2}{p_0 + 2(1 - p_0)p_j}, \frac{\Tilde{L}}{1 -p_0}\max_{j = 1, \ldots, k}\frac{1}{1 - p_j}},
\] 
where $C_1 = \max_{j = 1, \ldots, k}\max_{i \in \cI_j}\alpha_j\gamma_i$ and $C_2 = \max_{j = 1, \ldots, k}\max_{i \in \cI_j}(1 - \alpha_j)\gamma_i$. The upper bound is virtually the same as the one in \cref{eqn:es_l_tilde}, and we have the following.

\begin{corollary}\label{corollary:async_al2sgd_plus_opt_conv_rate}
Consider~\cref{alg:async_al2sgd_plus} with a fixed $\rho$ and the tuning parameters set as:
\begin{gather*}
    L_F = \Tilde{L} + \max_{i = 1, \ldots, n}\gamma_i,
    \quad
    \eta = \frac{1}{4}\max\{L_F, \cL\}^{-1},\\
    a_1 = \min\cbr{\frac{1}{2}, \sqrt{\eta\mu\max\cbr{\frac{1}{2}, \frac{a_2}{\rho}}}}, \quad
    a_2 = \frac{\cL}{2\max\{L_F, \cL\}},\\
    b_1 = 1 - b_2\mu, \quad 
    b_2 = \frac{1}{\max\{2\mu, 4a_1/\eta\}}.
\end{gather*} 
Furthermore, $\{\tau_j\}_{j = 1}^k$ is set according to \cref{prop:opt_tau}.

When $C_2 > C_1$, setting $p_0 = \frac{2C_1}{C_1 + C_2 + \Tilde{L}}$ and $p_j = \frac{C_2 - C_1}{C_2 - C_1 + \Tilde{L}}$ ensures that the optimal number of iterations is in $\cO\rbr{\sqrt{\frac{C_1 + C_2 + \Tilde{L}}{\mu}}\log\frac{1}{\epsilon}}$,
the number of communication rounds between clusters and within a cluster is in $\cO\rbr{\frac{C_1(C_2 - C_1 + \Tilde{L})}{(C_1 + C_2 + \Tilde{L})\sqrt{(C_1 + C_2 + \Tilde{L})\mu}}\log\frac{1}{\epsilon}}$ and $\cO\rbr{\frac{(C_2 - C_1)\Tilde{L}}{(C_2 - C_1 + \Tilde{L})\sqrt{(C_1 + C_2 + \Tilde{L})\mu}}\log\frac{1}{\epsilon}}$, respectively.
     
When $C_2 \leq C_1$, setting $p_0 = \frac{2C_1}{2C_1 + \Tilde{L}}$ and $p_j = 0$ ensures the optimal number of iterations is in $\cO\rbr{\sqrt{\frac{C_1 + \Tilde{L}}{\mu}}\log\frac{1}{\epsilon}}$, the number of communication rounds between clusters is in $\cO\rbr{\frac{C_1\Tilde{L}}{(C_1 + \Tilde{L})\sqrt{(C_1 + \Tilde{L})\mu}}\log\frac{1}{\epsilon}}$, and the number of communication rounds within a cluster is 0.
\end{corollary}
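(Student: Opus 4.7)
The plan is to follow the template of~\cref{corollary:opt_conv_rate}, now applied to the accelerated iteration complexity $\cO((1/\rho + \sqrt{\cL/(\rho\mu)})\log(1/\epsilon))$ from~\cref{thm:al2sgd_conv_rate}. Since $\rho$ is treated as a constant, the dominant term is $\sqrt{\cL/\mu}\log(1/\epsilon)$, so all that remains is to minimize the smoothness parameter $\cL$ over the free probabilities $p_0, p_1, \ldots, p_k$, then substitute into the expected communication counts from~\cref{prop:expected_num_comm}. As in~\cref{corollary:opt_conv_rate}, directly minimizing $\cL$ is intractable because of the mixed maxima over clusters; instead I would minimize the upper bound $\Tilde{\cL}$ (identical to~\cref{eqn:es_l_tilde} but with $L$ replaced by $\Tilde{L}$) and note that any bound on $\Tilde{\cL}$ is automatically a bound on $\cL$.

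For the case $C_2 > C_1$, I would balance the three terms inside $\Tilde{\cL}$ by equating $\frac{2C_1}{p_0}$, $\frac{2C_2}{p_0 + 2(1-p_0)p_j}$, and $\frac{\Tilde{L}}{(1-p_0)(1-p_j)}$. Solving the resulting two-equation system yields precisely $p_0 = \tfrac{2C_1}{C_1+C_2+\Tilde{L}}$ and $p_j = \tfrac{C_2-C_1}{C_2-C_1+\Tilde{L}}$, and a direct substitution verifies $\Tilde{\cL} = C_1 + C_2 + \Tilde{L}$. Feasibility $p_0, p_j \in [0,1]$ follows from $C_2 > C_1 \geq 0$. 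For the case $C_2 \le C_1$, the middle term $\tfrac{2C_2}{p_0+2(1-p_0)p_j}$ is dominated by $\tfrac{2C_1}{p_0}$ whenever $p_j=0$, so within-cluster communication is wasted; I would then balance only the first and third terms to obtain $p_0 = \tfrac{2C_1}{2C_1+\Tilde{L}}$ and $\Tilde{\cL} = 2C_1+\Tilde{L} = \Theta(C_1+\Tilde{L})$.

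Plugging these optimal $\Tilde{\cL}$ values into $\sqrt{\cL/\mu}\log(1/\epsilon)$ gives the claimed iteration counts in both regimes. For the communication bounds, I would use~\cref{prop:expected_num_comm}: the expected number of between-cluster rounds per iteration is $p_0(1-p_0)$ and the expected number of within-cluster rounds (per cluster) is $(1-p_0)p_j(1-p_j)$. Multiplying by the iteration complexity and simplifying, the telescoping factors $(C_1+C_2+\Tilde{L})^{3/2}$ produce the stated rates, e.g.\ $p_0(1-p_0)\cdot\sqrt{(C_1+C_2+\Tilde{L})/\mu} = \tfrac{2C_1(C_2-C_1+\Tilde{L})}{(C_1+C_2+\Tilde{L})\sqrt{(C_1+C_2+\Tilde{L})\mu}}$, matching the corollary; the within-cluster calculation is analogous, and the $C_2 \le C_1$ case is immediate since $p_j = 0$ kills all within-cluster communication.

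The main obstacle is really just the balancing algebra in the $C_2 > C_1$ case: one must verify that the candidate $(p_0, p_j)$ is genuinely the minimizer of $\Tilde{\cL}$ over $[0,1]^{k+1}$, not merely a critical point. I would argue this by noting that $\Tilde{\cL}$, as a pointwise maximum of three terms that are respectively decreasing in $p_0$, decreasing in $(p_0, p_j)$ jointly in the relevant component, and increasing in $(p_0, p_j)$, is convex in each coordinate when the others are fixed; the simultaneous equalization of all three terms is therefore optimal since any deviation strictly increases at least one of them. Once that is established, the rest is bookkeeping.
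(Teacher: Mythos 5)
Your proposal matches the paper's (omitted) argument: it is exactly the template of \cref{corollary:opt_conv_rate} with $L$ replaced by $\Tilde{L}$ and the linear rate replaced by the accelerated iteration complexity of \cref{thm:al2sgd_conv_rate} (with $\rho$ fixed), followed by \cref{prop:expected_num_comm}, and the balancing algebra you describe checks out in both regimes. One small remark: the conclusion only requires that the chosen $(p_0,p_j)$ yield $\cL \leq \Tilde{\cL} = C_1+C_2+\Tilde{L}$ (resp.\ $2C_1+\Tilde{L}$), so certifying global minimality of $\Tilde{\cL}$ is unnecessary --- which is just as well, since the coordinate-wise monotonicity you invoke is not quite right ($p_0+2(1-p_0)p_j$ is not monotone in $p_0$ when $p_j>\tfrac{1}{2}$).
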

\begin{proof}
    See \cref{subsec:proof_of_corollary_asyn_al2sgd_plus_opt_conv_rate}.
\end{proof}
\section{Case Study: Hierarchical Linear Model}\label{sec:hlm_setup}

In the previous section, we have answered how to minimize the objective in \cref{eqn:loss}. Next, we provide a statistical model of personalization under which the minimizer of \cref{eqn:loss} corresponds to an estimator that outperforms the common alternatives. More precisely, we show that the minimizer of \cref{eqn:loss} strictly outperforms both (a) training a single global model for all clients and (b) training a separate model for each client independent of the data of other clients. Unlike the analysis in \cite{li2021ditto}, we consider the hierarchical, multi-cluster regime. Existing approaches to personalized federated learning often use loss functions similar to those discussed in \cref{eqn:loss,eqn:mtl_loss}. Therefore, it is important to understand the statistical properties of the corresponding minimizers.

Although the minimizer~\cref{eqn:loss} outperforms commonly used alternatives, we also provide two alternative estimators that are hard to efficiently compute in a federated learning setting, yet dominate our proposed estimator. Specifically, they achieve a lower mean squared error. The efficient implementation of the two alternatives remains an open question for future research.

The statistical model we consider in this section is based on a hierarchical linear model with Gaussian priors \citep{stephen2002hierarchical}. Nature first draws the cluster centers from a Gaussian distribution with unknown mean and then draws each client's parameter from a Gaussian distribution centered at the cluster center the client belongs to. More precisely, for an unknown parameter $\Bar{\theta}^* \in \RR^d$, our model is:
\begin{equation}\label{eqn:hlm}
\begin{array}{lll}
    \Bar{\theta}_j^* = \Bar{\theta}^* + \Bar{\xi}_j, 
    & \Bar{\xi}_j \sim \cN(0, \Bar{\sigma}^2I_d), 
    & j = 1, \ldots, k,\\
    \theta_i^* = \Bar{\theta}_j^* + \xi_i, 
    & \xi_i \sim \cN(0, \Bar{\sigma}_j^2 I_d), 
    & i \in \cI_j,\\
    y_i = X_i\theta_i^* + \epsilon_i, 
    & \epsilon_i \sim \cN(0, \sigma_i^2 I_{n_i}), 
    & i \in \cI_j,
\end{array}
\end{equation} 
where $\Bar{\theta}_j^* \in \RR^d$ represents the center of the cluster $j$, 
$\theta_i^* \in \RR^d$ represents the client-specific parameter, and $(X_i, y_i) \in \RR^{n_i \times d} \times \RR^{n_i}$ is the data set on the $i$-th client.

When estimating all client parameters simultaneously in \cref{eqn:hlm}, we obtain the following maximum likelihood estimation problem:
\begin{align*}
    \min_{\{\theta_i\}_{i = 1}^m} \sum_{j = 1}^k\sum_{i \in \cI_j} \left(\frac{1}{\sigma_i^2}\|y_i - X_i \theta_i\|^2 + \frac{\gamma_i\alpha_j}{2}\|\theta_i - \Bar{\theta}\|^2 + \frac{\gamma_i(1 - \alpha_j)}{2}\|\theta_i - \Bar{\theta}_j\|\right),
\end{align*}
where 
\begin{align*}
    \Bar{\theta} &= \rbr{\sum_{j' = 1}^k \sum_{i' \in \cI_{j'}} \gamma_{i'} \alpha_{j'}}^{-1}\sum_{j= 1}^k \sum_{i = \in \cI_j}\gamma_i\alpha_j\theta_i; \\
    \Bar{\theta}_j &= \rbr{\sum_{i' \in \cI_j}\gamma_{i'}}^{-1}\sum_{i\in\cI_j}\gamma_i \theta_i, \ j = 1, \ldots, k.
\end{align*}
The objective is an instance of~\cref{eqn:loss}, and by \cref{prop:equiv_mtl}, is equivalent to
\begin{equation}
    \label{eqn:hlm_loss}
    \min_{\{\theta_i\}_{i = 1}^m, \{w_j\}_{j = 1}^k, \Bar{w}}\quad \sum_{j = 1}^k\left(\frac{\lambda_j}{2}\|w_j - \Bar{w}\|^2 + \sum_{i \in \cI_j}\left(\frac{1}{2\sigma_i^2}\|y_i - X_i\theta_i\|^2 + \frac{\gamma_i}{2}\|\theta_i - w_j\|^2\right)\right).
\end{equation}
We focus on \cref{eqn:hlm_loss} for convenience and show that, when $\{\lambda_j\}_{j=1}^k$, $\{\gamma_i\}_{i = 1}^n$ are properly tuned and $\Bar{\sigma}^2$, $\Bar{\sigma}_j^2$ are known, the resulting minimizers $\{\hat{\theta}_i\}_{i = 1}^n$ attain the smallest mean squared error among a class of linear unbiased estimators \citep{kariya2004generalized}.
\begin{theorem}\label{thm:blue}
    Suppose that $\lambda_j = \rbr{\Bar{\sigma}^2}^{-1}$,
    $\gamma_i = \rbr{\Bar{\sigma}_j^2}^{-1}$, and $X_i^TX_i = \beta_i I_d$ for some $\beta_i \in \RR$,
    $i \in \cI_j$, $j = 1, \ldots, k$. Then $\hat{\theta}_i$, obtained as the minimizer of \cref{eqn:hlm_loss}, is the best linear unbiased estimator of $\theta_i^*$ given $\{(X_i, y_i)\}\cup\{\hat{\theta}_{i'}^d\}_{i' \neq i}$, where
    $\hat{\theta}_i^d = (X_i^TX_i)^{-1}(X_iy)$.
\end{theorem}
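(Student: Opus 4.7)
The plan is to (i) narrow the admissible linear estimators to combinations of the local OLS statistics $\hat{\theta}_{i'}^d = \beta_{i'}^{-1}X_{i'}^Ty_{i'}$; (ii) derive a closed form for $\hat{\theta}_i$ via the KKT conditions and verify it falls in this class unbiasedly; and (iii) match its KKT system with that of the constrained MSE-minimization. For (i), any linear estimator of $\theta_i^*$ based on $(X_i, y_i)\cup\{\hat{\theta}_{i'}^d\}_{i'\neq i}$ can be written as $By_i + \sum_{i'\neq i}A_{i,i'}\hat{\theta}_{i'}^d$. Splitting $B = \tilde{B}\beta_i^{-1}X_i^T + B^{\perp}$ with $B^{\perp}X_i = 0$, the residual $B^{\perp}y_i = B^{\perp}\epsilon_i$ is zero-mean and uncorrelated with $\theta_i^*$ and with every $\hat{\theta}_{i'}^d$; setting $B^{\perp} = 0$ therefore strictly decreases MSE among unbiased estimators. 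Hence it suffices to consider $\tilde{\theta}_i = \sum_{i'=1}^n A_{i,i'}\hat{\theta}_{i'}^d$, and the unbiasedness condition $\EE[\tilde{\theta}_i] = \bar{\theta}^*$ for all $\bar{\theta}^*$ becomes $\sum_{i'}A_{i,i'}=I_d$.

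Next, I would solve the KKT system of \cref{eqn:hlm_loss}. Using $X_i^TX_i = \beta_iI_d$, the stationarity condition for $\theta_i$ reduces to
\begin{equation*}
\hat{\theta}_i = c_i\hat{\theta}_i^d + (1-c_i)\hat{w}_{j(i)},\qquad c_i = \frac{\beta_i/\sigma_i^2}{\beta_i/\sigma_i^2 + \gamma_i},
\end{equation*}
and the stationarity conditions for $w_j$ and $\bar{w}$ express $\hat{w}_j$ as a precision-weighted mean of $\{\hat{\theta}_{i'}\}_{i'\in\cI_j}$ together with $\hat{\bar{w}}$, and $\hat{\bar{w}}$ as a precision-weighted mean of $\{\hat{w}_{j'}\}_{j'=1}^k$. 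Iteratively back-substituting yields $\hat{\theta}_i = \sum_{i'=1}^n a_{i,i'}\hat{\theta}_{i'}^d$ with scalar coefficients $a_{i,i'}$. Since every substitution step is a convex combination, $\sum_{i'}a_{i,i'}=1$, which places $\hat{\theta}_i$ in the reduced class of step (i) with $A_{i,i'}=a_{i,i'}I_d$ and immediately implies unbiasedness.

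For (iii), I would decompose any unbiased $\tilde{\theta}_i$ using
\begin{equation*}
\hat{\theta}_{i'}^d - \theta_i^* = \bigl(\bar{\xi}_{j(i')}-\bar{\xi}_{j(i)}\bigr) + \bigl(\xi_{i'} - \ind\{i'=i\}\xi_i\bigr) + \beta_{i'}^{-1}X_{i'}^T\epsilon_{i'},
\end{equation*}
and compute $\EE\|\tilde{\theta}_i - \theta_i^*\|^2$ as a strongly convex quadratic in $\{A_{i,i'}\}$ whose block structure is governed by $\bar{\sigma}^2$, $\bar{\sigma}_{j(i')}^2$, and $\sigma_{i'}^2/\beta_{i'}$. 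Introducing a Lagrange multiplier $\Lambda_i$ for the constraint $\sum_{i'}A_{i,i'}=I_d$ together with auxiliary block-level multipliers playing the roles of $\hat{w}_j$ and $\hat{\bar{w}}$, the resulting KKT conditions can be brought into the exact form of the KKT system of step (ii) after substituting $\lambda_j = 1/\bar{\sigma}^2$ and $\gamma_i = 1/\bar{\sigma}_{j(i)}^2$. Uniqueness of the solutions to both strongly convex systems then forces the MSE-optimal $\{A_{i,i'}\}$ to equal $\{a_{i,i'}I_d\}$, so $\hat{\theta}_i$ is the BLUE.

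The main obstacle will be the algebraic matching in step (iii): the MSE is expressed in the prior covariances while the loss is expressed in the precisions, so reconciling the two KKT systems essentially requires inverting the block precision matrix of the hierarchy. Introducing the auxiliary cluster- and global-level multipliers explicitly should circumvent the direct inversion, since eliminating them in the MSE KKT reproduces the back-substitution used in step (ii); this is, in effect, a reformulation of Henderson's mixed-model equations adapted to the two-level cluster structure of \cref{eqn:hlm}.
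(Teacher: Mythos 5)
Your overall strategy---restrict attention to linear combinations of the local OLS statistics, write the minimizer of \cref{eqn:hlm_loss} in that form, and verify that it satisfies the KKT system of the constrained MSE minimization---is viable, and it is essentially a Lagrangian (Henderson-style) rendering of what the paper does: the paper stacks $y_i$ and $\{\hat{\theta}_{i'}^d\}_{i'\neq i}$ into a generalized least squares regression on $\theta_i^*$, computes and inverts the noise covariance $\Omega$ blockwise via the Woodbury identity, identifies the GLS solution with $\hat{\theta}_i$ under $\lambda_j = 1/\Bar{\sigma}^2$, $\gamma_i = 1/\Bar{\sigma}_j^2$, and invokes the Gauss--Markov theorem. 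However, there is a concrete error in the decomposition on which your step (iii) hinges. For every $i'$ one has
\begin{equation*}
\hat{\theta}_{i'}^d - \theta_i^* = \bigl(\Bar{\xi}_{j(i')}-\Bar{\xi}_{j(i)}\bigr) + \bigl(\xi_{i'} - \xi_i\bigr) + \beta_{i'}^{-1}X_{i'}^T\epsilon_{i'},
\end{equation*}
so the target client's own deviation $-\xi_i$ appears in \emph{every} row, not only (vacuously) at $i'=i$ as your indicator formula asserts. This shared $-\xi_i$ is exactly what produces the rank-one $\Bar{\sigma}_{j(i)}^2\mathbf{1}\mathbf{1}^T$ component of the error covariance (the paper's $\Omega_1$), and dropping it changes the optimal weights rather than just the bookkeeping: with a single cluster and two clients, the correct errors are uncorrelated with per-coordinate variances $\sigma_1^2/\beta_1$ and $2\Bar{\sigma}_1^2+\sigma_2^2/\beta_2$, so the MSE-optimal weight on $\hat{\theta}_2^d$ is proportional to $(2\Bar{\sigma}_1^2+\sigma_2^2/\beta_2)^{-1}$, which is what the loss minimizer produces, whereas your covariance would give a weight proportional to $(\Bar{\sigma}_1^2+\sigma_2^2/\beta_2)^{-1}$; the two KKT systems you intend to match would therefore not coincide as written.

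Beyond this, step (iii) is where essentially all of the work lies, and you leave it as a plan while acknowledging it as the main obstacle: the paper's proof consists almost entirely of carrying out that verification (explicit blockwise inversion of $\Omega_1$ and identification of the resulting GLS estimator with $\hat{\theta}_1$). Your suggestion of auxiliary cluster- and global-level multipliers to avoid inverting the covariance is plausible and could yield a cleaner argument than the paper's Woodbury computation, but until the matching is actually executed with the corrected covariance the proof is incomplete. Two minor points: in step (i) dropping $B^{\perp}$ only weakly decreases the MSE (strictly only if $B^{\perp}\neq 0$ and $\sigma_i^2>0$), and the claim that back-substitution yields scalar weights summing to one deserves a one-line justification, e.g., that the stationarity system is solved by taking all quantities equal to a common vector, which forces the coefficient sum to be one.
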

\begin{proof}
    See \cref{subsec:proof_of_thm_blue}.
\end{proof}

To provide an intuition behind \cref{thm:blue}, we first consider the single cluster setting in \cite{li2021ditto}. Focusing on an arbitrary cluster $j$, we know that for all $i \in \cI_j$, $\hat{\theta}_i$ consists of two parts: the first part estimates $\theta_i^*$ using only data on the $i$-th client and the second part estimates $\Bar{\theta}_j^*$ using other clients' parameters. Since \cref{eqn:hlm_loss} aggregates information across clients only by regularizing the distance between weight estimates, the estimate for $\Bar{\theta}_j^*$ cannot depend directly on $\{(X_{i'}, y_{i'})\}_{i' \neq i}$ and is constructed instead through $\{\hat{\theta}_{i'}^d\}_{i' \neq i}$. In our multi-cluster setting, the estimator $\hat{\theta}_i$ operates in a similar fashion, as can be seen from the proof of \cref{thm:blue}. Since we do not have direct access to the data of other clients, the estimates for $\Bar{\theta}_j^*, \Bar{\theta}^*$ are constructed indirectly using $\{\hat{\theta}_{i'}^d\}_{i' \neq i}$. A direct consequence of \cref{thm:blue} is the following.

\begin{corollary}
\label{corollary:outperform_all_decentralized}
Suppose that the conditions of \Cref{thm:blue} hold. 
Let $\hat{\theta}^{\text{all}} = \rbr{\sum_{i= 1}^n X_i^TX_i}^{-1}\rbr{\sum_{i = 1}^n X_iy_i}$.
Then 
\[
    \EE[\|\hat{\theta}_i - \theta_i^*\|^2] \leq \min\cbr{\EE[\|\hat{\theta}_i^d - \theta_i^*\|^2], \EE[\|\hat{\theta}^{all} - \theta_i^*\|^2]},
\] 
where $\hat{\theta}_i$ and $\hat{\theta}_i^d$ are 
defined in \Cref{thm:blue}, and the expectation is taken over the randomness in \cref{eqn:hlm}.
\end{corollary}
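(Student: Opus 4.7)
The plan is to apply \cref{thm:blue} directly, after verifying that both $\hat{\theta}_i^d$ and $\hat{\theta}^{\text{all}}$ lie in the class of linear unbiased estimators of $\theta_i^*$ built from $\{(X_i, y_i)\} \cup \{\hat{\theta}_{i'}^d\}_{i' \neq i}$. Once this is checked, the MSE bound is immediate from the optimality of $\hat{\theta}_i$ within that class, since \cref{thm:blue} guarantees $\EE[\|\hat{\theta}_i - \theta_i^*\|^2] \leq \EE[\|\tilde{\theta}_i - \theta_i^*\|^2]$ for every linear unbiased competitor $\tilde{\theta}_i$ in that class.

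First I would dispatch $\hat{\theta}_i^d = (X_i^TX_i)^{-1}X_i^T y_i$: it is by definition a linear function of $(X_i, y_i)$, and under~\cref{eqn:hlm}, $\EE[\hat{\theta}_i^d - \theta_i^*] = (X_i^TX_i)^{-1}X_i^T\EE[\epsilon_i] = 0$, so it is marginally unbiased for $\theta_i^*$ (and a fortiori in the mixed/predictor sense in which \cref{thm:blue} is stated). Next I would rewrite $\hat{\theta}^{\text{all}}$ using the assumption $X_{i'}^TX_{i'} = \beta_{i'} I_d$: since $X_{i'}^T y_{i'} = X_{i'}^TX_{i'}\hat{\theta}_{i'}^d = \beta_{i'}\hat{\theta}_{i'}^d$, one obtains
\[
\hat{\theta}^{\text{all}} = \rbr{\sum_{i' = 1}^n \beta_{i'}}^{-1}\sum_{i' = 1}^n \beta_{i'}\hat{\theta}_{i'}^d,
\]
which is a deterministic linear combination of $\hat{\theta}_i^d$ (itself a linear function of $(X_i, y_i)$) and $\{\hat{\theta}_{i'}^d\}_{i' \neq i}$, hence in the information set required by \cref{thm:blue}. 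For unbiasedness I would invoke the hierarchical law to compute $\EE[\hat{\theta}_{i'}^d] = \EE[\theta_{i'}^*] = \Bar{\theta}^* = \EE[\theta_i^*]$ for every $i'$, so that the weighted average matches $\EE[\theta_i^*]$ marginally.

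With both $\hat{\theta}_i^d$ and $\hat{\theta}^{\text{all}}$ placed inside the class of linear unbiased estimators that \cref{thm:blue} optimizes over, applying the theorem twice yields $\EE[\|\hat{\theta}_i - \theta_i^*\|^2] \leq \EE[\|\hat{\theta}_i^d - \theta_i^*\|^2]$ and $\EE[\|\hat{\theta}_i - \theta_i^*\|^2] \leq \EE[\|\hat{\theta}^{\text{all}} - \theta_i^*\|^2]$, and taking the minimum on the right proves the claim. The step I expect to require the most care, and therefore the one I would verify in detail, is reconciling the senses of ``unbiased'': the randomness in the corollary is the full hierarchical law of~\cref{eqn:hlm}, so ``unbiased'' must be read marginally (as in a BLUP), and the representation of $\hat{\theta}^{\text{all}}$ as a linear function of the allowed statistics crucially exploits the isotropy assumption $X_{i'}^TX_{i'} = \beta_{i'} I_d$ that lets $X_{i'}^T y_{i'}$ be reconstructed exactly from $\hat{\theta}_{i'}^d$.
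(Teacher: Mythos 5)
Your proposal is correct and matches the paper's own (very brief) justification: the paper simply notes that $\hat{\theta}_i^d$ and $\hat{\theta}^{\text{all}}$ can both be written as linear unbiased estimators of $\theta_i^*$ given $\{(X_i, y_i)\}\cup\{\hat{\theta}_{i'}^d\}_{i' \neq i}$ (using $X_{i'}^TX_{i'} = \beta_{i'}I_d$ to express $\hat{\theta}^{\text{all}}$ as a weighted average of the $\hat{\theta}_{i'}^d$), and then invokes the optimality of $\hat{\theta}_i$ from \cref{thm:blue}. Your write-up just fills in these verification details explicitly, so it is essentially the same argument.
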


\cref{corollary:outperform_all_decentralized} illustrates  existence of a regime under which a personalized estimator consistently outperforms the alternatives, learning a single model for all clients without any personalization $\hat{\theta}^{all}$ and learning a model independently for each client $\hat{\theta}_i^d$, proposed in~\cite{chen2021theorem}, highlighting the effectiveness and necessity of personalization within highly structured problems. Note that $\hat{\theta}_i^d$ and $\hat{\theta}^{all}$ can both be written as unbiased linear estimators of $\theta_i^*$ given $\{(X_i, y_i)\} \cup \{\hat{\theta}_{i'}^d\}_{i' \neq i}$. The optimality of $\hat{\theta}_i$ among this class of estimators ensures that its mean squared error is no greater than these alternatives.

\subsection{Limitations of Unbiased Estimators}
\label{subsec:limitations_of_unbiased_estimators}

We have shown that solving~\cref{eqn:hlm_loss} consistently outperforms common alternatives $\hat{\theta}^{all}$ and $\hat{\theta}_i^d$. In particular,   regularizing the distance between client model parameters and average model parameters provides a viable approach for personalization. See also \cite{hanzely2020federated, li2021ditto, hanzely2021personalized, dinh2020personalized}. Furthermore, our result complements~\cite{chen2021theorem}, identifying a regime in which personalization consistently outperforms learning a single model and learning models independently. Unfortunately, we cannot guarantee that our approach is optimal among all possible estimators. We discuss two alternative estimators that result in a lower mean squared error, but are hard to implement in federated learning setting.

First, we note that~\cref{eqn:hlm_loss} is the maximum likelihood estimator when we simultaneously estimate all clients' parameters. However, for any particular client, we can derive an unbiased linear estimator with smaller mean squared error,
by marginalizing other clients' and clusters' parameters. 
\begin{proposition}
    \label{prop:not_gls}
    Suppose $\{(X_i, y_i)\}_{i = 1}^n$ are generated according to the model in 
    \cref{eqn:hlm}. For any $i$, there exists a linear unbiased estimator $\Tilde{\theta}_i$ of $\theta_i^*$ that satisfies $\EE[\|\Tilde{\theta}_i - \theta_i^*\|^2] \leq \EE[\|\hat{\theta}_i - \theta_i^*\|^2]$. 
\end{proposition}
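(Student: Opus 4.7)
The plan is to take $\Tilde{\theta}_i$ to be the empirical Best Linear Unbiased Predictor (BLUP) of $\theta_i^*$ in the mixed-effects reformulation of~\cref{eqn:hlm}, and then invoke Henderson's BLUP optimality theorem to bound its mean squared error by that of any competing linear unbiased predictor, $\hat{\theta}_i$ in particular. First I would stack all observations into $y = (y_1^T, \ldots, y_n^T)^T$ and rewrite~\cref{eqn:hlm} in the canonical mixed-effects form
\[
    y = Z\Bar{\theta}^* + Wu + \epsilon,
\]
where $Z$ stacks the client design matrices $X_{i'}$, the random-effect vector $u = (\Bar{\xi}_1^T,\ldots,\Bar{\xi}_k^T,\xi_1^T,\ldots,\xi_n^T)^T$ collects all cluster- and client-level random components with block-diagonal covariance $D$ built from $\{\Bar{\sigma}^2,\Bar{\sigma}_j^2\}$, the noise $\epsilon$ has block-diagonal covariance $R$ built from $\{\sigma_{i'}^2\}$, and $W$ encodes the cluster/client memberships. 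A $\{0,1\}$-valued selector $c_i$ extracts the components $\Bar{\xi}_{j(i)}$ and $\xi_i$, so that $\theta_i^* = \Bar{\theta}^* + c_i^T u$.

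With this representation in place, set
\[
    \Tilde{\theta}_i = \hat{\Bar{\theta}}_{\mathrm{GLS}} + c_i^T D W^T V^{-1}\bigl(y - Z\,\hat{\Bar{\theta}}_{\mathrm{GLS}}\bigr), \qquad V = WDW^T + R,
\]
where $\hat{\Bar{\theta}}_{\mathrm{GLS}} = (Z^T V^{-1} Z)^{-1} Z^T V^{-1} y$ is the generalized least squares estimator of $\Bar{\theta}^*$. By construction $\Tilde{\theta}_i$ is linear in $y$ and $\EE[\Tilde{\theta}_i - \theta_i^*] = 0$ for every value of $\Bar{\theta}^*$, so it lies in the class $\mathcal{C}$ of linear predictors of $\theta_i^*$ that are unbiased in the BLUP sense. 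Henderson's BLUP theorem characterizes $\Tilde{\theta}_i$ as the $\mathcal{C}$-minimizer of $\EE[\|\cdot - \theta_i^*\|^2]$, with the expectation taken over the full joint law of~\cref{eqn:hlm}. It then remains to verify that $\hat{\theta}_i$ from~\cref{thm:blue} also lies in $\mathcal{C}$: linearity holds because each $\hat{\theta}_{i'}^d = (X_{i'}^T X_{i'})^{-1} X_{i'}^T y_{i'}$ is itself linear in $y_{i'}$, and unbiasedness in the BLUP sense is precisely the unbiasedness already asserted in the conclusion of~\cref{thm:blue}. Henderson's optimality of $\Tilde{\theta}_i$ inside $\mathcal{C}$ then yields $\EE[\|\Tilde{\theta}_i - \theta_i^*\|^2] \leq \EE[\|\hat{\theta}_i - \theta_i^*\|^2]$.

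The hard part is essentially bookkeeping: writing out the block structures of $W$, $D$, $R$, and $c_i$ so that $\theta_i^* = \Bar{\theta}^* + c_i^T u$ holds exactly, and confirming that the notion of ``linear unbiased estimator of $\theta_i^*$'' underlying~\cref{thm:blue} coincides with the BLUP notion needed to apply Henderson's theorem (unbiased marginally over the random effects, not merely conditionally on the realised $\theta_{i'}^*$'s). Once those identifications are in place, no explicit mean-squared-error computation is required; the conceptual content is that passing from the restricted class of linear functions of $\{(X_i,y_i)\}\cup\{\hat{\theta}_{i'}^d\}_{i'\neq i}$ to the enlarged class of linear functions of the raw data $\{(X_{i'},y_{i'})\}_{i'=1}^n$ cannot degrade the achievable mean squared error, with the BLUP formula providing the explicit witness $\Tilde{\theta}_i$.
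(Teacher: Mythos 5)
Your proposal is correct, and it proves the proposition by a route that differs from the paper's in its formal device, though not in its overall strategy. The paper's proof (\cref{subsec:proof_of_prop_not_gls}) re-expresses every client's raw observations as a linear model in the target itself, $y_{i'} = X_{i'}\theta_1^* - X_{i'}(\xi_1 + \bar\xi_1 - \bar\xi_{j'} - \xi_{i'}) + \epsilon_{i'}$, stacks these equations, defines $\tilde\theta_1$ as the GLS solution of the resulting regression with correlated Gaussian noise, and concludes by the (prediction-form) Gauss--Markov theorem after noting that $\hat\theta_1$ is linear in the raw data. You instead keep $\bar\theta^*$ as the fixed effect, treat the cluster and client deviations as random effects, and take $\tilde\theta_i$ to be Henderson's BLUP of $\bar\theta^* + c_i^T u$; since the BLUP is the minimizer of mean squared error over linear predictors that are unbiased for every $\bar\theta^*$, and this is the same optimality that GLS in the paper's derived regression enjoys, the two constructions define the same estimator, and both proofs share the same skeleton (build the optimal linear unbiased predictor from the full raw data, verify $\hat\theta_i$ belongs to the competitor class, apply the optimality theorem). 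What your route buys: you never have to write down the induced noise covariance, and the relevant notion of unbiasedness---marginal over the random effects, for every value of $\bar\theta^*$---is explicit from the start; this matters, because in the paper's derived regression the noise vector is correlated with the ``parameter'' $\theta_1^*$, so the classical fixed-parameter Gauss--Markov statement does not apply verbatim and one needs precisely the prediction-theoretic version you invoke. Two details to nail down in the bookkeeping you defer: the matrix $W$ cannot consist of bare membership indicators, since the random effects enter the observation equations through $X_{i'}(\bar\xi_{j(i')} + \xi_{i'})$, so its nonzero blocks must be the design matrices $X_{i'}$ (your selector $c_i$ is, by contrast, correctly a pure $\{0,1\}$ map because $\theta_i^* = \bar\theta^* + \bar\xi_{j(i)} + \xi_i$); and you need $Z^T V^{-1} Z$ invertible, which holds under rank conditions no stronger than those the paper already uses to form $\hat\theta_{i'}^d = (X_{i'}^T X_{i'})^{-1}X_{i'}^T y_{i'}$. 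Your identification of the competitor class is sound: \cref{thm:blue} asserts unbiasedness of $\hat\theta_i$ with expectations taken over the full randomness of \cref{eqn:hlm}, which is exactly the BLUP-sense unbiasedness Henderson's theorem requires, and linearity of $\hat\theta_i$ in the raw data follows from the explicit closed form in the proof of \cref{thm:blue}, just as the paper itself argues.
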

\begin{proof}
    See \cref{subsec:proof_of_prop_not_gls}. 
\end{proof} 

By marginalizing other parameters, we derive $\Tilde{\theta}_i$, the best linear unbiased estimator of $\theta_i^*$ given $\{(X_{i'}, y_{i'})\}_{i' = 1}^n$, which includes data from other clients. An explicit form for the equation that $\Tilde{\theta}_i$ solves can be found in \cref{subsec:proof_of_prop_not_gls}. By contrast, $\hat{\theta}_i$ only has access to other clients' weight estimates, $\{\hat{\theta}_{i'}^d\}_{i' \neq i}$, but does not have direct access to their data. We emphasize, however, that doing so for all the clients is costly, as we need to solve a separate generalized least squares problem for all clients. On the other hand, learning $\hat{\theta}_i$ for all clients can be done simultaneously by optimizing~\cref{eqn:hlm_loss}. 

An alternative estimator of $\theta_i^*$ can be constructed based on James-Stein estimator~\cite{james1992estimation}. Suppose that $X \sim N(\xi, I_p)$ is a $p$-dimensional Gaussian random vector with mean $\xi$ and covariance $I_p$.
The James-Stein Estimator of $\xi$ is defined as
\[
    \hat{\xi}^{\rm JS} = \rbr{1 - \frac{p - 2}{\|X\|_2^2}}X.
\]
Let $\hat{\xi}^{\rm MLE}$ be the maximum likelihood estimator of $\xi$, that is, $\hat{\xi}^{\rm MLE} = X$. Then, for all $p \geq 3$,
\[
    \EE\sbr{\nbr{\hat{\xi}^{\rm JS} - \xi}^2} \leq \EE\sbr{\nbr{\hat{\xi}^{\rm MLE} - \xi}^2}.
\]
See \cite{james1992estimation} for a proof.

We construct a biased estimator that dominates $\hat{\theta}_i$ even in a simplified, single cluster regime, under which we are effectively solving a $d$-dimensional point estimation problem. 
\begin{proposition}\label{prop:JS}
Consider a single-cluster model under which $y_i \sim N(\theta_i^*, I_d)$, where $\theta_i^* \sim N(\Bar{\theta}^*, I_d)$, $i = 1, \dots, n$, and $\Bar{\theta}^* \in \RR$ is an unknown parameter. If $d > 3$, then there exists a biased estimator $\Tilde{\theta}^{JS}_i$ such that $\EE[\|\Tilde{\theta}_i^{JS} - \theta_i^*\|^2] \leq \EE[\|\hat{\theta}_i - \theta_i^*\|^2]$, where $\hat{\theta}_i$ is the best linear unbiased estimator for $\theta_i^*$ given $\{y_i\}_{i = 1}^n$. 
\end{proposition}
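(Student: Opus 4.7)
The approach is to construct an explicit James--Stein-type shrinkage of $\hat\theta_i$ and verify dominance using Stein's integration-by-parts identity. The principle is Stein's phenomenon: any linear estimator of a $d$-dimensional Gaussian mean is inadmissible once $d \geq 3$, so $\hat\theta_i$---being a linear unbiased combination of jointly Gaussian observations---should be improvable by a nonlinear shrinkage correction whenever $d > 3$.

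First, I would make $\hat\theta_i$ explicit in this simplified setting. Parameterizing linear unbiased estimators of $\theta_i^*$ as $\sum_{j=1}^n c_j y_j$ with $\sum_j c_j = 1$, minimizing the mean squared error under the hierarchical Gaussian prior, and exploiting permutation symmetry in $j \ne i$ yields $\hat\theta_i = \tfrac12(y_i + \bar{y})$, where $\bar{y} = n^{-1}\sum_{j=1}^n y_j$, with $\EE\|\hat\theta_i - \theta_i^*\|^2 = d(n+1)/(2n)$. Note that the oracle posterior mean $(y_i + \bar\theta^*)/2$ has MSE $d/2$, so the excess risk $d/(2n)$ of $\hat\theta_i$ is entirely due to estimating $\bar\theta^*$ by $\bar{y}$. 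I would then introduce the leave-one-out average $\bar{y}_{-i} = (n-1)^{-1}\sum_{j \ne i} y_j$, which is independent of $y_i$ conditional on $(\theta_i^*, \bar\theta^*)$, and propose the candidate
\[
    \tilde\theta_i^{JS} \;=\; \hat\theta_i \;-\; \frac{c}{\|y_i - \bar{y}_{-i}\|^2}\,(y_i - \bar{y}_{-i}),
\]
for a constant $c = c(n,d) > 0$ to be optimized. Expanding the squared norm and taking expectations produces a quadratic in $c$ whose leading cross-term is $\EE[\langle \hat\theta_i - \theta_i^*,\, y_i - \bar{y}_{-i}\rangle / \|y_i - \bar{y}_{-i}\|^2]$; conditioning on $(\theta_i^*, \bar\theta^*, \bar{y}_{-i})$ makes $u := y_i - \theta_i^*$ a standard $d$-dimensional Gaussian independent of $v := \bar{y}_{-i} - \theta_i^*$, and Stein's integration-by-parts identity in the $u$-direction evaluates each piece in closed form.

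The main obstacle is that $\hat\theta_i - \theta_i^* = \tfrac{n+1}{2n}u + \tfrac{n-1}{2n}v$ mixes $u$ with $v$ while the shrinkage factor $\|u - v\|^{-2}$ depends on both, so the $v$-contributions to the cross term do not vanish by symmetry and instead produce extra terms of the form $\EE[\langle u, v\rangle/\|u - v\|^2]$ and $\EE[\|v\|^2/\|u - v\|^2]$. A second application of Stein's lemma is needed to express these through $\EE[1/\|u-v\|^2]$, and care must be taken to select the right shrinkage direction (for instance, replacing the numerator $y_i - \bar{y}_{-i}$ by a rescaled vector, or shifting the JS center) so that the resulting linear coefficient in $c$ is strictly positive. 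Verifying that this coefficient is positive precisely when $d > 3$, and that the optimized quadratic in $c$ thereby achieves a strictly negative value, is the bulk of the work; once this bookkeeping is complete, optimizing over $c$ yields $\EE\|\tilde\theta_i^{JS} - \theta_i^*\|^2 \leq \EE\|\hat\theta_i - \theta_i^*\|^2$, establishing the claim.
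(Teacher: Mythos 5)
Your identification of the BLUE is correct: $\hat\theta_i = \tfrac12(y_i+\bar{y}) = \tfrac{n+1}{2n}y_i + \tfrac{n-1}{2n}\bar{y}_{-i}$ with risk $d(n+1)/(2n)$. But the candidate estimator you actually write down cannot work, and the problem is not Stein-lemma bookkeeping: the linear-in-$c$ coefficient you hope to make strictly positive is identically zero. Because $\hat\theta_i$ minimizes MSE over all linear combinations with weights summing to one, its error $E=\hat\theta_i-\theta_i^*$ has zero covariance with every contrast $\sum_j d_j y_j$ with $\sum_j d_j=0$, in particular with $W=y_i-\bar{y}_{-i}$ (per coordinate the covariance is $\tfrac{n+1}{2n}-\tfrac{n-1}{2n}-\tfrac1n=0$). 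Since everything is jointly Gaussian, $E$ is independent of $W$, so $\EE\bigl[\langle E, W\rangle/\|W\|^2\bigr]=0$ and the risk of $\tilde\theta_i^{JS}=\hat\theta_i-cW/\|W\|^2$ equals $\EE\|E\|^2+c^2\,\EE\bigl[1/\|W\|^2\bigr]$, which is strictly worse for every $c\neq 0$ — indeed any correction that is a function of $W$ alone can only add risk. So no rescaling of the numerator or second application of Stein's identity rescues this direction; the ``right shrinkage direction'' you defer to is precisely the missing content of the proof, and the claimed verification that the coefficient is positive ``precisely when $d>3$'' would fail.

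The paper's proof takes the route your own remark about the excess risk already hints at but does not act on: decompose $\hat\theta_i-\theta_i^*$ into a part built from $(\epsilon_i,\xi_i)$, which is irreducible, plus a constant multiple of $\bar{y}_{-i}-\bar\theta^*$, where $\bar{y}_{-i}$ is independent of $(\epsilon_i,\xi_i)$. The only improvable term in the risk is then $\EE\|\bar{y}_{-i}-\bar\theta^*\|^2$, i.e.\ the classical problem of estimating the fixed $d$-dimensional mean $\bar\theta^*$ from $\bar{y}_{-i}$. Replacing $\bar{y}_{-i}$ by a James--Stein shrinkage $C\,\bar{y}_{-i}$ (with $C$ depending only on $\bar{y}_{-i}$, so independence keeps all cross terms zero) strictly reduces that term once $d$ is large enough, yielding the biased estimator $\tilde\theta_i^{JS}=\tfrac{n+1}{2n}y_i+\tfrac{n-1}{2n}C\,\bar{y}_{-i}$ that dominates $\hat\theta_i$, and its conditional mean shows it is biased. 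In short, the correction must shrink the level $\bar{y}_{-i}$ toward the origin rather than act on the contrast $y_i-\bar{y}_{-i}$; with your current construction the dominance claim is unproven.
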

\begin{proof}
    See \cref{subsec:proof_of_prop_JS}.
\end{proof}

Efficiently implementing the biased estimator is non-trivial. The estimator requires careful adjustment of a shrinkage coefficient to achieve a smaller mean squared error. How to tune this shrinkage coefficient efficiently in a federated learning setting is unclear. As a result, the minimizer of~\cref{eqn:hlm_loss} is a great practical alternative.

The two estimators provided in this section also point out that alternative approaches to personalized federated learning~\citep{deng2020adaptive, li2021ditto, hanzely2020federated} are not optimal from a statistical point of view under a hierarchical linear model. While these two estimators are impractical in federated learning setting, it remains an open question how to approximate them with efficient computation and communication, while also respecting the privacy concerns in federated learning.

\section{Numerical Results}
\label{sec:experiments}
 
We illustrate the performance of our algorithm on both simulated data and a real-world marketing data set. We focus on the generalization error of the minimizer of \cref{eqn:loss}, rather than on the optimization performance of \cref{alg:async_l2gd,alg:async_al2sgd_plus}, since related approaches have been studied in the single-cluster regime \cite{li2021ditto, hanzely2020federated, hanzely2020lower, hanzely2021personalized, dinh2020personalized}. Our aim is to complement those studies.

We consider a modified version of~\cref{alg:async_l2gd}, where one coin toss is used to determine whether the clients should perform a local step or a communication round. During a communication round, each machine minimizes its distance to the cluster and global average simultaneously. Our experiments focus on the generalization behavior of the estimator rather than on the optimization error. Therefore, such a simplification does not affect the validity of our results and simplifies the implementation. Scripts for replicating the experiments can be found \href{https://github.com/ShawnBLYU/Personalized-Federated-Learning-with-Multiple-Known-Clusters/tree/main}{this GitHub repository}.

\subsection{Simulation Studies}

We compare our algorithm with the tuning parameters set as in \cref{thm:blue} against three baselines: i) learning a single model for all clients, ii) learning each client's model independently, and iii) learning personalized models centered around a single point~\cite{hanzely2020federated, li2021ditto, dinh2020personalized}.

Simulation data are generated from the following hierarchical linear model. The center of all clusters is $\Bar{\theta}^* = \mathbf{0}_{20} \in \RR^{20}$, where $\mathbf{0}_{20}$ is an all-zero vector. There are 20 clusters, each with 20 clients. The cluster centers and client parameters are generated as:
\begin{align*}
    &\Bar{\theta}_j^* \sim \cN(\Bar{\theta}^*, I_{20}), \quad j = 1, \dots, k,\\
    &\theta_i^* \sim \cN(\Bar{\theta}_j^*, I_{20}), \quad i \in \cI_j.
\end{align*} 
On each client, we generate a data matrix $X_i \in \RR^{m \times 20}$, where $m$ is the number of samples on the $i$-th client and is selected from $\{1, 5, 25, 50, 100, 200\}$. Each entry in $X_i$ is drawn i.i.d.~from the standard Gaussian distribution, $\cN(0, 1)$. Subsequently, the response $y_i$ is drawn from the linear model:
\[
    y_i \mid X_i \sim \cN(X_i\theta_i^*, I_{n_i}).
\]  

Under the data generating procedure described above, the three baselines take the following form:
\begin{enumerate}
    \item Training a \textbf{single-model} for all clients:  $\hat{\theta}^{\mathbf{sm}} = \rbr{\sum_{i=  1}^n X_i^TX_i}^\dagger\rbr{\sum_{i = 1}^nX_i^Ty_i}$.

    \item Entirely \textbf{locally-trained} estimator: for each client $i$,  $\hat{\theta}_i^{\mathbf{lt}} = (X_i^TX_i)^\dagger X_i^Ty_i$, where $(X_i^TX_i)^\dagger$ is the Moore-Penrose pseudo-inverse of the empirical covariance matrix.

    \item Training a \textbf{single-cluster} personalized model~\citep{hanzely2020federated, li2021ditto, dinh2020personalized}: the objective function is given by
    \[
        \min_{\{\theta_i\}} \frac{1}{n}\sum_{i = 1}^n \rbr{\frac{1}{2}\|X_i\theta_i - y_i\|^2 + \frac{\lambda^{\mathbf{sc}}}{2}\|\theta_i - \Bar{\theta}\|^2},
    \] where $\Bar{\theta} = n^{-1}\sum_{i = 1}^n \theta_i$ and $\lambda^{\mathbf{sc}} > 0$ is user-chosen parameter that controls the strength of personalization. The minimizer of the objective can be obtained as 
    $\hat{\theta}_i^{\mathbf{sc}} = (X_i^TX_i + \lambda^{\mathbf{sc}}I_{20})^{-1}\rbr{(X_i^TX_i)\hat{\theta}_i^{\mathbf{lt}} + \lambda^{\mathbf{sc}}\hat{\Bar{\theta}}^{\mathbf{sc}}}$,
    where
    \begin{multline*}
        \hat{\Bar{\theta}}^{\mathbf{sc}} = \rbr{I_{20} - \frac{\lambda^{\textbf{sc}}}{n}\sum_{i = 1}^n\rbr{X_i^TX_i + \lambda^{\mathbf{sc}}I_{20}}^{-1}}^{-1}\rbr{\frac{1}{n}\sum_{i = 1}^n \rbr{X_i^TX_i + \lambda^{\mathbf{sc}} I _{20}}^{-1}(X_i^TX_i)\hat{\theta}_i^{\mathbf{lt}}}.
    \end{multline*}
    We tune $\lambda^{\mathbf{sc}}$ over 20 evenly spaced points in $[0.01, 2]$ using cross validation.
\end{enumerate}

Following \cref{thm:blue}, we set $\lambda_j = 1$ for all $j$ and $\gamma_i = 1$ for all $i$. We use \cref{prop:equiv_mtl} to convert the maximum likelihood estimation problem into the form in \cref{eqn:loss}, which we minimize using the simplified algorithm with communication probability $p = 0.1$, stepsize $\eta = 10^{-4}$, and maximum number of iterations $50000$. We use $\{\hat{\theta}_i^{\mathbf{our}}\}$ to denote the estimators produced by our model. 

We measure the performance of different estimators using the $\ell_2$ distance between the estimates of the parameters of the clients and their actual parameters, that is, $\|\hat{\theta}_i - \theta^*_i\|_2^2$ for $\hat{\theta}_i \in \{\hat{\theta}^{\mathbf{sm}}, \hat{\theta}^{\mathbf{lt}}_i, \hat{\theta}_i^{\mathbf{sc}}, \hat{\theta}_i^{\mathbf{our}}\}$.  
The results are averaged over five independent runs. 

To demonstrate that our model consistently outperforms baselines, we pick two specific choices of $m$, $m = 10$, and $m = 100$. Intuitively, as $m$ increases, local training becomes more viable, whereas a smaller $m$ means that training a single model could be more beneficial. Here, we show that our suggested approach outperforms both these alternatives regardless of $m$. \Cref{fig:sim_mse_n} and \Cref{tab:sum_n} confirm that our proposed method consistently outperforms alternatives for $m \in \{1, 5, 10, 25, 100, 200\}$.

\begin{table}[hb]
    \centering
    \begin{tabular}{|l|c|c|c|c|}
    \hline
         & \multicolumn{2}{c|}{$m = 10$} & \multicolumn{2}{c|}{$m = 100$}\\
        & {\small Avg. ($\pm$ SD.)} & Max & Avg. ($\pm$ SD.) & Max\\
    \hline
        $\hat{\theta}^{\mathbf{lt}}$ & 4.50 $(\pm 0.981)$ & 8.224 & 0.494 $(\pm 0.093)$ & 0.768\\
        $\hat{\theta}^{\mathbf{sm}}$ & 6.11 $(\pm 0.968)$ & 9.025 & 6.243 $(\pm 1.003)$&  9.479\\
        $\hat{\theta}^{\mathbf{sc}}$ & 4.46 $(\pm 0.958)$& 8.112 & 0.494 $(\pm 0.093)$ & 0.763\\
        $\hat{\theta}^{\mathbf{our}}$ & 3.46 $(\pm 0.689)$ & 8.676 & 0.489 $(\pm 0.093)$& 0.748\\
    \hline
    \end{tabular}
    \caption{
    The average ($\frac{1}{n}\sum_{i = 1}^n \|\hat{\theta}_i - \theta_i^*\|^2$) and maximum ($\max_i \|\hat{\theta}_i - \theta_i^*\|^2$) $\ell_2$ distance between parameter estimates and true parameters for $m \in \{10, 100\}$. The proposed method (last row) consistently outperforms existing methods in two settings.}
    \label{tab:sum_n}
\end{table}

\begin{figure}[ht]
    \centering
    \includegraphics[width=0.6\linewidth]{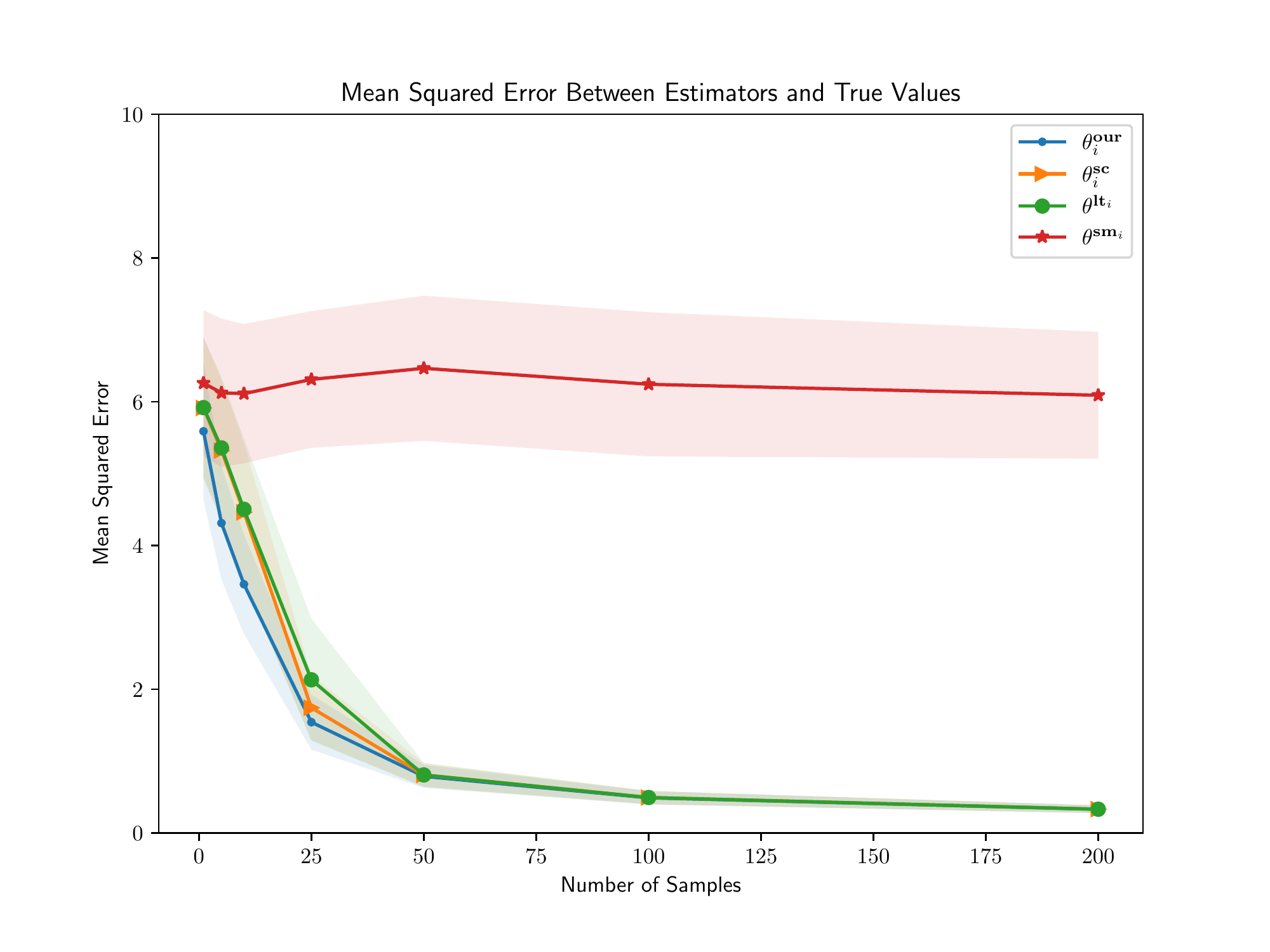}
    \caption{The average $\ell_2$ distance versus the number of local samples.}
    \label{fig:sim_mse_n}
\end{figure}

\subsection{Application: Donor Response}

We illustrate our algorithm on a real data set that contains donations and solicitation histories from a leading nonprofit organization in the US \cite{blattberg2009customer}. We follow the experimental setup described in \cite{bumbaca2017distributed}. For each solicitation record, we use its recency and frequency as covariates, where recency is defined as the number of days since the donor's last donation and frequency is the number of past donations. We define a solicitation as successful when the donor has made a donation after the current solicitation attempt and before the next solicitation attempt and model the probability of a successful solicitation using logistic regression fitted on log-transformed covariates. 

We view each ZIP code as an individual client and group the clients using the median household income of the clients in the ZIP code, based on the data obtained from \hyperlink{www.unitedstateszipcodes.org}{www.unitedstateszipcodes.org}. Using the income brackets defined in \cite{snider2019fall}, we group the ZIP code into 4 different clusters: poor-or-near-poor, lower-middle-class, middle-class, and upper-middle-class. ZIP codes with no recorded median household income are grouped into a fifth category, and ZIP codes with less than 5 solicitations are removed. We retain 29490 clients and 5 clusters after processing.

We tune the parameters $\{\lambda_j\}_{j = 1}^k, \{\gamma_i\}_{i = 1}^n$, defined in \cref{eqn:mtl_loss}, by performing a grid search over $\{10^{-2}, 10^{-1.875}, 10^{-1.750}, \ldots, 10^{1.875}, 10^2\}$. The chosen values are then used to calculate the corresponding values for $\{\alpha_j\}_{j = 1}^k$. We set the test-train ratio to 0.2, communication probability to 0.1, stepsize to $10^{-3}$, and the maximum number of iterations to 5000.

We record the accuracy and cross-entropy for each individual client. We set the cross-entropy to 100 for clients whose cross-entropy overflows. We then tune $\lambda, \gamma$ with cross-validation, based on the average cross-entropy taken across all the clients. Using cross-validation, we decide on $\lambda_j = 0.01$ for all $j$ and $\gamma_i = 1.360$ for all $j$ 

\begin{figure}[t]
    \centering
    \includegraphics[width=.3\linewidth]{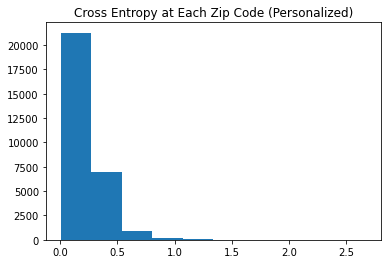}
    \includegraphics[width=.3\linewidth]{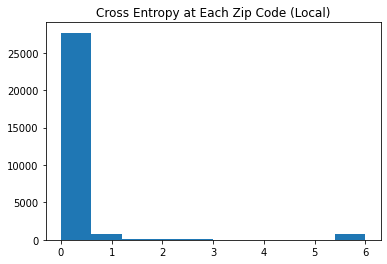}
    \includegraphics[width=.3\linewidth]{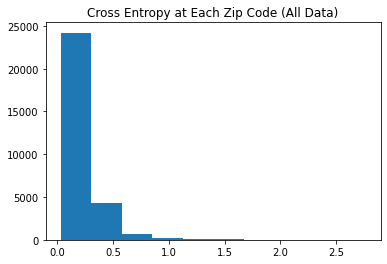}
    \caption{Distribution of cross-entropy losses at each ZIP code. From left to right: personalized model using selected parameters, models trained using entirely local data, and a single model trained on all data.}
    \label{fig:cross_hist}
\end{figure}

\Cref{fig:cross_hist} visualizes the performance of our model compared to naive baselines. Specifically, it characterizes the distribution of the clients' cross-entropy. To complement~\cref{fig:cross_hist}, we include more quantitative results in~\cref{tab:cross_tab}, which contains summary statistics on the clients' cross-entropy. From the table, we observe that our model is comparable to training a single model on all data in terms of averages and quartiles, while it has a lower maximum client-specific cross-entropy. In particular, we can see from \cref{fig:cross_hist} that our model has a smaller percentage of clients with average cross-entropy above 1.0: 183 ZIP codes have cross-entropy loss greater than 1 in the personalized model, whereas 202 have cross-entropy greater than 1 in the single-model alternative. Our model also significantly outperforms training a local model for each ZIP code separately.

\begin{table}[t]
    \centering
    \begin{tabular}{| l | c | c | c | c | c |}
        \hline
         & Avg. ($\pm$ SD.) & 25\% & 50\% & 75\% & Max\\
        \hline
        Locally-trained & $0.413(\pm 0.932)$ & 0.173 & 0.247 & 0.313 & 6 \\
        Single-model & $0.211(\pm 0.176)$ & 0.058 & 0.201 & 0.273 & 2.76 \\
        \textbf{Our Model} & $0.215(\pm0.179)$ & 0.069 & 0.209 & 0.283 & 2.668\\
        \hline
    \end{tabular}
    \caption{Summary statistics of cross-entropy losses at each ZIP code. First column is the client-specific cross entropy, second column the first quartile, third column second quartile (median), and fourth column third quartile. In the fifth column we report the maximum client-specific cross-entropy.}
    \label{tab:cross_tab}
\end{table}

\begin{table}[t]\label{tab:acc_tab}
    \centering
    \begin{tabular}{| l | c | c | c | c | c |}
        \hline
         & Avg. ($\pm$ SD.) & 25\% & 50\% & 75\% & Perf. Ratio\\
        \hline
        Locally-trained & $0.940(\pm 0.070)$ & 0.919 & 0.947 & 1 & 0.972 \\
        Single-model & $0.942(\pm 0.067)$ & 0.921 & 0.948 & 1 & 0.986 \\
        \textbf{Our Model} & $0.942(\pm 0.066)$ & 0.921 & 0.948 & 1 & 1\\
        \hline
    \end{tabular}
    \caption{Summary statistics of accuracy on each ZIP code. First column is the average client-specific accuracy, second column the first quartile, third column second quartile (median), and fourth column third quartile. We define performance ratio as the proportion of clients with accuracy no less than the model being compared to, and record it in the fifth column.}
\end{table}

Although our model attains slightly higher cross-entropy than training a single model for all clients, the accuracy of both models are almost the same, as we can see from~\cref{tab:acc_tab}. We conjecture that the higher cross-entropy is due to a lack of emphasis on tuning the learning rates and optimizing until convergence for our proposed personalized model.


\section{Conclusions and Future Directions}

We propose a new approach to personalization in federated learning when there are multiple known clusters among the clients. Our algorithm is based on a variant of loopless local gradient descent that allows each cluster to have its own communication schedule. The estimator is shown to be optimal among a class of unbiased linear estimators and performs better than commonly used alternatives. We empirically demonstrated our estimator on both simulated and real-world data.

In the future, we will investigate how to obtain an efficient implementation of alternative approaches described in~\cref{subsec:limitations_of_unbiased_estimators}. We have identified two estimators that are costly to obtain, yet outperform our proposed method. Studying their efficient implementation could yield even better methods for personalized federated learning.

\newpage
\section*{Acknowledgments}

This work is partially supported by
the William S. Fishman Faculty Research Fund
at the University of Chicago Booth School of
Business. This work was completed in part with resources provided by
the University of Chicago Research Computing Center.


{
\bibliographystyle{my-plainnat}
\bibpunct{(}{)}{,}{a}{,}{,}
\bibliography{submission}

\begin{thebibliography}{64}
\providecommand{\natexlab}[1]{#1}
\providecommand{\url}[1]{\texttt{#1}}
\expandafter\ifx\csname urlstyle\endcsname\relax
  \providecommand{\doi}[1]{doi: #1}\else
  \providecommand{\doi}{doi: \begingroup \urlstyle{rm}\Url}\fi

\bibitem[Abad et~al.(2020)Abad, Ozfatura, Gunduz, and
  Ercetin]{abad2020hierarchical}
M.~S.~H. Abad, E.~Ozfatura, D.~Gunduz, and O.~Ercetin.
\newblock Hierarchical federated learning across heterogeneous cellular
  networks.
\newblock In \emph{ICASSP 2020-2020 IEEE International Conference on Acoustics,
  Speech and Signal Processing (ICASSP)}, pages 8866--8870. IEEE, 2020.

\bibitem[Bakker and Heskes(2003)]{bakker2003task}
B.~Bakker and T.~Heskes.
\newblock Task clustering and gating for bayesian multitask learning.
\newblock \emph{Journal of Machine Learning Research}, 4:\penalty0 83--99,
  2003.

\bibitem[Blattberg et~al.(2009)Blattberg, Malthouse, and
  Neslin]{blattberg2009customer}
R.~C. Blattberg, E.~C. Malthouse, and S.~A. Neslin.
\newblock Customer lifetime value: Empirical generalizations and some
  conceptual questions.
\newblock \emph{Journal of Interactive Marketing}, 23\penalty0 (2):\penalty0
  157--168, 2009.

\bibitem[Bock(1975)]{bock1975minimax}
M.~E. Bock.
\newblock Minimax estimators of the mean of a multivariate normal distribution.
\newblock \emph{The Annals of Statistics}, pages 209--218, 1975.

\bibitem[Bonawitz et~al.(2019)Bonawitz, Eichner, Grieskamp, Huba, Ingerman,
  Ivanov, Kiddon, Kone{\v{c}}n{\`y}, Mazzocchi, McMahan,
  et~al.]{bonawitz2019towards}
K.~Bonawitz, H.~Eichner, W.~Grieskamp, D.~Huba, A.~Ingerman, V.~Ivanov,
  C.~Kiddon, J.~Kone{\v{c}}n{\`y}, S.~Mazzocchi, H.~B. McMahan, et~al.
\newblock Towards federated learning at scale: System design.
\newblock \emph{arXiv preprint arXiv:1902.01046}, 2019.

\bibitem[Briggs et~al.(2020)Briggs, Fan, and Andras]{briggs2020federated}
C.~Briggs, Z.~Fan, and P.~Andras.
\newblock Federated learning with hierarchical clustering of local updates to
  improve training on non-iid data.
\newblock In \emph{2020 International Joint Conference on Neural Networks
  (IJCNN)}, pages 1--9. IEEE, 2020.

\bibitem[Bryk and Raudenbush(1987)]{bryk1987application}
A.~S. Bryk and S.~W. Raudenbush.
\newblock Application of hierarchical linear models to assessing change.
\newblock \emph{Psychological bulletin}, 101\penalty0 (1):\penalty0 147, 1987.

\bibitem[Bryk and Raudenbush(1992)]{bryk1992hierarchical}
A.~S. Bryk and S.~W. Raudenbush.
\newblock \emph{Hierarchical linear models: Applications and data analysis
  methods.}
\newblock Sage Publications, Inc, 1992.

\bibitem[Bumbaca et~al.(2017)Bumbaca, Misra, and Rossi]{bumbaca2017distributed}
F.~Bumbaca, S.~Misra, and P.~E. Rossi.
\newblock Distributed markov chain monte carlo for bayesian hierarchical
  models.
\newblock \emph{Available at SSRN 2964646}, 2017.

\bibitem[Bumbaca et~al.(2020)Bumbaca, Misra, and Rossi]{bumbaca2020scalable}
F.~Bumbaca, S.~Misra, and P.~E. Rossi.
\newblock Scalable target marketing: Distributed markov chain monte carlo for
  bayesian hierarchical models.
\newblock \emph{Journal of Marketing Research}, 57\penalty0 (6):\penalty0
  999--1018, 2020.

\bibitem[Chen et~al.(2021)Chen, Zheng, Long, and Su]{chen2021theorem}
S.~Chen, Q.~Zheng, Q.~Long, and W.~J. Su.
\newblock A theorem of the alternative for personalized federated learning.
\newblock \emph{arXiv preprint arXiv:2103.01901}, 2021.

\bibitem[Daniels and Gatsonis(1999)]{daniels1999hierarchical}
M.~J. Daniels and C.~Gatsonis.
\newblock Hierarchical generalized linear models in the analysis of variations
  in health care utilization.
\newblock \emph{Journal of the American Statistical Association}, 94\penalty0
  (445):\penalty0 29--42, 1999.

\bibitem[Deng et~al.(2020)Deng, Kamani, and Mahdavi]{deng2020adaptive}
Y.~Deng, M.~M. Kamani, and M.~Mahdavi.
\newblock Adaptive personalized federated learning.
\newblock \emph{arXiv preprint arXiv:2003.13461}, 2020.

\bibitem[Dinh et~al.(2020)Dinh, Tran, and Nguyen]{dinh2020personalized}
C.~T. Dinh, N.~H. Tran, and T.~D. Nguyen.
\newblock Personalized federated learning with moreau envelopes.
\newblock \emph{arXiv preprint arXiv:2006.08848}, 2020.

\bibitem[Duan and Wang(2022)]{duan2022adaptive}
Y.~Duan and K.~Wang.
\newblock Adaptive and robust multi-task learning.
\newblock \emph{arXiv preprint arXiv:2202.05250}, 2022.

\bibitem[Fallah et~al.(2020)Fallah, Mokhtari, and
  Ozdaglar]{fallah2020personalized}
A.~Fallah, A.~Mokhtari, and A.~Ozdaglar.
\newblock Personalized federated learning: A meta-learning approach.
\newblock \emph{arXiv preprint arXiv:2002.07948}, 2020.

\bibitem[French and Russell-Bennett(2015)]{french2015hierarchical}
J.~French and R.~Russell-Bennett.
\newblock A hierarchical model of social marketing.
\newblock \emph{Journal of Social Marketing}, 2015.

\bibitem[Ghosh et~al.(2020)Ghosh, Chung, Yin, and
  Ramchandran]{ghosh2020efficient}
A.~Ghosh, J.~Chung, D.~Yin, and K.~Ramchandran.
\newblock An efficient framework for clustered federated learning.
\newblock \emph{arXiv preprint arXiv:2006.04088}, 2020.

\bibitem[Gower et~al.(2019)Gower, Loizou, Qian, Sailanbayev, Shulgin, and
  Richt{\'a}rik]{gower2019sgd}
R.~M. Gower, N.~Loizou, X.~Qian, A.~Sailanbayev, E.~Shulgin, and
  P.~Richt{\'a}rik.
\newblock Sgd: General analysis and improved rates.
\newblock In \emph{International Conference on Machine Learning}, pages
  5200--5209. PMLR, 2019.

\bibitem[Hanzely and Richt{\'a}rik(2020)]{hanzely2020federated}
F.~Hanzely and P.~Richt{\'a}rik.
\newblock Federated learning of a mixture of global and local models.
\newblock \emph{arXiv preprint arXiv:2002.05516}, 2020.

\bibitem[Hanzely et~al.(2020{\natexlab{a}})Hanzely, Hanzely, Horv{\'a}th, and
  Richtarik]{hanzely2020lower}
F.~Hanzely, S.~Hanzely, S.~Horv{\'a}th, and P.~Richtarik.
\newblock Lower bounds and optimal algorithms for personalized federated
  learning.
\newblock \emph{Advances in Neural Information Processing Systems}, 33,
  2020{\natexlab{a}}.

\bibitem[Hanzely et~al.(2020{\natexlab{b}})Hanzely, Kovalev, and
  Richtarik]{hanzely2020variance}
F.~Hanzely, D.~Kovalev, and P.~Richtarik.
\newblock Variance reduced coordinate descent with acceleration: New method
  with a surprising application to finite-sum problems.
\newblock In \emph{International Conference on Machine Learning}, pages
  4039--4048. PMLR, 2020{\natexlab{b}}.

\bibitem[Hanzely et~al.(2021)Hanzely, Zhao, and Kolar]{hanzely2021personalized}
F.~Hanzely, B.~Zhao, and M.~Kolar.
\newblock Personalized federated learning: A unified framework and universal
  optimization techniques.
\newblock \emph{arXiv preprint arXiv:2102.09743}, 2021.

\bibitem[Hofmann(1997)]{hofmann1997overview}
D.~A. Hofmann.
\newblock An overview of the logic and rationale of hierarchical linear models.
\newblock \emph{Journal of management}, 23\penalty0 (6):\penalty0 723--744,
  1997.

\bibitem[Hooley et~al.(1999)Hooley, Fahy, Cox, Beracs, Fonfara, and
  Snoj]{hooley1999marketing}
G.~Hooley, J.~Fahy, T.~Cox, J.~Beracs, K.~Fonfara, and B.~Snoj.
\newblock Marketing capabilities and firm performance: a hierarchical model.
\newblock \emph{Journal of market-focused management}, 4\penalty0 (3):\penalty0
  259--278, 1999.

\bibitem[Huang et~al.(2019)Huang, Shea, Qian, Masurkar, Deng, and
  Liu]{huang2019patient}
L.~Huang, A.~L. Shea, H.~Qian, A.~Masurkar, H.~Deng, and D.~Liu.
\newblock Patient clustering improves efficiency of federated machine learning
  to predict mortality and hospital stay time using distributed electronic
  medical records.
\newblock \emph{Journal of biomedical informatics}, 99:\penalty0 103291, 2019.

\bibitem[Jacob et~al.(2008)Jacob, Bach, and Vert]{jacob2008clustered}
L.~Jacob, F.~Bach, and J.-P. Vert.
\newblock Clustered multi-task learning: A convex formulation.
\newblock \emph{arXiv preprint arXiv:0809.2085}, 2008.

\bibitem[James and Stein(1992)]{james1992estimation}
W.~James and C.~Stein.
\newblock Estimation with quadratic loss.
\newblock In \emph{Breakthroughs in statistics}, pages 443--460. Springer,
  1992.

\bibitem[Kairouz et~al.(2019)Kairouz, McMahan, Avent, Bellet, Bennis, Bhagoji,
  Bonawitz, Charles, Cormode, Cummings, et~al.]{kairouz2019advances}
P.~Kairouz, H.~B. McMahan, B.~Avent, A.~Bellet, M.~Bennis, A.~N. Bhagoji,
  K.~Bonawitz, Z.~Charles, G.~Cormode, R.~Cummings, et~al.
\newblock Advances and open problems in federated learning.
\newblock \emph{arXiv preprint arXiv:1912.04977}, 2019.

\bibitem[Karimireddy et~al.(2020)Karimireddy, Kale, Mohri, Reddi, Stich, and
  Suresh]{karimireddy2020scaffold}
S.~P. Karimireddy, S.~Kale, M.~Mohri, S.~Reddi, S.~Stich, and A.~T. Suresh.
\newblock Scaffold: Stochastic controlled averaging for federated learning.
\newblock In \emph{International Conference on Machine Learning}, pages
  5132--5143. PMLR, 2020.

\bibitem[Kariya and Kurata(2004)]{kariya2004generalized}
T.~Kariya and H.~Kurata.
\newblock \emph{Generalized least squares}.
\newblock John Wiley \& Sons, 2004.

\bibitem[Kovalev et~al.(2020)Kovalev, Horv{\'a}th, and
  Richt{\'a}rik]{kovalev2020don}
D.~Kovalev, S.~Horv{\'a}th, and P.~Richt{\'a}rik.
\newblock Don’t jump through hoops and remove those loops: Svrg and katyusha
  are better without the outer loop.
\newblock In \emph{Algorithmic Learning Theory}, pages 451--467. PMLR, 2020.

\bibitem[Kubokawa(1991)]{kubokawa1991approach}
T.~Kubokawa.
\newblock An approach to improving the james-stein estimator.
\newblock \emph{Journal of Multivariate Analysis}, 36\penalty0 (1):\penalty0
  121--126, 1991.

\bibitem[Kumar and Daume~III(2012)]{kumar2012learning}
A.~Kumar and H.~Daume~III.
\newblock Learning task grouping and overlap in multi-task learning.
\newblock \emph{arXiv preprint arXiv:1206.6417}, 2012.

\bibitem[Lee and Nelder(1996)]{lee1996hierarchical}
Y.~Lee and J.~A. Nelder.
\newblock Hierarchical generalized linear models.
\newblock \emph{Journal of the Royal Statistical Society: Series B
  (Methodological)}, 58\penalty0 (4):\penalty0 619--656, 1996.

\bibitem[Li et~al.(2018)Li, Sahu, Zaheer, Sanjabi, Talwalkar, and
  Smith]{li2018federated}
T.~Li, A.~K. Sahu, M.~Zaheer, M.~Sanjabi, A.~Talwalkar, and V.~Smith.
\newblock Federated optimization in heterogeneous networks.
\newblock \emph{arXiv preprint arXiv:1812.06127}, 2018.

\bibitem[Li et~al.(2020)Li, Sahu, Talwalkar, and Smith]{li2020federated}
T.~Li, A.~K. Sahu, A.~Talwalkar, and V.~Smith.
\newblock Federated learning: Challenges, methods, and future directions.
\newblock \emph{IEEE Signal Processing Magazine}, 37\penalty0 (3):\penalty0
  50--60, 2020.

\bibitem[Li et~al.(2021)Li, Hu, Beirami, and Smith]{li2021ditto}
T.~Li, S.~Hu, A.~Beirami, and V.~Smith.
\newblock Ditto: Fair and robust federated learning through personalization.
\newblock In \emph{International Conference on Machine Learning}, pages
  6357--6368. PMLR, 2021.

\bibitem[Li(2021)]{li2021anita}
Z.~Li.
\newblock Anita: An optimal loopless accelerated variance-reduced gradient
  method.
\newblock \emph{arXiv preprint arXiv:2103.11333}, 2021.

\bibitem[Liu et~al.(2020)Liu, Zhang, Song, and Letaief]{liu2020client}
L.~Liu, J.~Zhang, S.~Song, and K.~B. Letaief.
\newblock Client-edge-cloud hierarchical federated learning.
\newblock In \emph{ICC 2020-2020 IEEE International Conference on
  Communications (ICC)}, pages 1--6. IEEE, 2020.

\bibitem[Mansour et~al.(2020)Mansour, Mohri, Ro, and Suresh]{mansour2020three}
Y.~Mansour, M.~Mohri, J.~Ro, and A.~T. Suresh.
\newblock Three approaches for personalization with applications to federated
  learning.
\newblock \emph{arXiv preprint arXiv:2002.10619}, 2020.

\bibitem[Marfoq et~al.(2021)Marfoq, Neglia, Bellet, Kameni, and
  Vidal]{marfoq2021federated}
O.~Marfoq, G.~Neglia, A.~Bellet, L.~Kameni, and R.~Vidal.
\newblock Federated multi-task learning under a mixture of distributions.
\newblock \emph{arXiv preprint arXiv:2108.10252}, 2021.

\bibitem[McMahan et~al.(2017)McMahan, Moore, Ramage, Hampson, and
  y~Arcas]{mcmahan2017communication}
B.~McMahan, E.~Moore, D.~Ramage, S.~Hampson, and B.~A. y~Arcas.
\newblock Communication-efficient learning of deep networks from decentralized
  data.
\newblock In \emph{Artificial intelligence and statistics}, pages 1273--1282.
  PMLR, 2017.

\bibitem[Naik and Peters(2009)]{naik2009hierarchical}
P.~A. Naik and K.~Peters.
\newblock A hierarchical marketing communications model of online and offline
  media synergies.
\newblock \emph{Journal of Interactive Marketing}, 23\penalty0 (4):\penalty0
  288--299, 2009.

\bibitem[Nesterov et~al.(2018)]{nesterov2018lectures}
Y.~Nesterov et~al.
\newblock \emph{Lectures on convex optimization}, volume 137.
\newblock Springer, 2018.

\bibitem[Petersen et~al.(2008)Petersen, Pedersen, et~al.]{petersen2008matrix}
K.~B. Petersen, M.~S. Pedersen, et~al.
\newblock The matrix cookbook.
\newblock \emph{Technical University of Denmark}, 7\penalty0 (15):\penalty0
  510, 2008.

\bibitem[Qian et~al.(2021{\natexlab{a}})Qian, Dong, Richt{\'a}rik, and
  Zhang]{qian2021error}
X.~Qian, H.~Dong, P.~Richt{\'a}rik, and T.~Zhang.
\newblock Error compensated loopless svrg, quartz, and sdca for distributed
  optimization.
\newblock \emph{arXiv preprint arXiv:2109.10049}, 2021{\natexlab{a}}.

\bibitem[Qian et~al.(2021{\natexlab{b}})Qian, Qu, and
  Richt{\'a}rik]{qian2021svrg}
X.~Qian, Z.~Qu, and P.~Richt{\'a}rik.
\newblock L-svrg and l-katyusha with arbitrary sampling.
\newblock \emph{Journal of Machine Learning Research}, 22:\penalty0 1--49,
  2021{\natexlab{b}}.

\bibitem[Raudenbush and Bryk(1986)]{raudenbush1986hierarchical}
S.~Raudenbush and A.~S. Bryk.
\newblock A hierarchical model for studying school effects.
\newblock \emph{Sociology of education}, pages 1--17, 1986.

\bibitem[Raudenbush(1988)]{raudenbush1988educational}
S.~W. Raudenbush.
\newblock Educational applications of hierarchical linear models: A review.
\newblock \emph{journal of Educational Statistics}, 13\penalty0 (2):\penalty0
  85--116, 1988.

\bibitem[Sattler et~al.(2020)Sattler, M{\"u}ller, and
  Samek]{sattler2020clustered}
F.~Sattler, K.-R. M{\"u}ller, and W.~Samek.
\newblock Clustered federated learning: Model-agnostic distributed multitask
  optimization under privacy constraints.
\newblock \emph{IEEE transactions on neural networks and learning systems},
  2020.

\bibitem[Smith et~al.(2017)Smith, Chiang, Sanjabi, and
  Talwalkar]{smith2017federated}
V.~Smith, C.-K. Chiang, M.~Sanjabi, and A.~Talwalkar.
\newblock Federated multi-task learning.
\newblock \emph{arXiv preprint arXiv:1705.10467}, 2017.

\bibitem[Snider(2019)]{snider2019fall}
S.~Snider.
\newblock Where do i fall in the american economic class system.
\newblock \emph{US News \& World Report}, 2019.

\bibitem[Stephen and Anthony(2002)]{stephen2002hierarchical}
R.~Stephen and B.~Anthony.
\newblock Hierarchical linear models, 2002.

\bibitem[Stich(2018)]{stich2018local}
S.~U. Stich.
\newblock Local sgd converges fast and communicates little.
\newblock \emph{arXiv preprint arXiv:1805.09767}, 2018.

\bibitem[Wainakh et~al.(2020)Wainakh, Guinea, Grube, and
  M{\"u}hlh{\"a}user]{wainakh2020enhancing}
A.~Wainakh, A.~S. Guinea, T.~Grube, and M.~M{\"u}hlh{\"a}user.
\newblock Enhancing privacy via hierarchical federated learning.
\newblock In \emph{2020 IEEE European Symposium on Security and Privacy
  Workshops (EuroS\&PW)}, pages 344--347. IEEE, 2020.

\bibitem[Wang et~al.(2020)Wang, Wang, Chen, and Ji]{wang2020local}
J.~Wang, S.~Wang, R.-R. Chen, and M.~Ji.
\newblock Local averaging helps: Hierarchical federated learning and
  convergence analysis.
\newblock \emph{arXiv preprint arXiv:2010.12998}, 2020.

\bibitem[Wang et~al.(2018)Wang, Wang, Kolar, and Srebro]{wang2018distributed}
W.~Wang, J.~Wang, M.~Kolar, and N.~Srebro.
\newblock Distributed stochastic multi-task learning with graph regularization.
\newblock \emph{arXiv preprint arXiv:1802.03830}, 2018.

\bibitem[Yang et~al.(2019)Yang, Liu, Cheng, Kang, Chen, and
  Yu]{yang2019federated}
Q.~Yang, Y.~Liu, Y.~Cheng, Y.~Kang, T.~Chen, and H.~Yu.
\newblock Federated learning.
\newblock \emph{Synthesis Lectures on Artificial Intelligence and Machine
  Learning}, 13\penalty0 (3):\penalty0 1--207, 2019.

\bibitem[Zhang and Yang(2017)]{zhang2017survey}
Y.~Zhang and Q.~Yang.
\newblock A survey on multi-task learning.
\newblock \emph{arXiv preprint arXiv:1707.08114}, 2017.

\bibitem[Zhang and Yeung(2012)]{zhang2012convex}
Y.~Zhang and D.-Y. Yeung.
\newblock A convex formulation for learning task relationships in multi-task
  learning.
\newblock \emph{arXiv preprint arXiv:1203.3536}, 2012.

\bibitem[Zhao et~al.(2021)Zhao, Li, and Richt{\'a}rik]{zhao2021fedpage}
H.~Zhao, Z.~Li, and P.~Richt{\'a}rik.
\newblock Fedpage: A fast local stochastic gradient method for
  communication-efficient federated learning.
\newblock \emph{arXiv preprint arXiv:2108.04755}, 2021.

\bibitem[Zhou et~al.(2011{\natexlab{a}})Zhou, Chen, and Ye]{zhou2011clustered}
J.~Zhou, J.~Chen, and J.~Ye.
\newblock Clustered multi-task learning via alternating structure optimization.
\newblock \emph{Advances in neural information processing systems},
  2011:\penalty0 702, 2011{\natexlab{a}}.

\bibitem[Zhou et~al.(2011{\natexlab{b}})Zhou, Chen, and Ye]{zhou2011malsar}
J.~Zhou, J.~Chen, and J.~Ye.
\newblock Malsar: Multi-task learning via structural regularization.
\newblock \emph{Arizona State University}, 21, 2011{\natexlab{b}}.

\end{thebibliography}
}

\newpage

\appendix
\setcounter{theorem}{0}
\setcounter{equation}{0}
\section{Missing Lemmas and Proofs}
\label{sec:missing_lemmas_and_proofs}
We provide detailed proofs for the results in the main text. 



\subsection{Proof of \cref{prop:equiv_mtl}}
\label{subsec:proof_of_prop_equiv_mtl}
The gradient of $F(\{\theta_i\}_{i = 1}^n)$ w.r.t.~$\theta_{i_0}$ for any $i_0 \in \{1, \ldots, n\}$ is given by
\begin{equation*}
    \nabla_{\theta_{i_0}} F(\{\theta_i\}_{i = 1}^n) = \nabla f_{i_0}(\theta_{i_0})  + \alpha_{j_0}\gamma_{i_0}(\theta_{i_0} - \Bar{\theta}) + (1 - \alpha_{j_0})\gamma_{i_0}(\theta_{i_0} - \Bar{\theta}_{j_0}),
\end{equation*} where we let $j_0$ denote the cluster client $i_0$ belongs to.
The gradients of \cref{eqn:mtl_loss} w.r.t.~$\{w_j\}_{j = 1}^k, \Bar{w}$ are given by
\begin{align*}
    &\nabla_{w_j}F_{MTL}(\{\theta_i\}_{i = 1}^n, \{w_j\}_{j = 1}^k, \Bar{w}) = \sum_{i \in \cI_j}(\theta_i - w_j) + \lambda_j(w_j - \Bar{w}), \,j = 1, \ldots, k,\\
    &\nabla_{\Bar{w}}F_{MTL}(\{\theta_i\}_{i = 1}^n, \{w_j\}_{j = 1}^k, \Bar{w}) = \sum_{j = 1}^k \lambda_j(w_j - \Bar{w}).
\end{align*} 
Setting the gradients to zero and solving the resulting linear system, give us
\begin{align*}
    &w_j = \frac{\lambda_j}{\lambda_j + \sum_{i \in \cI_j} \gamma_i}\Bar{w} + \frac{\sum_{i \in \cI_j} \gamma_i\theta_i}{\lambda_j + \sum_{i \in\cI_j} \gamma_i}, \qquad j = 1, \ldots, k,\\
    &\Bar{w} = \rbr{\sum_{j = 1}^k \frac{\lambda_j \sum_{i \in \cI_j} \gamma_i}{\lambda_j + \sum_{i \in \cI_j}\gamma_i}}^{-1}\rbr{\sum_{j = 1}^k\frac{\lambda_j \sum_{i \in \cI_j}\gamma_i\theta_i}{\lambda_j + \sum_{i \in \cI_j}\gamma_i}}.
\end{align*} 
Plugging $\alpha_j = \frac{\lambda_j}{\lambda_j + \sum_{i \in \cI_j} \gamma_i}$ and recalling \cref{eqn:theta_bar_defn}, we note that 
\begin{equation*}
    \Bar{w} = \frac{\sum_{j = 1}^k \sum_{i \in \cI_j} \alpha_j\gamma_i\theta_i}{\sum_{j = 1}^k \sum_{i \in \cI_j}\alpha_j\gamma_i} = \Bar{\theta},
\end{equation*} 
and
\begin{equation*}
    w_j = \alpha_j \Bar{\theta} + (1 - \alpha_j) \Bar{\theta}_j, \qquad j = 1, \ldots, k.
\end{equation*} 
Therefore, the first order condition for any $\theta_{i_0}$ is
\begin{equation*}
    \nabla f_{i_0}(\theta_{i_0}) + \alpha_{j_0}\gamma_{i_0}(\theta_{i_0} - \Bar{\theta}) + (1 - \alpha_{j_0})\gamma_{i_0}(\theta_{i_0} - \Bar{\theta}_{j_0}) = 0,
\end{equation*} 
which is exactly the first order condition $\nabla_{\theta_i} F(\{\theta_i\}_{i = 1}^n) = 0$. The proof is now complete.

\subsection{Proof of \cref{prop:reg_grad}} 
\label{subsec:proof_of_prop_reg_grad}

We prove the first statement for some $j$ and $i \in \cI_j$. Since
    \begin{equation*}
        \nabla_{\theta_i} \Bar{\theta}_j = \frac{\gamma_i}{\sum_{i' \in \cI_j}\gamma_{i'}},
    \end{equation*}
    we have
    \begin{align*}
        \nabla_{\theta_i}\psi_j\rbr{\btheta_j; \cbr{\gamma_i}_{i \in \cI_j}} &= \gamma_i\rbr{\theta_i - \Bar{\theta}_j}\rbr{1 - \frac{\gamma_i}{\sum_{i' \in \cI_j}\gamma_{i'}}}\\
        &\quad + \sum_{i' \in \cI_j\backslash\{i\}}\gamma_{i'}\rbr{\theta_{i'} - \Bar{\theta}_j}\rbr{-\frac{\gamma_i}{\sum_{i' \neq \cI_j}\gamma_{i'}}}\\
        &= \gamma_i\rbr{\theta_i - \Bar{\theta}_j} - \sum_{i' \in \cI_j}\frac{\gamma_{i'}}{\sum_{i'' \in \cI_j}\gamma_{i''}}\rbr{\theta_{i'} - \Bar{\theta}_j}\\
        &= \gamma_i(\theta_i - \Bar{\theta}_j).
    \end{align*}
    The second statement is proven similarly.

\subsection{Proof of \cref{prop:opt_tau}}
\label{subsec:proof_of_prop_opt_tau}

Note that the gradient oracle, $G(\btheta)$, can be written as
    \begin{align*}
        G(\btheta) &= \frac{\ind\{\xi_0 = 1\}}{p_0}\rbr{\nabla_{\btheta}\varphi(\btheta) + \begin{bmatrix}
            (1 - \alpha_1)\tau_1 \nabla_{\btheta_1}\psi_i(\btheta_1)\\
            (1 - \alpha_2)\tau_2 \nabla_{\btheta_2}\psi_2(\btheta_2)\\
            \vdots\\
            (1 - \alpha_k)\tau_k \nabla_{\btheta_k}\psi_i(\btheta_k)
        \end{bmatrix}} + \\
        &\quad \quad \frac{\ind\{\xi_0 = 0\}}{1 - p_0}\begin{bmatrix}
            \frac{\ind\{\xi_1 = 1\}}{p_1}(1 - \alpha_1)(1 - \tau_1)\nabla_{\btheta_1}\psi_1(\btheta_1) + \frac{\ind\{\xi_1 = 0\}}{1 - p_1}\nabla_{\btheta_1}F_1(\btheta_1)\\
            \frac{\ind\{\xi_2 = 1\}}{p_2}(1 - \alpha_2)(1 - \tau_2)\nabla_{\btheta_2}\psi_2(\btheta_2) + \frac{\ind\{\xi_2 = 0\}}{1 - p_2}\nabla_{\btheta_2}F_2(\btheta_2)\\
            \vdots\\
            \frac{\ind\{\xi_k = 1\}}{p_k}(1 - \alpha_k)(1 - \tau_k)\nabla_{\btheta_k}\psi_k(\btheta_k) + \frac{\ind\{\xi_k = 0\}}{1 - p_k}\nabla_{\btheta_k}F_k(\btheta_k)
        \end{bmatrix}.
    \end{align*}
    We bound the two terms separately using the law of total expectation. Note that
    \begin{align*}
        \EE&\sbr{\|G(\btheta) - G(\btheta^*)\|^2 | \xi_0 = 1}\PP(\xi_0 = 1) \\
        &=\frac{1}{p_0}\nbr{\nabla_{\btheta}\varphi(\btheta) - \nabla_{\btheta}\varphi(\btheta^*) + \begin{bmatrix}
            (1 - \alpha_1)\tau_1 (\nabla_{\btheta_1}\psi_i(\btheta_1) - \nabla_{\btheta_1}\psi_1(\btheta_1^*))\\
            (1 - \alpha_2)\tau_2 (\nabla_{\btheta_2}\psi_2(\btheta_2) - \nabla_{\btheta_2}\psi_2(\btheta_2^*))\\
            \vdots\\
            (1 - \alpha_k)\tau_k (\nabla_{\btheta_k}\psi_i(\btheta_k) - \nabla_{\btheta_k}\psi_k(\btheta_k^*))
        \end{bmatrix}}^2\\
        & \leq \frac{2}{p_0} \nbr{\nabla_{\btheta}\varphi(\btheta) - \nabla_{\btheta}\varphi(\btheta^*)}^2 + \frac{2}{p_0}\nbr{\begin{bmatrix}
            (1 - \alpha_1)\tau_1 (\nabla_{\btheta_1}\psi_i(\btheta_1) - \nabla_{\btheta_1}\psi_1(\btheta_1^*))\\
            (1 - \alpha_2)\tau_2 (\nabla_{\btheta_2}\psi_2(\btheta_2) - \nabla_{\btheta_2}\psi_2(\btheta_2^*))\\
            \vdots\\
            (1 - \alpha_k)\tau_k (\nabla_{\btheta_k}\psi_i(\btheta_k) - \nabla_{\btheta_k}\psi_k(\btheta_k^*))
        \end{bmatrix}}^2\\
        & = \frac{2}{p_0}\nbr{\nabla_{\btheta}\varphi(\btheta) - \nabla_{\btheta}\varphi(\btheta^*)}^2 + \frac{2}{p_0}\sum_{j = 1}^k (1 - \alpha_j)^2\tau_j^2\|\nabla_{\btheta_1}\psi_i(\btheta_1) - \nabla_{\btheta_1}\psi_1(\btheta_1^*)\|^2,
    \end{align*}
    where the second line uses the fact that $\EE$ is taken over $\{\xi_j\}_{j = 0}^k$, the third line uses the triangle inequality, and the fourth line follows directly using the fact that $\|\cdot\|^2$ is the squared $\ell_2$-norm.
    
    Similarly, we have
    \begin{align*}
        &\EE\sbr{\|G(\btheta) - G(\btheta^*)\|^2 | \xi_0 = 0}\PP(\xi_0 = 0)
        = \frac{1}{1 - p_0}\sum_{j = 1}^k\EE_{\xi_j}{M_j},
    \end{align*} 
    where
    \begin{align*}
        M_j = &\Biggl\|\frac{\ind\{\xi_j = 1\}}{p_j}(1 - \alpha_j)(1 - \tau_j)(\nabla_{\btheta_j}\psi_j(\btheta_j) - \nabla_{\btheta_j}\psi_j(\btheta_j^*)) \\
        &\hspace{8em}+ \frac{\ind\{\xi_j = 0\}}{1 - p_j}(\nabla_{\btheta_j}F_j(\btheta_j) - \nabla_{\btheta_j}F_j(\btheta_j^*))\Biggr\|^2.
    \end{align*}
    For a fixed $j$, we have
    \begin{align*}
        \EE_{\xi_j}[M_j]
        & = \EE_{\xi_j}[M_j | \xi_j = 1]\PP(\xi_j = 1) + \EE_{\xi_j}[M_j | \xi_j = 0]\PP(\xi_j = 0)\\
        & = \frac{1}{p_j}(1 - \alpha_j)^2(1 - \tau_j)^2\|\nabla_{\btheta_j}\psi_j(\btheta_j) - \nabla_{\btheta_j}\psi_j(\btheta_j^*)\|^2\\ 
        & \qquad + \frac{1}{1 - p_j}\|\nabla_{\btheta_j}F_j(\btheta_j) - \nabla_{\btheta_j}F_j(\btheta_j^*)\|^2.
    \end{align*}
    Combining the two bounds, we further have
    \begin{align*}
        \EE&[\|G(\btheta) - G(\btheta^*)\|^2]\\
        &=\EE\sbr{\|G(\btheta) - G(\btheta^*)\|^2 | \xi_0 = 1}\PP(\xi_0 = 1) + \EE\sbr{\|G(\btheta) - G(\btheta^*)\|^2 | \xi_0 = 0}\PP(\xi_0 = 0)\\
        &\leq \frac{2}{p_0} \nbr{\nabla_{\btheta}\varphi(\btheta) - \nabla_{\btheta}\varphi(\btheta^*)}^2 \\
        &\qquad + \sum_{j = 1}^k (1 - \alpha_j)^2\rbr{\frac{2}{p_0}\tau_j^2 + \frac{(1 - \tau_j)^2}{p_j(1 - p_0)}}\|\nabla_{\btheta_j}\psi_j(\btheta_j) - \nabla_{\btheta_j}\psi_j(\btheta_j^*)\|^2 \\
        &\qquad + \sum_{j = 1}^k \frac{1}{1 - p_j}\|\nabla_{\btheta_j}F_j(\btheta_j) - \nabla_{\btheta_j}F_j(\btheta_j^*)\|^2.
    \end{align*}
    In the above display, only the second term depends on $\tau_j$. 
    This term is minimized when 
    \[
    \tau_j = \frac{p_0}{p_0 + 2(1 - p_0)p_j}.
    \]
    The result immediately follows now.

\subsection{Proof of \cref{prop:reg_conv}}
\label{subsec:proof_of_prop_reg_conv}

We prove the first statement for a fixed $j$. From the proof of \cref{prop:reg_grad}, we can write the gradient of $\psi_j\rbr{\btheta_j; \cbr{\gamma_i}_{i \in \cI_j}}$ w.r.t.~$\btheta_j$ as
\begin{equation*}  
    \nabla_{\btheta_j}\psi_j\rbr{\btheta_j; \cbr{\gamma_i}_{i \in \cI_j}} = \btheta_j - \begin{bmatrix}
        \Bar{\theta}_j\\
        \Bar{\theta}_j\\
        \vdots\\
        \Bar{\theta}_j
    \end{bmatrix}
\end{equation*} 
and the Hessian as $\nabla_{\btheta_j\btheta_j}\psi_j\rbr{\btheta_j; \cbr{\gamma_i}_{i \in \cI_j}} = I_d \otimes H_j$,
where $H_j \in \RR^{|\cI_j| \times |\cI_j|}$ is a symmetric matrix with element in row $i_1$ and column $i_2$ ($i_1, i_2 \in \cI_j$) denoted as 
\begin{equation*}
    H_j[i_1, i_2] = \begin{cases}
        \gamma_{i_1}\rbr{1 - \frac{\gamma_{i_1}}{\sum_{i \in \cI_j}\gamma_i}}, \quad &\text{if $i_1 = i_2$},\\
        -\frac{\gamma_{i_1}\gamma_{i_2}}{\sum_{i = 1 \in \cI_j}\gamma_i}, \quad &\text{otherwise}.
    \end{cases}
\end{equation*}
For an arbitrary fixed vector $v \in \RR^{|\cI_j|}$, we have
\begin{multline*}
    v^TH_jv = \sum_{l = 1}^{|\cI_j|}\gamma_{\cI_j[l]} (v[l])^2 - \frac{1}{\sum_{l = 1}^{|\cI_j|}\gamma_{\cI_j[l]}}\rbr{\sum_{l = 1}^{|\cI_j|}\gamma_{\cI_j[l]} (v[l])}^2\\
    = \rbr{\frac{1}{\sum_{l = 1}^{|\cI_j|}\gamma_{\cI_j[l]}}}\rbr{\sum_{l = 1}^{|\cI_j|}\frac{\gamma_{\cI_j[l]}}{\sum_{l' = 1}^{|\cI_j|}\gamma_{\cI_j[l']}} (v[l])^2 - \rbr{\sum_{l = 1}^{|\cI_j|}\frac{\gamma_{\cI_j[l]}}{\sum_{l' = 1}^{|\cI_j|}\gamma_{\cI_j[l']}} v[l]}^2}
    \geq 0,
\end{multline*}
which shows that $H_j \succeq 0$.
Let $\Gamma_j \in \RR^{|\cI_j| \times |\cI_J|}$ be a diagonal matrix with diagonal elements $\{\gamma_{\cI_j[l]}\}_{l = 1}^{|\cI_j|}$. Then
\begin{align*}
    v^T\Gamma_j v - v^TH_j v = \frac{1}{\sum_{l = 1}^{|\cI_j|}\gamma_{\cI_j[l]}}\rbr{\sum_{l = 1}^{|\cI_j|}\gamma_{\cI_j[l]} (v[l])}^2
    \geq 0,
\end{align*} 
which shows that $H_j \preceq \Gamma_j \preceq \max_{i \in \cI_j} \gamma_i I_{|\cI_J|}$. The first statement now follows, since the eigenvalues of the Kronecker product of two matrices are the products of the pairs of eigenvalues of two matrices.

The second statement is established in the same way.

\subsection{Proof of \cref{thm:conv_rate}}
\label{subsec:proof_of_thm_conv_rate}

First, we show that the gradient estimator defined in \cref{eqn:gradient_oracle} satisfies the expected smoothness condition. We then bound the gradient estimator's second moment, and apply Theorem 3.1 from \cite{gower2019sgd} to complete the proof.

\begin{lemma}[Expected Smoothness]
    \label{lemma:es}
    Suppose conditions of \cref{thm:conv_rate} hold and $\cL$
    is defined in \cref{thm:conv_rate}.
    Then 
    \begin{equation*}
        \EE[\|G(\btheta) - G(\btheta^*)\|^2] \leq 2\cL \rbr{F(\btheta) - F(\btheta^*)},
    \end{equation*} 
    where the expectation is taken over the randomness in $\{\xi_j\}_{j= 0}^k$.
\end{lemma}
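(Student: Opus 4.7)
The plan is to start from the variance bound already established in~\cref{prop:opt_tau}, which expresses $\EE\|G(\btheta) - G(\btheta^*)\|^2$ as a weighted sum of three squared-gradient-difference terms (one for $\varphi$, one for each $\psi_j$, and one for each $F_j$), and then convert each such squared norm into a Bregman divergence via the standard convex-plus-smooth inequality
\[
\|\nabla h(x) - \nabla h(y)\|^2 \leq 2 M_h\bigl(h(x) - h(y) - \langle \nabla h(y), x-y\rangle\bigr),
\]
valid whenever $h$ is convex and $M_h$-smooth. The smoothness parameters $M_h$ for the three function types are supplied by~\cref{prop:reg_conv} ($M_\varphi = \max_{j,i\in\cI_j}\alpha_j\gamma_i$ and $M_{\psi_j} = \max_{i\in\cI_j}\gamma_i$) and by~\cref{assumption:smooth_and_convex} applied coordinate-wise (so $F_j$ is $L$-smooth in $\btheta_j$).

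Next, I would match coefficients against the definition of $\cL$ in~\cref{eqn:es_l}. After applying the smoothness inequality, the three terms become
\[
\tfrac{4 M_\varphi}{p_0} D_\varphi(\btheta, \btheta^*),\qquad
\tfrac{4(1-\alpha_j)^2 M_{\psi_j}}{p_0 + 2(1-p_0)p_j}\, D_{\psi_j}(\btheta_j, \btheta^*_j),\qquad
\tfrac{2L}{(1-p_0)(1-p_j)} D_{F_j}(\btheta_j, \btheta^*_j),
\]
and by construction $\cL$ is at least $\tfrac{2M_\varphi}{p_0}$, $\tfrac{2(1-\alpha_j)M_{\psi_j}}{p_0 + 2(1-p_0)p_j}$, and $\tfrac{L}{(1-p_0)(1-p_j)}$ respectively. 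Since $(1-\alpha_j)^2 \le (1-\alpha_j)$, the second coefficient is bounded by $2\cL \cdot (1-\alpha_j)$. Hence each term is at most $2\cL$ multiplied by the Bregman divergence with exactly the weight it carries inside $F$.

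Finally, linearity of Bregman divergence in the underlying function gives
\[
D_\varphi + \sum_{j=1}^k (1-\alpha_j) D_{\psi_j} + \sum_{j=1}^k D_{F_j} = D_F(\btheta, \btheta^*),
\]
and because $\btheta^*$ minimizes $F$ so that $\nabla F(\btheta^*)=0$, we have $D_F(\btheta,\btheta^*) = F(\btheta) - F(\btheta^*)$. Combining yields the claimed $\EE\|G(\btheta) - G(\btheta^*)\|^2 \leq 2\cL\bigl(F(\btheta)-F(\btheta^*)\bigr)$.

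There is no deep obstacle here; the proof is essentially bookkeeping. The only point that requires a bit of care is verifying that each of the three coefficients from~\cref{prop:opt_tau}, after picking up a factor of $2 M_h$ from the smoothness inequality, really is dominated by the corresponding term inside the definition of $\cL$ (taking advantage of $(1-\alpha_j)^2 \le 1-\alpha_j$ in the middle term). Once these three bookkeeping checks are written out, the rest follows immediately from linearity of the Bregman divergence and first-order optimality of $\btheta^*$.
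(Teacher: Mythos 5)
Your proposal is correct and follows essentially the same route as the paper: plug the bound from \cref{prop:opt_tau}, convert each squared gradient difference into a Bregman divergence via the convex-smooth inequality (the paper's \cref{thm:breg_smooth} with the smoothness constants from \cref{prop:reg_conv} and \cref{assumption:smooth_and_convex}), match coefficients against $\cL$, and sum using $D_\varphi + \sum_j (1-\alpha_j)D_{\psi_j} + \sum_j D_{F_j} = D_F = F(\btheta)-F(\btheta^*)$. The only cosmetic remark is that the middle term needs no inequality $(1-\alpha_j)^2 \le 1-\alpha_j$: the coefficient factors exactly as $2(1-\alpha_j)$ times a quantity bounded by $\cL$, which is how the $(1-\alpha_j)$ weight in $D_F$ is matched.
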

\begin{proof}
For a convex function $g: \RR^d \to \RR$, 
let $D_g(x, y) = g(x) - g(y) - (\nabla g(y))^T(x - y)$, $x, y \in \RR^d$.
By \cref{prop:reg_conv} and \cref{thm:breg_smooth}, we have
\begin{align*}
    \nbr{\nabla_{\btheta}\varphi(\btheta) - \nabla_{\btheta}\varphi(\btheta^*)}^2 
    &\leq 2\max_{j = 1, \ldots, k}\max_{i \in \cI_j}\alpha_j\gamma_i D_{\varphi}(\btheta, \btheta^*),\\
    \nbr{\nabla_{\btheta_j}\psi_j(\btheta_j) - \nabla_{\btheta_j}\psi_j(\btheta_j^*)}^2 
    & \leq 2 \max_{i \in \cI_j}\gamma_i D_{\psi_j}(\btheta_j, \btheta_j^*),
\end{align*}
while \cref{assumption:smooth_and_convex} states that
\begin{equation*}
        \|\nabla_{\btheta_j}F_j(\btheta_j) - \nabla_{\btheta_j}F_j(\btheta_j^*)\|^2 \leq 2L D_{F_j}(\btheta_j, \btheta^*_j).
\end{equation*}
Plugging into the result of \cref{prop:opt_tau}, we have
\begin{align*}
    \EE[\|G(\btheta) - G(\btheta^*)\|^2]
    & \leq \frac{4}{p_0}\max_{j = 1, \ldots, k}\max_{i \in \cI_j}\alpha_j\gamma_i D_{\varphi}(\btheta, \btheta^*) \\
    & \qquad + \sum_{j = 1}^k \frac{4(1 - \alpha_j)^2\max_{i \in \cI_j}\gamma_i}{p_0 + 2(1 - p_0)p_j}D_{\psi_j}(\btheta_j, \btheta_j^*)  \\
    & \qquad + \frac{2L}{1 - p_0}\sum_{j = 1}^k \frac{1}{1 - p_j}D_{F_j}(\btheta_j, \btheta^*_j).
\end{align*}
Since
\begin{multline*}
F(\btheta) - F(\btheta^*) 
= D_{F}(\btheta, \btheta^*) \\
= D_{\varphi}(\btheta, \btheta^*) + \sum_{j = 1}^k (1 - \alpha_j)D_{\psi_j}(\btheta_j, \btheta_j^*) + \sum_{j = 1}^k D_{F_j}(\btheta_j, \btheta_j^*),
\end{multline*} 
we have 
\begin{align*}
    \EE & [\|G(\btheta) - G(\btheta^*)\|^2]\\
    & \leq 2\cL \rbr{D_{\varphi}(\btheta, \btheta^*) + \sum_{j = 1}^k (1 - \alpha_j)D_{\psi_j}(\btheta_j, \btheta_j^*) + \sum_{j = 1}^k D_{F_j}(\btheta_j, \btheta_j^*)}\\
    & =  2\cL(F(\btheta) - F(\btheta^*)),
    \end{align*}
which completes the proof.
\end{proof}

\begin{corollary}[Bounded Second Moment]
    \label{corollary:sm}
    Suppose conditions of \cref{thm:conv_rate} hold and $\cL$
    and $\sigma_{\btheta^*}^2$ are defined in \cref{thm:conv_rate}.    
    Then
    \begin{equation*}
        \EE[\|G(\btheta)\|^2] \leq 4\cL\EE[(F(\btheta) - F(\btheta^*)] + 2\sigma_{\btheta^*}^2,
    \end{equation*} 
    where the expectation is taken over the randomness in $\{\xi_j\}_{j= 0}^k$.
\end{corollary}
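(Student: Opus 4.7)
The plan is to start from the elementary inequality $\|a\|^2 \leq 2\|a-b\|^2 + 2\|b\|^2$ applied with $a = G(\btheta)$ and $b = G(\hat{\btheta})$, yielding
\[
    \EE[\|G(\btheta)\|^2] \leq 2\,\EE[\|G(\btheta) - G(\hat{\btheta})\|^2] + 2\,\EE[\|G(\hat{\btheta})\|^2].
\]
The first term is immediately controlled by the Expected Smoothness Lemma (\cref{lemma:es}), producing the $4\cL\,\EE[F(\btheta) - F(\hat{\btheta})]$ contribution. Everything then reduces to showing $\EE[\|G(\hat{\btheta})\|^2] \leq \sigma_{\hat{\btheta}}^2$.

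For that second piece, I would perform a case analysis on $\{\xi_j\}_{j=0}^k$ that mirrors the computation in the proof of \cref{prop:opt_tau}, but applied to $G$ evaluated at the single point $\hat{\btheta}$ rather than to a difference. Conditioning on $\xi_0 = 1$ (probability $p_0$) contributes $\tfrac{1}{p_0}\|\nabla_{\btheta}\varphi(\hat{\btheta}) + \text{diag}_j[(1-\alpha_j)\tau_j\nabla_{\btheta_j}\psi_j(\hat{\btheta}_j)]\|^2$, to which one applies $\|u+v\|^2 \leq 2\|u\|^2 + 2\|v\|^2$ and the block-diagonal structure to produce the $\tfrac{2}{p_0}\|\nabla_{\btheta}\varphi(\hat{\btheta})\|^2$ term and a $\tau_j^2$-weighted cluster term. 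Conditioning on $\xi_0 = 0$ and then on each $\xi_j$ produces, for cluster $j$, the contributions $\tfrac{(1-\alpha_j)^2(1-\tau_j)^2}{(1-p_0)p_j}\|\nabla_{\btheta_j}\psi_j(\hat{\btheta}_j)\|^2$ and $\tfrac{1}{(1-p_0)(1-p_j)}\|\nabla_{\btheta_j}F_j(\hat{\btheta}_j)\|^2$. Substituting the optimal $\tau_j = p_0/(p_0 + 2(1-p_0)p_j)$ from \cref{prop:opt_tau} collapses the two $\psi_j$ contributions into the single coefficient $2(1-\alpha_j)^2/(p_0 + 2(1-p_0)p_j)$, matching $\sigma_{\hat{\btheta}}^2$ exactly.

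Combining the two bounds and using $F(\btheta) - F(\hat{\btheta}) \geq 0$ yields the claim. The main obstacle is the bookkeeping in the second step: the calculation is formally very close to that of \cref{prop:opt_tau}, but one has to be careful to verify that the cross-term across clusters vanishes when $\xi_0 = 1$ (it does, because of the block structure in $\btheta_j$ coordinates and the fact that $\nabla_{\btheta_j}\psi_j$ is supported on the $j$-th block), and that the $\tau_j$-weighted combination across the two communication regimes telescopes into the same expression as in \cref{prop:opt_tau}. Given these observations, no new inequalities are required beyond Young's inequality and \cref{lemma:es}.
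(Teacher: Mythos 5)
Your proposal is correct and follows essentially the same route as the paper: the paper simply cites Lemma~2.4 of \cite{gower2019sgd}, which is exactly your Young's-inequality decomposition combined with the expected smoothness bound of \cref{lemma:es}, and notes that $\EE[\|G(\hat{\btheta})\|^2] \leq \sigma_{\hat{\btheta}}^2$ follows from the computation in the proof of \cref{prop:opt_tau}, which is the single-point case analysis you carry out explicitly. Your version is just a self-contained expansion of the same argument.
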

\begin{proof}
From the proof of \cref{prop:opt_tau}, we have $\EE[\|G(\btheta^*)\|^2] \leq \sigma_{\btheta^*}^2$. The result follows from Lemma 2.4 of \cite{gower2019sgd}.
\end{proof}

\subsection{Proof of \cref{corollary:opt_conv_rate}}
\label{subsec:proof_of_corollary_opt_conv_rate}

When $C_2 > C_1$, $\Tilde{\cL}$ is minimized when $p_0 = \frac{2C_1}{C_1 + C_2 + L}$ and $p_j = \frac{C_2 - C_1}{C_2 - C_1 + L}$. When $C_2 \leq C_1$, $\Tilde{\cL}$ is minimized when $p_0 = \frac{2C_1}{2C_1 + L}$ and $p_j = 0$ for all $j$. Then
\begin{equation*}
    \cL \leq \Tilde{\cL} = \begin{cases}
        C_1 + C_2 + L &\text{ if $C_2 > C_1$},\\
        2C_1 + L &\text{ if $C_2 \leq C_1$}.
    \end{cases}
\end{equation*}

With the upperbounds on $\cL$ determined for when $C_2 > C_1$ and when $C_2 \leq C_1$, we are left with deriving the optimal $\tau_j$ under the two settings. For the choices of $\{p_j\}_{j = 0}^k$ we have derived for the two settings, by~\cref{prop:opt_tau}, we know that when $C_2 > C_1$ we should set $\tau_j = \frac{C_1}{C_2}$ for all $j$ and when $C_2 \leq C_1$ we set $\tau_j = 1$ for all $j$.

For any $\epsilon > 0$, set $\eta = (2\cL)^{-1}$ and $t = {2\cL}/{\mu}\log ({1}/{\epsilon})$. Then
\[
    (1 - \eta\mu)^t \leq \exp\cbr{-t\eta\mu} = \epsilon.
\]
By \cref{prop:expected_num_comm}, we have the following.
\begin{enumerate}
    \item When $C_2 > C_1$, $\cL \leq C_1 + C_2 + L$ and $t \leq \frac{2(C_1 + C_2 + L)}{\mu}\log \frac{1}{\epsilon}$. Since $p_0 = \frac{2C_1}{C_1 + C_2 + L}$, $p_j = \frac{C_2 - C_1}{C_2 - C_1 + L}$, the expected number of between-cluster communication is at most $\frac{4C_1(C_2 - C_1 + L)}{(C_1 + C_2 + L)\mu}\log \frac{1}{\epsilon}$, and the expected number of within-cluster communication is at most $\frac{2L(C_2 - C_1)}{(C_2 - C_1 + L)\mu}\log\frac{1}{\epsilon}$.
    \item When $C_2 \leq C_1$, we have $\cL \leq 2C_1 + L$, therefore $t \leq \frac{2(2C_1 + L)}{\mu}$. Recalling the choices for $p_0$, the number of between-cluster communication is at most $\frac{4C_1 L}{(2C_1 + L)\mu}\log \frac{1}{\epsilon}$.
\end{enumerate}

\subsection{Proof of \cref{lemma:al2sgd_gradient_oracle_expected_smoothness}}
\label{subsec:proof_of_lemma_al2sgd_gradient_oracle_expected_smoothness}

Similarly to the proof of \cref{lemma:es}, we start by conditioning $\EE[\|\bg^t - \nabla F(x^t)\|^2]$ on $\xi_0 = 1$ and $\xi_0 = 0$, respectively. For $\EE[\|\bg^t - \nabla F(x^t)\|^2 \mid \xi_0 = 0]$, we further expand $\EE[\|g^t_i - \nabla_i F(x^t)\|^2]$ and condition on $\xi_j = 1$ and $\xi_j = 0$, $j = 1, \ldots, k$, $i \in \cI_j$. These conditional expectations can then be bounded by Bregman divergences, which completes the proof.

\subsection{Proof of \cref{thm:al2sgd_conv_rate}}
\label{subsec:proof_of_thm_al2sgd_conv_rate}

Similarly to the proof of \cref{prop:reg_conv}, we can show that $\sum_{j= 1}^k \psi_j\left(\btheta_j; \cbr{\gamma_i}_{i \in \cI_j}\right) + \varphi\rbr{\btheta; \cbr{\gamma_i}_{i = 1}^n, \cbr{\alpha}_{j = 1}^k}$ is $\max_{i = 1, \ldots, n}\gamma_i$-smooth and convex in $\btheta$. The function $F(\btheta)$ is $\Tilde{L} + \max_{i = 1, \ldots, n}\gamma_i$-smooth and $\mu$-strongly convex in $\btheta$ under \cref{assumption:strong_conv_smooth_ae}.
    
\cref{alg:async_al2sgd_plus} is a special instance of \texttt{L-Katyusha}~\cite{hanzely2020variance}. For each $t = 1, 2, \ldots,$ we obtain an unbiased stochastic gradient estimate $g_i^t$ for each client. The local updates of the clients follow the form of the updates in \texttt{L-Katyusha}. The random variable $\xi'$ then controls how often the algorithm updates the full gradient. Plugging the expected smoothness of the stochastic gradient oracle, given in \cref{lemma:al2sgd_gradient_oracle_expected_smoothness}, into Theorem 4.1 from \cite{hanzely2020variance} completes the proof.

\subsection{Proof of \cref{corollary:async_al2sgd_plus_opt_conv_rate}} 
\label{subsec:proof_of_corollary_asyn_al2sgd_plus_opt_conv_rate}

The proof is similar to the proof of \cref{corollary:opt_conv_rate} and is omitted.

\subsection{Proof of \cref{thm:blue}} 
\label{subsec:proof_of_thm_blue}

Under our model, we have
\[
    \hat{\theta}_i^d = \theta_i^* + \frac{1}{\beta_i}X_i^T\epsilon_i = \Bar{\theta}_j^* + \xi_i + \frac{1}{\beta_i}X_i^T\epsilon_i = \Bar{\theta}^* + \Bar{\xi}_j + \xi_i + \frac{1}{\beta_i}X_i^T\epsilon_i.
\]
By a direct calculation, solution to \cref{eqn:hlm_loss} can be written as
\begin{align*}
    \hat{\theta}_i &= \frac{\beta_i/\sigma_i^2}{\beta_i/\sigma_i^2 + \gamma_i}\hat{\theta}_i^d + \frac{\gamma_i}{\beta_i/\sigma_i^2 + \gamma_i}\hat{w}_j, \\
    \hat{w}_j &= \frac{\sum_{i \in \cI_j} \gamma_i\hat{\theta}_i}{\sum_{i \in \cI_j}\gamma_i + \lambda_j} + \frac{\lambda_j}{\sum_{i \in \cI_j}\gamma_i + \lambda_j}\hat{\Bar{w}}, \\
    \hat{\Bar{w}} &= \frac{1}{\sum_{j = 1}^k\lambda_j}\sum_{j = 1}^k \lambda_j \hat{w}_j. 
\end{align*}
Expanding the right hand side of $\hat{w}_j$, we have
\begin{align*}
    \hat{w}_j &= \frac{1}{\sum_{i \in \cI_j}\gamma_i + \lambda_j}\left(\sum_{i \in \cI_j} \frac{\gamma_i \beta_i/\sigma_i^2}{\beta_i/\sigma_i^2 + \gamma_i}\hat{\theta}_i^d + \sum_{i \in \cI_j}\frac{\gamma_i^2}{\beta_i/\sigma_i^2 + \gamma_i}\hat{w}_j\right) + \frac{\lambda_j}{\sum_{i \in \cI_j}\gamma_i + \lambda_j}\hat{\Bar{w}}.
\end{align*} 
Let 
\[
C_i = \frac{\gamma_i \beta_i/\sigma_i^2}{\beta_i/\sigma_i^2 + \gamma_i}
\qquad\text{and}\qquad
\hat{\Bar{\theta}}_j = \left(\sum_{i \in \cI_j} C_i\right)^{-1}\left(\sum_{i \in \cI_j}C_i\hat{\theta}_i^d\right).
\]
With this notation, we have
\begin{align*}
    \hat{w}_j &= \frac{\sum_{i \in \cI_j}C_i}{\lambda_j + \sum_{i \in \cI_j} C_i}\hat{\Bar{\theta}}_j + \frac{\lambda_j}{\lambda_j + \sum_{i \in \cI_j} C_i}\hat{\Bar{w}} \\
\intertext{and}
    \hat{\Bar{w}} &=\frac{1}{\sum_{j = 1}^k\lambda_j}\sum_{j = 1}^k \lambda_j \left( \frac{\sum_{i \in \cI_j}C_i}{\lambda_j + \sum_{i \in \cI_j} C_i}\hat{\Bar{\theta}}_j + \frac{\lambda_j}{\lambda_j + \sum_{i \in \cI_j} C_i}\hat{\Bar{w}}\right).
\end{align*}
Let $$D_j = \frac{\lambda_j \sum_{i \in \cI_j}C_i}{\lambda_j + \sum_{i \in \cI_j} C_i}.$$
Then
\[
    \hat{\Bar{w}} = \left(\sum_{j = 1}^k D_j\right)^{-1}\sum_{j = 1}^k D_j\hat{\Bar{\theta}}_j.
\]
Without loss of generality, 
$\cI_1 = \{1, \dots, |\cI_1|\}$, 
$\cI_2 = \{|\cI_1| + 1, |\cI_1| + 2, \dots, |\cI_1| + |\cI_2|\}$,
and so on. For all $i' \in \cI_1$, $i' \neq 1$, we have
\[
    \hat{\theta}_{i'}^d = \theta_1^* - \xi_1 + \xi_{i'} + \frac{1}{\beta_{i'}}X_{i'}^T\epsilon_{i'}.
\] 
For all $i' \in \cI_{j'}, j' \neq 1$, we have
\[
    \hat{\theta}_{i'}^d = \theta_1^* - \xi_1 - \Bar{\xi}_1 + \Bar{\xi}_{j'} + \xi_{i'} + \frac{1}{\beta_{i'}}X_{i'}^T\epsilon_{i'}.
\]
Then
\begin{equation}
    \label{eqn:y_theta_distribution}
    \begin{bmatrix}
        y_1 \\
        \hat{\theta}_2^d \\
        \vdots\\
        \hat{\theta}_m^d
    \end{bmatrix} = \begin{bmatrix}
        X_1\\
        I\\
        \vdots\\
        I
    \end{bmatrix}\theta_1^* + \underbrace{\begin{bmatrix}
        \epsilon_1\\
        - \xi_1 + \xi_2 + \frac{1}{\beta_2}X_2^T\epsilon_2\\
        \vdots\\
        - \xi_1 - \Bar{\xi}_1 + \Bar{\xi}_k + \xi_n + \frac{1}{\beta_n}X_n^T\epsilon_n.
    \end{bmatrix}}_{\bm{\zeta}_1},
\end{equation}
where
$$
\bm{\zeta}_1 \sim N\left(0, \begin{bmatrix}
    \sigma_1^2 & \bm{0}_{n - 1}^T\\
    \bm{0}_{n - 1} & \Omega_1
\end{bmatrix}\otimes I_d\right)
$$
and for any $i, i' \in \{1, \ldots, n\}$,
\[
    \Omega_1[i, i'] = \begin{cases}
        2\Bar{\sigma}_1^2 + \sigma_i^2/\beta_i \quad &\text{if $i = i' \in \cI_1$}\\
        \Bar{\sigma}_1^2 + \Bar{\sigma}_j^2 + 2\Bar{\sigma}^2 + \sigma_i^2/\beta_i &\text{ if $i = i' \in \cI_j$, $j \neq 1$}\\
        \Bar{\sigma}_1^2 &\text{ if $i \in \cI_1$ or $i' \in \cI_1$, $i \neq i'$}\\
        \Bar{\sigma}_1^2 + \Bar{\sigma}^2 &\text{ if $i \in \cI_j, i' \not\in \cI_j$, $i, i' \not\in \cI_1$, $i \neq i'$}\\
        \Bar{\sigma}_1^2 + 2\Bar{\sigma}^2 &\text{ if $i, i' \in \cI_j, j \neq 1$, $i \neq i'$}
    \end{cases}.
\] 
The matrix $\Omega_1$ can be expressed as 
\[
    \Omega_1 = \begin{bmatrix}
        \diag(\{\sigma_i^2/\beta_i + \Bar{\sigma}_1^2\}_{i = 2}^{|\cI_1|}) & \bm{0}_{|\cI_1| - 1}\bm{0}_{n - |\cI_1|}^T\\
        \bm{0}_{n - |\cI_1|}\bm{0}_{|\cI_1| - 1}^T & \Omega_{-1}
    \end{bmatrix} + \Bar{\sigma}_1^2 \mathbf{1}_{n - 1}\mathbf{1}_{n - 1}^T,
\] where
\[
    \Omega_{-1} = \begin{bmatrix}
        \Omega_{-1}^{(2)} & \bm{0}_{|\cI_2|}\bm{0}_{|\cI_3|}^T & \dots &\bm{0}_{|\cI_2|}\bm{0}_{|\cI_k|}^T\\
        \bm{0}_{|\cI_3|}\bm{0}_{|\cI_2|}^T &\Omega_{-1}^{(3)} & \dots & \bm{0}_{|\cI_3|}\bm{0}_{|\cI_k|}^T\\
        \vdots & \vdots & \vdots & \vdots\\
        \bm{0}_{|\cI_k|}\bm{0}_{|\cI_2|}^T &\bm{0}_{|\cI_k|}\bm{0}_{|\cI_3|}^T &\dots &\Omega_{-1}^{(k)}
    \end{bmatrix} + \Bar{\sigma}^2\mathbf{1}_{n - |\cI_1|}\mathbf{1}_{n - |\cI_1|}^T,
\] 
and 
$$\Omega_{-1}^{(j)} = \diag(\{\sigma_i^2/\beta_i + \Bar{\sigma}_j^2\}_{i \in \cI_j}) + \Bar{\sigma}^2\mathbf{1}_{|\cI_j|}\mathbf{1}_{|\cI_j|}^T,
\qquad j \neq 1.$$ 
By~Woodbury~matrix~identity~\citep{petersen2008matrix},
\begin{multline*}
    \left(\Omega_{-1}^{(j)}\right)^{-1} = \diag(\{(\sigma_i^2/\beta_i + \Bar{\sigma}_j^2)^{-1}\}_{i \in \cI_j}) - 
    \left((\Bar{\sigma}^2)^{-1} + \sum_{i \in \cI_j}(\sigma_i^2/\beta_i + \Bar{\sigma}_j^2)^{-1}\right)^{-1} \\
    \times
    \begin{bmatrix}
        (\sigma_{\cI_j[1]}^2/\beta_{\cI_j[1]} + \Bar{\sigma}_j^2)^{-1}\\
        (\sigma_{\cI_j[2]}^2/\beta_{\cI_j[2]} + \Bar{\sigma}_j^2)^{-1}\\
        \vdots\\
        (\sigma_{\cI_j[|\cI_j|]}^2/\beta_{\cI_j[|\cI_j|]} + \Bar{\sigma}_j^2)^{-1}
    \end{bmatrix}\begin{bmatrix}
        (\sigma_{\cI_j[1]}^2/\beta_{\cI_j[1]} + \Bar{\sigma}_j^2)^{-1}\\
        (\sigma_{\cI_j[2]}^2/\beta_{\cI_j[2]} + \Bar{\sigma}_j^2)^{-1}\\
        \vdots\\
        (\sigma_{\cI_j[|\cI_j|]}^2/\beta_{\cI_j[|\cI_j|]} + \Bar{\sigma}_j^2)^{-1}
    \end{bmatrix}^T. 
\end{multline*}
With this, we have
\[
    \mathbf{1}_{|\cI_j|}^T\left(\Omega_{-1}^{(j)}\right)^{-1}\mathbf{1}_{|\cI_j|} = \left(\left(\sum_{i' \in \cI_j} \frac{1}{\sigma_i^2/\beta_i + \Bar{\sigma}_j^2}\right)^{-1} + \Bar{\sigma}^2\right)^{-1},
\] 
\[
    \left(\Omega_{-1}^{(j)}\right)^{-1}\mathbf{1}_{|\cI_j|} = \frac{\left(\sum_{i' \in \cI_j} \frac{1}{\sigma_i^2/\beta_i + \Bar{\sigma}_j^2}\right)^{-1}}{\left(\sum_{i' \in \cI_j} \frac{1}{\sigma_i^2/\beta_i + \Bar{\sigma}_j^2}\right)^{-1} + \Bar{\sigma}^2} \begin{bmatrix}
        (\sigma_{\cI_j[1]}^2/\beta_{\cI_j[1]} + \Bar{\sigma}_j^2)^{-1}\\
        (\sigma_{\cI_j[2]}^2/\beta_{\cI_j[2]} + \Bar{\sigma}_j^2)^{-1}\\
        \vdots\\
        (\sigma_{\cI_j[|\cI_j|]}^2/\beta_{\cI_j[|\cI_j|]} + \Bar{\sigma}_j^2)^{-1}
    \end{bmatrix},
\]
and
\begin{multline*}
    \Omega_{-1}^{-1} = \begin{bmatrix}
        (\Omega_{-1}^{(2)})^{-1} & \bm{0}_{|\cI_2|}\bm{0}_{|\cI_3|}^T & \dots &\bm{0}_{|\cI_2|}\bm{0}_{|\cI_k|}^T\\
        \bm{0}_{|\cI_3|}\bm{0}_{|\cI_2|}^T &(\Omega_{-1}^{(3)})^{-1} & \dots & \bm{0}_{|\cI_3|}\bm{0}_{|\cI_k|}^T\\
        \vdots & \vdots & \vdots & \vdots\\
        \bm{0}_{|\cI_k|}\bm{0}_{|\cI_2|}^T &\bm{0}_{|\cI_k|}\bm{0}_{|\cI_3|}^T &\dots &(\Omega_{-1}^{(k)})^{-1}
    \end{bmatrix} -\\ \left((\Bar{\sigma}^2)^{-1} + \sum_{j = 2}^k \left(\left(\sum_{i' \in \cI_j} \frac{1}{\sigma_i^2/\beta_i + \Bar{\sigma}_j^2}\right)^{-1} + \Bar{\sigma}^2\right)^{-1}\right) 
    \\\times
    \begin{bmatrix}
        \left(\Omega_{-1}^{(2)}\right)^{-1}\mathbf{1}_{|\cI_2|}\\
        \left(\Omega_{-1}^{(3)}\right)^{-1}\mathbf{1}_{|\cI_3|}\\
        \vdots\\
        \left(\Omega_{-1}^{(4)}\right)^{-1}\mathbf{1}_{|\cI_4|}
    \end{bmatrix}        \begin{bmatrix}
        \left(\Omega_{-1}^{(2)}\right)^{-1}\mathbf{1}_{|\cI_2|}\\
        \left(\Omega_{-1}^{(3)}\right)^{-1}\mathbf{1}_{|\cI_3|}\\
        \vdots\\
        \left(\Omega_{-1}^{(4)}\right)^{-1}\mathbf{1}_{|\cI_4|}
    \end{bmatrix}^T.
\end{multline*} 
Similarly, we have
\begin{multline*}
    \mathbf{1}_{m - |\cI_1|}^T\Omega_{-1}^{-1}\mathbf{1}_{m - |\cI_1|} = \left(\Bar{\sigma}^2 + \left[\sum_{j = 2}^k \left(\left(\sum_{i' \in \cI_j} \frac{1}{\sigma_i^2/\beta_i + \Bar{\sigma}_j^2}\right)^{-1} + \Bar{\sigma}^2\right)^{-1}\right]^{-1}\right)^{-1},
\end{multline*}
\begin{multline*}
    \Omega_{-1}^{-1}\mathbf{1}_{m - |\cI_1|} \\
    = 
    \frac{\left[\sum_{j = 2}^k \left(\left(\sum_{i' \in \cI_j} \frac{1}{\sigma_i^2/\beta_i + \Bar{\sigma}_j^2}\right)^{-1} + \Bar{\sigma}^2\right)^{-1}\right]^{-1}}{\Bar{\sigma}^2 + \left[\sum_{j = 2}^k \left(\left(\sum_{i' \in \cI_j} \frac{1}{\sigma_i^2/\beta_i + \Bar{\sigma}_j^2}\right)^{-1} + \Bar{\sigma}^2\right)^{-1}\right]^{-1}}\begin{bmatrix}
        \left(\Omega_{-1}^{(2)}\right)^{-1}\mathbf{1}_{|\cI_2|}\\
        \left(\Omega_{-1}^{(3)}\right)^{-1}\mathbf{1}_{|\cI_3|}\\
        \vdots\\
        \left(\Omega_{-1}^{(4)}\right)^{-1}\mathbf{1}_{|\cI_4|}
    \end{bmatrix},
\end{multline*}
and
\begin{multline*}
    \Omega_1^{-1} = \begin{bmatrix}
        \diag(\{(\sigma_i^2/\beta_i + \Bar{\sigma}_1^2)^{-1}\}_{i = 2}^{|\cI_1|}) & \bm{0}_{|\cI_1| - 1}\bm{0}_{m - |\cI_1|}^T\\
        \bm{0}_{m - |\cI_1|}\bm{0}_{|\cI_1| - 1}^T & \Omega_{-1}^{-1}
    \end{bmatrix} - \\
    \left((\Bar{\sigma}_1^2)^{-1} + \sum_{i = 2}^{|\cI_j|}(\sigma_i^2/\beta_i + \Bar{\sigma}_1^2)^{-1} + \mathbf{1}_{m - |\cI_1|}^T\Omega_{-1}^{-1}\mathbf{1}_{m - |\cI_1|}\right)^{-1} \\
    \times
    \begin{bmatrix}
        (\sigma_2^2/\beta_2 + \Bar{\sigma}_1^2)^{-1}\\
        \vdots\\
        (\sigma_m^2/\beta_2 + \Bar{\sigma}_1^2)^{-1}\\
        \Omega_{-1}^{-1}\mathbf{1}_{m - |\cI_1|}
    \end{bmatrix}\begin{bmatrix}
        (\sigma_2^2/\beta_2 + \Bar{\sigma}_1^2)^{-1}\\
        \vdots\\
        (\sigma_m^2/\beta_2 + \Bar{\sigma}_1^2)^{-1}\\
        \Omega_{-1}^{-1}\mathbf{1}_{m - |\cI_1|}
    \end{bmatrix}^T.
\end{multline*}
Combining the above expressions, we obtain the generalized least squares estimate for $\theta^*_1$, which satisfies
\begin{align*}
    &\left(\beta_1/\sigma_1^2 + \left\{\left[\mathbf{1}_{m - |\cI_1|}^T\Omega_{-1}^{-1}\mathbf{1}_{m - |\cI_1|} + \sum_{i =2}^m \frac{1}{\sigma_i^2/\beta_i + \Bar{\sigma}_1^2}\right]^{-1} + \Bar{\sigma}_i^2\right\}^{-1}\right)\hat{\theta}_1^{GLS} = \\
    &\qquad
        \frac{1}{\sigma_1^2}X_1^Ty_1 + 
        \frac{\left\{\left[\mathbf{1}_{m - |\cI_1|}^T\Omega_{-1}^{-1}\mathbf{1}_{m - |\cI_1|} + \sum_{i =2}^m \frac{1}{\sigma_i^2/\beta_i + \Bar{\sigma}_1^2}\right]^{-1} + \Bar{\sigma}_i^2\right\}^{-1}}{\beta_1/\sigma_1^2 + \left\{\left[\mathbf{1}_{m - |\cI_1|}^T\Omega_{-1}^{-1}\mathbf{1}_{m - |\cI_1|} + \sum_{i =2}^m \frac{1}{\sigma_i^2/\beta_i + \Bar{\sigma}_1^2}\right]^{-1} + \Bar{\sigma}_i^2\right\}^{-1}}
         \\
    &\hspace{6em} \times
        \frac{\sum_{i =2}^m \frac{1}{\sigma_i^2/\beta_i + \Bar{\sigma}_1^2}\hat{\theta}_i^d}{\mathbf{1}_{m - |\cI_1|}^T\Omega_{-1}^{-1}\mathbf{1}_{m - |\cI_1|} + \sum_{i =2}^m \frac{1}{\sigma_i^2/\beta_i + \Bar{\sigma}_1^2}}\\
    &\qquad +
        \frac{\left\{\left[\mathbf{1}_{m - |\cI_1|}^T\Omega_{-1}^{-1}\mathbf{1}_{m - |\cI_1|} + \sum_{i =2}^m \frac{1}{\sigma_i^2/\beta_i + \Bar{\sigma}_1^2}\right]^{-1} + \Bar{\sigma}_i^2\right\}^{-1}}{\beta_1/\sigma_1^2 + \left\{\left[\mathbf{1}_{m - |\cI_1|}^T\Omega_{-1}^{-1}\mathbf{1}_{m - |\cI_1|} + \sum_{i =2}^m \frac{1}{\sigma_i^2/\beta_i + \Bar{\sigma}_1^2}\right]^{-1} + \Bar{\sigma}_i^2\right\}^{-1}} \\
    &\hspace{5em} \times
        \frac{\left[\sum_{j = 2}^k \left(\left(\sum_{i' \in \cI_j} \frac{1}{\sigma_i^2/\beta_i + \Bar{\sigma}_j^2}\right)^{-1} + \Bar{\sigma}^2\right)^{-1}\right]^{-1}}{\mathbf{1}_{m - |\cI_1|}^T\Omega_{-1}^{-1}\mathbf{1}_{m - |\cI_1|} + \sum_{i =2}^m \frac{1}{\sigma_i^2/\beta_i + \Bar{\sigma}_1^2}}\\
    &\hspace{5em} \times \sum_{j = 2}^k \frac{\left(\sum_{i' \in \cI_j} \frac{1}{\sigma_i^2/\beta_i + \Bar{\sigma}_j^2}\right)^{-1}\left(\sum_{i' \in \cI_j} \frac{1}{\sigma_i^2/\beta_i + \Bar{\sigma}_j^2}\hat{\theta}_{i'}^d\right)}{\left(\sum_{i' \in \cI_j} \frac{1}{\sigma_i^2/\beta_i + \Bar{\sigma}_j^2}\right)^{-1} + \Bar{\sigma}^2}.
\end{align*}
Observe that $\hat{\theta}_1^{GLS}$ is exactly $\hat{\theta}_1$ when $\gamma_i = \frac{1}{\Bar{\sigma}_j^2}$ for all $i \in \cI_j, j = 1, \dots, k$ and $\lambda_j = \frac{1}{\Bar{\sigma}^2}$. Our claim holds by the Gauss-Markov theorem \citep{kariya2004generalized}. 

\subsection{Proof of \cref{prop:not_gls}} 
\label{subsec:proof_of_prop_not_gls}

Under our model, for $i \neq 1$, $i \in \cI_j$,
\[
        y_i = X_i\theta_1^* - X_i(\xi_1 + \Bar{\xi}_1 - \Bar{\xi}_j - \xi_i) + \epsilon_i.
\] 
Therefore, similar to \cref{eqn:y_theta_distribution}, we have
\[
        \begin{bmatrix}
            y_1 \\
            \hat{\theta}_2^d \\
            \vdots\\
            \hat{\theta}_m^d
        \end{bmatrix} = \begin{bmatrix}
            X_1\\
            X_2\\
            \vdots\\
            X_n
        \end{bmatrix}\theta_1^* +\begin{bmatrix}
            \epsilon_1\\
            - X_2(\xi_1 - \xi_2) + \epsilon_2\\
            \vdots\\
            - X_n(\xi_1 + \Bar{\xi}_1 - \Bar{\xi}_k - \xi_n) + \epsilon_n.
        \end{bmatrix}.
\]
Let $\Tilde{\theta}_1$ be the solution to the generalized least squares problem defined by the equation above. In general, 
$\Tilde{\theta}_1 \neq \hat{\theta}_1$. For example, when $X_i \neq I$ for all $i$. Since $\hat{\theta}_i^d$ are linear in $(X_i, y_i)$, $\hat{\theta}_1$ is linear in $\{(X_i, y_i)\}_{i = 1}^n$. Our proposition then holds by the Gauss-Markov theorem \citep{kariya2004generalized}.

\subsection{Proof of \cref{prop:JS}}
\label{subsec:proof_of_prop_JS}

Similarly to \cref{eqn:y_theta_distribution}, we have
\[
        \hat{\theta}_1 = \frac{n - 1}{2n}y_1 + \frac{n + 1}{2n}\Bar{y}_{-1},
\] 
where $$\Bar{y}_{-1} = \frac{1}{n - 1}\sum_{i = 2}^n y1.$$ Then
\[
        \EE[\|\hat{\theta}_1 - \theta_1^*\|^2] = \frac{(n - 1)^2}{4n^2}d + d + \frac{(n + 1)^2}{4n^2}\EE[\|\Bar{y}_{-1} - \Bar{\theta}^*\|^2].
\]
The estimator $\Tilde{\theta}^{JS}_1$ is obtained as
\[
        \Tilde{\theta}_1^{JS} = \frac{n - 1}{2n}y_1 + \frac{n + 1}{2n} C\Bar{y}_{-1},
\] 
where $C \in [0, 1]$ is a shrinkage parameter. Then
\[
    \EE[\|\Tilde{\theta}_1^{JS} - \theta_1^*\|^2] = \frac{(n - 1)^2}{4n^2}d + d + \frac{(n + 1)^2}{4n^2}\EE[\|C\Bar{y}_{-1} - \Bar{\theta}^*\|^2].
\]
When estimating $\Bar{\theta}^*$ and $d > 3$, there is a $C < 1$ such that
\[
        \EE[\|C\Bar{y}_{-1} - \Bar{\theta}^*\|^2] \leq \EE[\|\Bar{y}_{-1} - \Bar{\theta}^*\|^2],
\] 
where the exact form of C is discussed in \cite{james1992estimation, bock1975minimax, kubokawa1991approach}. Consequently, 
\[
        \EE[\|\Tilde{\theta}_1^{JS} - \theta_1^*\|^2] \leq \EE[\|\hat{\theta}_1 - \theta_1^*\|^2].
\] 
Since
\[
        \EE[\Tilde{\theta}_1^{JS} \mid \theta_1^*] = \rbr{\frac{n - 1}{2n} + \frac{C(n + 1)}{2n} }\theta_1^*= \rbr{1 - \frac{n + 1}{2n}(1 - C)}\theta_1^*
\] and
$C < 1$, $\Tilde{\theta}_1^{JS}$ is biased.

\section{Useful Results}
\begin{theorem}
\label{thm:breg_smooth}

Let $g(x): \RR^d \to \RR$ be an $L_g$-smooth and convex function. Let
\[
    D_g(x, y) = g(x) - g(y) - (\nabla g(y))^T(x - y), \quad x, y \in \RR^d.
\] 
Then, for all $x, y \in \RR^d$, we have
\[
    \|\nabla g(x) - \nabla g(y)\|_2^2 \leq 2L_g D_g(x, y).
\]
\end{theorem}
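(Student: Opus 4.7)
The plan is to apply the standard "minimize the auxiliary function" trick. I would first fix $y$ and define the auxiliary function
\[
    \phi_y(x) = g(x) - \langle \nabla g(y), x - y\rangle,
\]
so that $\phi_y$ is the sum of $g$ and a linear function. Adding a linear function preserves both convexity and the smoothness constant, hence $\phi_y$ is convex and $L_g$-smooth. Moreover, $\nabla \phi_y(x) = \nabla g(x) - \nabla g(y)$, so $\nabla \phi_y(y) = 0$ and, by convexity, $y$ is a global minimizer of $\phi_y$.

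Next I would invoke the descent lemma for the $L_g$-smooth function $\phi_y$: for any $z \in \RR^d$,
\[
    \phi_y(z) \leq \phi_y(x) + \langle \nabla \phi_y(x), z - x\rangle + \frac{L_g}{2}\|z - x\|^2.
\]
Plugging in the standard gradient-step choice $z = x - L_g^{-1}\nabla \phi_y(x)$, the right-hand side simplifies to $\phi_y(x) - (2L_g)^{-1}\|\nabla \phi_y(x)\|^2$. Since $y$ minimizes $\phi_y$, we have $\phi_y(y) \leq \phi_y(z)$, which rearranges to
\[
    \|\nabla \phi_y(x)\|^2 \leq 2 L_g\bigl(\phi_y(x) - \phi_y(y)\bigr).
\]

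Finally, I would unwind the definitions: a direct expansion gives $\phi_y(x) - \phi_y(y) = g(x) - g(y) - \langle \nabla g(y), x - y\rangle = D_g(x,y)$, and $\|\nabla \phi_y(x)\|^2 = \|\nabla g(x) - \nabla g(y)\|^2$, yielding the claim. There is no real obstacle here — the only point to be careful about is that the smoothness constant of $\phi_y$ is genuinely the same $L_g$ (which follows because the Hessian of the linear correction vanishes, or equivalently because the gradient difference $\nabla \phi_y(x_1) - \nabla \phi_y(x_2) = \nabla g(x_1) - \nabla g(x_2)$ is unchanged). Everything else is a one-line algebraic rearrangement, and this is a well-known classical result sometimes stated as the equivalence between $L$-smoothness and co-coercivity of the gradient under convexity.
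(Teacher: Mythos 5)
Your proposal is correct: every step checks out, including the key points that adding the linear correction preserves the smoothness constant, that $y$ is a global minimizer of $\phi_y$ by convexity, and that the gradient-step substitution in the descent lemma yields $\|\nabla\phi_y(x)\|^2 \leq 2L_g(\phi_y(x)-\phi_y(y)) = 2L_g D_g(x,y)$. The paper does not prove this result at all --- its ``proof'' is a one-line citation to inequality (2.1.10) in Theorem 2.1.5 of Nesterov's textbook --- and your argument is exactly the standard auxiliary-function proof of that cited inequality, so you have simply made the paper's appeal to the classical result self-contained rather than taking a genuinely different route.
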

\begin{proof}
    Directly follows from (2.1.10) in Theorem 2.1.5 of \cite{nesterov2018lectures}.
\end{proof}

\end{document}